\documentclass{article}

\usepackage{microtype}
\usepackage{graphicx}
\usepackage{placeins}
\usepackage{booktabs} 

\usepackage{hyperref}
\hypersetup{
  bookmarksnumbered,
  colorlinks=false,
  allcolors=blue,
  pdfauthor={Aurélien Pion and Emmanuel Vazquez},
  pdftitle={Goal-Oriented Lower-Tail Calibration of Gaussian Processes for Bayesian Optimization},
  pdfsubject={},
  pdfkeywords={},
  pdfpagemode=UseOutlines
}
\usepackage{subcaption}
\usepackage[normalem]{ulem} 
\usepackage{multirow}
\usepackage{enumitem}
\usepackage{tabularx}
\usepackage[table]{xcolor}
\usepackage{colortbl}
\usepackage{array}

\usepackage[accepted]{icml2026}

\usepackage{amsmath}
\usepackage{amssymb}
\usepackage{mathtools}
\usepackage{amsthm}
\usepackage{dsfont}
\usepackage{xspace}
\usepackage{xfrac}
\usepackage[capitalize,noabbrev]{cleveref}

\newcolumntype{Y}{>{\raggedright\arraybackslash}X}
\newcolumntype{s}{>{\normalsize\centering\arraybackslash}X}

\ifpdf
  \DeclareGraphicsExtensions{.eps,.pdf,.png,.jpg}
\else
  \DeclareGraphicsExtensions{.eps}
\fi

\usepackage{amsopn}

\DeclareMathOperator \GP {{\mathrm GP}} \DeclareMathOperator \Var {{\mathrm var}}
\DeclareMathOperator*{\argmin}{arg\,min}

\newcommand \Dcal   {\mathcal{D}}
 
\newcommand \E      {\mathsf{E}}

\newcommand \X      {\mathbb{X}}

\newcommand \R      {\mathbb{R}}

\newcommand \N      {\mathbb{N}}
\renewcommand \P    {\mathsf{P}}

\newcommand \XX     {\mathbb{X}}
\newcommand {\one}  {\mathds{1}}

\providecommand \Var[1]  {\operatorname{\mathsf{Var}}}

\DeclareMathOperator*{\argmax}{arg\,max}

\newcommand{\fnsb}[1]{\uline{{#1}}}
\theoremstyle{plain}
\newtheorem{theorem}{Theorem}[section]
\newtheorem{proposition}[theorem]{Proposition}
\newtheorem{lemma}[theorem]{Lemma}

\theoremstyle{definition}

\theoremstyle{remark}
\newtheorem{remark}[theorem]{Remark}

\newcommand{\tcgp}{\textsc{tcGP}\xspace}
\newcommand{\tcgploc}{\textsc{tcGP}-thres\xspace}
\newcommand{\tcgpmar}{\textsc{tcGP}-occ\xspace}

\newcommand{\bcrgp}{\textsc{bcrGP}\xspace}
\newcommand{\regp}{\textsc{reGP}\xspace}
\newcommand{\onego}{on\textsc{GP}\xspace}
\usepackage[textsize=tiny]{todonotes}

\icmltitlerunning{Goal-Oriented Calibration of GP for BO}

\usepackage{xcolor}
\usepackage[normalem]{ulem}
\definecolor{redviolet}{RGB}{199,21,133}

\begin{document}

\twocolumn[
\icmltitle{Goal-Oriented Lower-Tail Calibration of Gaussian Processes \\
  for Bayesian Optimization}

\icmlsetsymbol{equal}{*}

\begin{icmlauthorlist}
\icmlauthor{Aurélien Pion}{tsv,l2s}
\icmlauthor{Emmanuel Vazquez}{l2s}
\end{icmlauthorlist}

\icmlaffiliation{tsv}{Transvalor S.A., Biot, France}
\icmlaffiliation{l2s}{Univ. Paris-Saclay, CNRS,
    CentraleSupélec, L2S, Gif-sur-Yvette, France}

\icmlcorrespondingauthor{Emmanuel Vazquez}{emmanuel.vazquez@centralesupelec.fr}

\icmlkeywords{Machine Learning, ICML}

\vskip 0.3in
]

\printAffiliationsAndNotice{}  

\begin{abstract}
Gaussian process (GP) predictive distributions are commonly used in Bayesian
optimization (BO) to guide the selection of evaluation points for expensive
objective functions. 
The choice of kernel and hyperparameters has a strong influence on the
exploration--exploitation trade-off. 
For minimization, sampling criteria such as expected improvement (EI) depend on
both the probability mass below the current best value and the shape of the
predictive distribution in this region. 
This article studies goal-oriented calibration of GP predictive
distributions below a low threshold~$t$ in the noiseless setting, for
standard GP models with hyperparameters selected by maximum
likelihood. 
We consider two complementary forms of calibration below~$t$ for inputs
distributed according to a reference measure~$\mu$: occurrence calibration over
the design space and
thresholded $\mu$-calibration on sublevel sets of the form
$\{x\in\mathbb{X}, f(x)\le t\}$. We propose tcGP, a post-hoc method that combines
these two forms of calibration for GP predictive distributions below~$t$. With
fixed GP hyperparameters, the exact EI
sampling criterion based on tcGP generates a sequence of evaluation
points that is dense in the design space. Experiments on standard
benchmarks show improved lower-tail calibration and BO performance
relative to standard GP models and globally calibrated GP models.
\end{abstract}

\section{Introduction}

This article considers the minimization of an
\emph{expensive-to-evaluate} function $f$ defined over a
design space $\X \subset~\mathbb{R}^d$, with noiseless evaluations
(a query at $x$ returns $f(x)$ exactly). 
A widely used approach for this setting is Bayesian optimization (BO)
\citep[see, e.g.,][]{mockus:1978, Jones1998:article_ego,
  villemonteix_informational_2009, srinivas2010:_ucb}. 
In the single-objective setting, BO aims at identifying a global
minimizer $x^\star \in \arg\min_{x \in \X} f(x)$ using a limited
evaluation budget, modeling $f$ a priori as a Gaussian process (GP),
denoted by $\xi \sim \GP(m,k)$, with mean function $m$ and covariance
kernel $k$. 
Given an initial set of evaluations, the GP posterior yields
predictive distributions for $f$: for each $x\in\X$, it provides a
predictive cumulative distribution function (CDF)
$\hat F_n(\cdot\mid x)$ for the unknown value $f(x)$ at iteration
$n$. 
At each iteration, the next evaluation point is selected by maximizing
a \emph{sampling criterion}, also referred to as an \emph{acquisition
  function}, that depends on this predictive distribution, such as
expected improvement (EI) \citep{mockus:1978, schonlau:96:gonff}.
 
To avoid unnecessary evaluations of $f$, the predictive distributions
should reflect uncertainty at each iteration $n$ in order to guide
the selection of new evaluation points toward the minimum.
In practice, GP predictive distributions are sensitive to kernel choice and
hyperparameter misspecification, which can lead to over- or under-estimated
uncertainty \citep{pion:2025}.
Such misspecification can mislead the sampling criterion and substantially
degrade the performance of BO \citep{Bogunovic:2021}.

While improving GP predictive performance over the whole domain is desirable
\cite{Bogunovic:2021, Guo2021CalibratedEI, tuo2022uncertainty, deshpande:2024, 
tom:2025}, 
BO can benefit from \emph{goal-oriented} prediction, in particular from accurate
predictions below a low threshold $t\in\mathbb{R}$. 
Here, $t$ is a data-driven threshold chosen in the lower tail of the
observations, e.g., as an empirical~$\delta$-quantile of the observed responses
for some $\delta\in(0,1]$. 
Along these lines, \citet{petit:2025} propose the \emph{relaxed Gaussian process}
(\regp), a GP-based model designed to improve predictive accuracy below $t$. 
\regp builds on GP interpolation and relaxes the interpolation constraints above~$t$
to improve predictions below this level. 
They report improved BO performance compared with standard GP-based methods.

\begin{figure*}[h!]
    \centering
    \includegraphics[width=\textwidth]{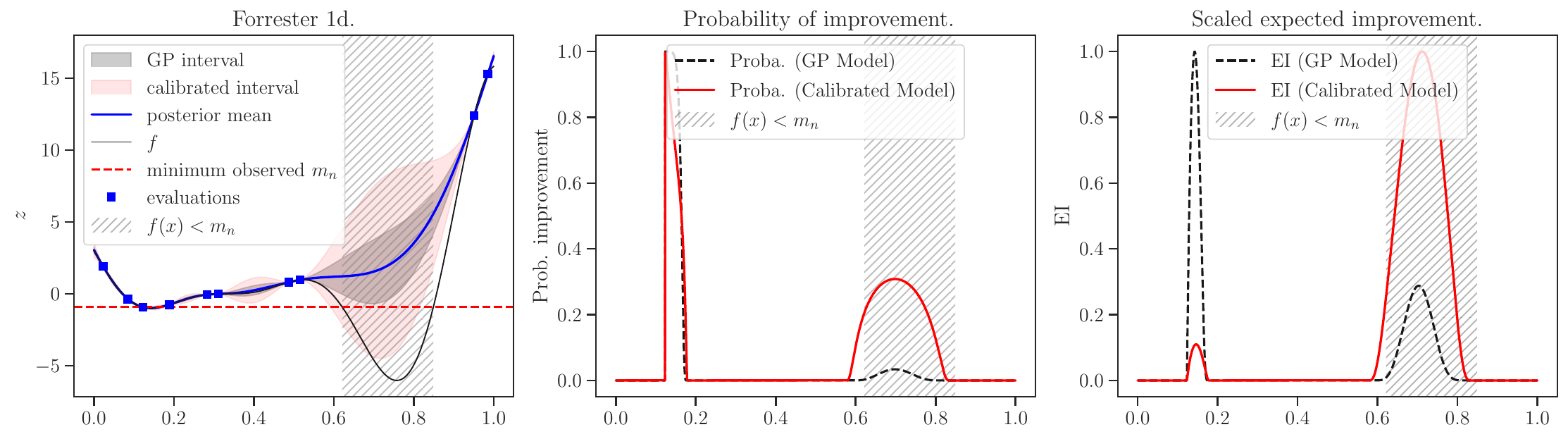}
    \caption{Comparison between a standard GP and a model calibrated
      below $q_{\delta,n}$ (\tcgp), with $\delta=0.3$, on a standard
      one-dimensional test function and with $n=10$ evaluations.  Left:
      observations with GP predictions; the current design does not
      include a point near the global minimizer.  Middle:
      probabilities of improvement $\hat F_{n}^{(0)}(m_n\mid x)$ (GP,
      black) and $\hat F_{n}^{(1)}(m_n\mid x)$ (\tcgp, red), where
      $m_n=\min_{i\le n} f(X_i)$.  Right: EI for the GP and \tcgp,
      each rescaled by its maximum over $\X$. Here, calibration raises the
      predicted probability of improvement in the sparsely sampled region near
      the global minimizer, where the current design has no evaluation, and EI
      increases there accordingly. This exploration behavior seems appropriate
      in this example.}
    \label{fig:intro_example}
\end{figure*}

Following the idea of goal-oriented prediction below a threshold~$t$,
this work investigates whether BO can benefit from probabilistic models that are
\emph{calibrated below a low threshold} $t$, and compares them with standard GP
models and with post-hoc globally calibrated GP predictive distributions over
$\X$. 
By calibration below~$t$, we mean that the lower tail of the predictive CDFs
$\hat F_n(\cdot\mid x)$ is reliable for inputs drawn from $\mu$: the average
probability assigned to the event~$f(X)\le t$, with $X\sim\mu$, matches its
frequency, and, on the sublevel set $\{f(X)\le t\}$, the transformed lower-tail
values have the correct distribution.

Consider Figure~\ref{fig:intro_example}, which compares a GP model with
hyperparameters selected by maximum likelihood and a model calibrated
below the empirical $\delta$-quantile $q_{\delta,n}$ ($\delta=0.3$), on a standard
one-dimensional test function \citep[from][]{forrester:2008}. 
The middle panel displays the predicted probabilities of improvement
\begin{equation}
\hat F_{n}^{(j)}(m_n\mid x),\qquad j\in\{0,1\},
\end{equation}
where $\hat F_n^{(j)}(\cdot\mid x)$ is the predictive CDF at $x$, with
$j=0$ for the GP and $j=1$ for the calibrated predictive distribution,
and $m_n=\min_{i\le n} f(X_i)$. 
In the shaded region, the GP assigns low lower-tail probabilities despite sparse
sampling, while the calibrated model assigns larger values, which raises EI in
that region. 
This example is constructed to illustrate the issue in a minimal setting. 
In higher dimensions, similar behavior can arise when the selected
hyperparameters lead to local overconfidence in sparsely sampled regions, with
underestimated lower-tail probabilities and reduced exploration. 
This motivates lower-tail calibration of predictive distributions, since
improvement-based sampling criteria are sensitive to tail miscalibration.

This work makes two contributions. 
Inspired by the notion of \emph{probabilistic tail calibration} introduced by
\citet{allen:2025}, we define two notions of \emph{spatial calibration below~$t$}:
\emph{occurrence calibration} over $\X$ and \emph{thresholded~$\mu$-calibration}
on the sublevel set $\{x\in\X, f(x)\le t\}$. 

A \emph{post-hoc} calibration method for GP predictive distributions
is also introduced, inspired by \citet{pion:2025}, termed
\emph{calibrated prediction below~$t$ for GPs} (\tcgp). 
Standardized residuals are modeled with a parametric generalized
normal family, with parameters selected by optimizing a criterion
targeting spatial calibration below~$t$. 
The resulting predictive distributions are used in BO with the EI
criterion. 
Analytical expressions for EI with \tcgp are derived, together with a
convergence result. 
We also report additional experiments using the UCB sampling criterion
in the appendix.

We evaluate \tcgp and \regp empirically and find improved lower-tail calibration and
BO performance relative to standard GP models, globally calibrated GP models
\citep{pion:2025}, sequential calibration \citep{deshpande:2024}, and variants that
target only one of the two calibration notions.

Section~\ref{sec:related_work} reviews related
work. Sections~\ref{sec:bg-setting}–\ref{sec:goal-orient-calibr}
introduce the setting, \tcgp, and the \tcgp-based EI
algorithm. Section~\ref{sec:experiment_calibration} reports
experiments.

\section{Related Work}
\label{sec:related_work}

\paragraph{Calibration and goal-oriented prediction}
Classical notions of calibration, such as probabilistic calibration
and proper scoring rules, typically address \emph{global} calibration
of predictive distributions over the domain~$\X$
\citep{GneitingRaftery2007, Gneiting:2023, pion:2025}. 
Calibration in BO with non-i.i.d.\ data is studied by
\citet{deshpande:2024}, who propose online recalibration and conformal procedures
to maintain quantile calibration within the BO loop. 
\citet{Sahoo:2021} introduced \emph{threshold calibration} for
decision-making applications and proposed a method for obtaining reliable
estimates of threshold-based losses. 
Their notion is closely related to
thresholded $\mu$-calibration considered here. This article is formulated in
a spatial setting over $\X$ and considers two notions of calibration below~$t$:
occurrence calibration over $\X$ and thresholded $\mu$-calibration on the sublevel set
$\{x\in\X, f(x)\le t\}$. 
\citet{friedli:2025} introduced a GP-based sequential design method built from
the threshold-weighted continuous ranked probability score (twCRPS), and applied
it to excursion set estimation.

\paragraph{Conformal prediction.}
Conformal prediction (CP) provides finite-sample coverage guarantees
for a new response when the data are exchangeable
\citep{vovk_Gammerman_2005}. Conformal predictive systems (CPS) extend
CP to full predictive distributions \citep{vovk19:_nonpar}. CPS have
been applied to GP interpolation by \citet{pion:2025}. 
Goal-oriented CP methods target coverage on selected subsets of inputs
\citep{zhang_posterior_2024, jin2024confidencefocalconformalprediction}, but 
typically yield piecewise-constant predictive distributions and still rely on
exchangeability. CP has also been used for sequential optimization
with noisy evaluations \citep{stanton:2023, kim:2025}. These
approaches are not tailored to lower-tail BO criteria and can be
computationally demanding. In this work, we adopt a GP-specific
post-hoc calibration approach inspired by \citet{pion:2025} and adapt
it to prediction below a threshold.

\paragraph{Alternative BO formulations.}
Misspecification in BO and sequential decision-making has been studied by
\citet{Neiswanger:2021}, who analyzed the impact of global variance
misspecification. Several works also modify the probabilistic model or the
sampling criterion to improve BO, without targeting calibration of predictive
lower-tail probabilities. \citet{picheny_2019_ordinalbayesianoptimisation}
considers BO with \emph{ordinal observations} (e.g., pairwise preferences or
rank information) and builds on variational ordinal GP regression
\citep{JMLR:v6:chu05a}. \citet{picheny_2022_bayesianquantileexpectileoptimisation}
focuses on quantiles and expectiles of the predictive distribution.

\section{Background and Problem Setup}
\label{sec:bg-setting}

\subsection{Problem Setting and Notation}

Bayesian optimization proceeds sequentially.  We start from an initial
dataset $\Dcal_{n_0} = \{(X_i, Z_i)\}_{i=1}^{n_0}$, where
$Z_i = f(X_i)$ and the inputs $X_i$ are drawn according to an initial
design measure $\mu_0$ on $\X$. We then run $n - n_{0}$ additional BO
iterations, $n > n_0$. For $i > n_0$, the design points $X_i$ are selected by
maximizing a sampling criterion, so their distribution is determined by the BO
strategy. Using the
dataset~$\Dcal_n$ we construct a family of predictive CDFs
$\hat F_n(\cdot \mid x)$ for~$x\in \X$. This
family is used to define the sampling criterion and thereby governs the choice 
of the next point $X_{n+1}$ to evaluate $f$.

We focus on the EI sampling criterion in this work. 
(We also consider the UCB policy and report complementary results in
Appendix~\ref{app:add_res_ucb}.) 
For the family of predictive CDFs $\hat F_n(z\mid x)$, with
$x\in \X$, the EI is defined as
$$
  \rho_n(x)
  = \int_{\R} (m_n - z)_+ \,\mathrm d\hat F_n(z\mid x),
$$
where $m_n = \min(f(X_1), \ldots, f(X_n))$. EI is primarily controlled
by the predicted probability mass below $m_n$ and the shape of the
predictive CDF on $(-\infty, m_n]$, motivating calibration
specifically in the lower tail rather than globally.

Consider a fixed reference measure $\mu$ on $\X$ used to define and assess
spatial calibration (e.g., uniform on $\X$). 
The measure $\mu$ may coincide with $\mu_0$, but it is fixed
independently of the BO policy. 
For fixed $n$ and $\Dcal_n$, let~$X\sim\mu$ be an auxiliary draw,
independent of $\Dcal_n$. 
We write~$\P_n(\cdot):=\P_X(\cdot)$ and $\E_n(\cdot):=\E_X(\cdot)$ for
probability and expectation with respect to this auxiliary draw $X\sim\mu$,
with~$\Dcal_n$ fixed. The subscript indicates that the integrands depend on
$\Dcal_n$ through $\hat F_n(\cdot\mid X)$.

\subsection{A Generalized Normal Distribution Model for Prediction Errors}
\label{sec:bcr_gp}

In this section, we depart from the Bayesian interpretation of the GP model.
The function $f$ is treated as fixed and deterministic, and the GP formalism is
used only as a device to construct, from $\Dcal_n$, a predictive mean $f_n(\cdot)$ and
a predictive variance $\sigma_n^2(\cdot)$. We retain from this construction the
associated Gaussian predictive
distributions~$\hat F_n^{\mathrm{GP}}(z\mid x) = \Phi \bigl((z-f_n(x)) / \sigma_n(x)\bigr)$, $x\in\X$, $z\in\R$,
where $\Phi$ is the standard normal CDF,
with the convention~$\hat F_n^{\mathrm{GP}}(z\mid x)=\mathbf{1}\{z\ge f_n(x)\}$
when $\sigma_n(x)=0$.

To assess deviations from this Gaussian predictive distribution for $X\sim\mu$, define
the standardized prediction error
\begin{equation}
R_n(x,f(x))=\frac{f(x)-f_n(x)}{\sigma_n(x)},
\end{equation}
with $R_n(x,f(x))=0$ if $\sigma_n(x)=0$.
Under the GP model, for any fixed $x$ with $\sigma_n(x)>0$, the conditional
distribution of $(\xi(x)-f_n(x))/\sigma_n(x)$ given $\Dcal_n$ is standard normal.
In contrast, since $f$ is deterministic, $R_n(X,f(X))$ is random only through
$X\sim\mu$ and its distribution need not be Gaussian.

Following \citet{pion:2025}, we model $R_n(X,f(X))$ with a generalized normal (GN)
family, denoted by~$\mathcal{GN}(\beta,l,\lambda)$ with $\beta>0$, $\lambda>0$, and
$l\in\R$ (density recalled in Appendix~\ref{apd:gnp}). The parameter $l$ is a
location parameter,~$\lambda$ controls dispersion, and $\beta$ controls tail
decay; $\beta=2$ corresponds to the Gaussian case and $\beta=1$ to the Laplace
case.

For any $x\in\X$ and parameters $\beta$, $l$, $\lambda$, we define the
predictive CDF
\begin{equation}
\label{eq:bcrgp-cdf}
\hat F_n^{\beta, l, \lambda}(z\mid x)
\;=\;
\Theta_{\beta, l, \lambda}\left(
  \frac{z - f_n(x)}{\sigma_n(x)}
\right),
\qquad z\in\R,
\end{equation}
with the convention
$\hat F_n^{\beta,l,\lambda}(z\mid x)=\mathbf{1}\{z\ge f_n(x)\}$ when
$\sigma_n(x)=0$, where $\Theta_{\beta, l, \lambda}$ is the CDF of
$\mathcal{GN}(\beta, l,\lambda)$. With noiseless evaluations and a strictly
positive definite kernel,~$\{x \in \mathbb{X} : \sigma_n(x) = 0\} = \{X_1,
\dots, X_n\}$, so this convention concerns only the observed points.

\citet{pion:2025} fix $l=0$ and select $(\beta,\lambda)$ via a
Bayesian procedure inspired by tolerance-interval constructions
\citep{meeker2017statistical}, targeting $\mu$-probabilistic
calibration of the predictive distribution over $\X$.  The resulting
method is the \emph{Bayesian calibration of residuals for GPs} (\bcrgp).

\section{Spatial Calibration below a Threshold}
\label{sec:calibr-below}

\subsection{Statistical Formulation}
\label{sec:stat-form}

In this section, we introduce notions of calibration below a threshold
$t\in\R$ with respect to a fixed measure $\mu$ on $\X$. 
For a given $n$, consider predictive CDFs $\hat F_n(\cdot\mid x)$,
$x\in\X$, such that for $\mu$-almost all $x$ the map
$z\mapsto \hat F_n(z\mid x)$ is continuous and strictly increasing on
$\R$. 
We assess calibration of the predictive distributions in the lower
tail up to~$t$.

For $z\le t$, the predictive CDF can be decomposed as
\begin{equation}
\hat F_n(z\mid x)=\hat F_n(t\mid x)\,\hat F_{n,t}(z\mid x),
\end{equation}
where $\hat F_n(t\mid x)$ is the predicted probability mass below~$t$, and
$\hat F_{n,t}(\cdot\mid x)$ is the predictive CDF truncated below~$t$,
such that, for all $z\le t$,
\begin{equation}
\hat F_{n,t}(z\mid x)=\P_{\hat F_n(\cdot\mid x)}(Z\le z\mid Z\le t)
=\frac{\hat F_n(z\mid x)}{\hat F_n(t\mid x)},
\end{equation}
with $Z\sim \hat F_n(\cdot\mid x)$. We therefore introduce two
complementary notions: \emph{thresholded $\mu$-calibration}, targeting the
shape of the predictive distribution within the thresholded region through
$\hat F_{n,t}$, and \emph{occurrence calibration}, targeting the probability
mass assigned to the region through $\hat F_n(t\mid x)$ averaged over $\X$.

This decomposition parallels the severity--occurrence
split in probabilistic tail calibration \citep{allen:2025}. In that
setting, \emph{severity} refers to calibration of the conditional
distribution given $Z\le t$, and \emph{occurrence} refers to
calibration of the predictive probability of the event $\{Z\le t\}$. The
definitions introduced here adapt this split to a spatial setting over
$\X$, with inputs drawn from the reference measure $\mu$, and a fixed
threshold~$t$.

\paragraph{Thresholded $\mu$-calibration.}
Thresholded $\mu$-calibration is a tail version of $\mu$-probabilistic
calibration \cite{pion:2025}, obtained by restricting attention to the
thresholded region $\{f(X)\le t\}$ for an auxiliary draw $X\sim\mu$.

Recall the probability integral transform (PIT), which motivates the definition
below: if a random variable $Z$ has a continuous and strictly increasing CDF
$F$, then the PIT~$F(Z)$ is uniformly distributed on $[0,1]$. Hence, if a predictive CDF
$\hat F$ matches the distribution of $Z$, then the PIT~$\hat F(Z)$ is uniform:
departures from uniformity indicate miscalibration. \citet{pion:2025} extend
this idea to a spatial (design-marginal) setting by introducing the $\mu$-PIT,
based on an auxiliary draw $X\sim\mu$. Here we apply the same idea
conditionally on $\{f(X)\le t\}$ through the truncated CDF
$\hat F_{n,t}(\cdot\mid X)$.

Assume that $\mu(\{x\in\X:~f(x)\le t\})>0$ and
$\mu(\{x\in\X:~f(x)=t\})=0$. The second condition avoids an atom at one in the
thresholded PIT. 
The $\mu$--probability integral transform restricted to the thresholded region
($\mu$-tPIT) is defined by
\begin{equation}
U_{t}
:=
\begin{cases}
\hat F_{n,t}(f(X)\mid X), & f(X)\le t,\\[2pt]
1, & f(X)>t.
\end{cases}
\end{equation}
We say that $\hat F_n$ is
\emph{$\mu$-probabilistically calibrated below~$t$} if
$U_t\mid\{f(X)\le t\}$ is uniform on $[0,1]$. This is the thresholded analogue
of $\mu$--probabilistic calibration \citep[][]{pion:2025}: among points such
that $f(X)\le t$, the values $\hat F_{n,t}(f(X)\mid X)$ should be uniformly
distributed on~$[0,1]$.

For $u\in[0,1]$, define
\begin{equation}
\label{eq:G-mu-t}
G_{\mu, t}(u) := \P_n\left(U_{t} \le u \mid f(X)\le t\right). 
\end{equation}
Then $\mu$-probabilistic calibration below~$t$ is equivalent
to~$G_{\mu, t}(u)=u$ for all $u\in[0,1]$.

\paragraph{Occurrence calibration.}
Thresholded $\mu$-calibration constrains the distribution of
$\hat F_{n,t}(f(X)\mid X)$ conditional on~$\{f(X)\le t\}$. It does not constrain
the total probability mass assigned to $\{f(X)\le t\}$, which is governed by
$\hat F_n(t\mid x)$.

Let $p_t$ denote the excursion probability for $X\sim\mu$,
\begin{equation}
\label{eq:pt}
p_t
=
\P\left(f(X)\le t\right).
\end{equation}
The \emph{occurrence discrepancy} is
\begin{equation}
r_{t,n}
=
\Bigl| p_t - \E_n\bigl[\hat F_n(t \mid X)\bigr] \Bigr|,
\qquad X\sim\mu.
\end{equation}
Equivalently, $p_t=\mu(\{x\in\X: f(x)\le t\})$ and
$\E_n[\hat F_n(t\mid X)]=\int_{\X}\hat F_n(t\mid x)\,\mu(\mathrm{d}x)$. We say
that $\hat F_n$ is \emph{occurrence-calibrated at threshold $t$} if
$r_{t,n}=0$.

\subsection{Metrics for Prediction below a Threshold}
\label{sec:metrics_below_threshold}

We assess thresholded $\mu$--probabilistic calibration by comparing the
distribution of the $\mu$--tPIT to the uniform distribution using the
Kolmogorov--Smirnov distance. 
This yields the \emph{tKS--PIT metric}:
\begin{equation}
J_{\mathrm{tKS\text{-}PIT}}(\hat F_n \mid t)
:=
\sup_{u\in[0,1]}
\left|G_{\mu,t}(u) - u\right|.
\end{equation}

\paragraph{Weighted LOO $\mu$-tPIT and tKS--PIT.}
The tKS--PIT and occurrence discrepancy $r_{t,n}$ are defined with respect
to the reference measure $\mu$ on $\X$. In principle, they should be
evaluated on an independent dataset $(X'_j,f(X'_j))_{j=1}^L$
with~$X'_j\sim\mu$. In BO, such an evaluation set is typically
unavailable. We therefore rely on the observed BO data $\Dcal_n$ and
use leave-one-out (LOO) predictive CDFs. Since the BO design points
are not distributed according to $\mu$ in general, we also allow for
reweighting to approximate $\mu$--averages.

Let $\Dcal_n=\{(X_i,Z_i)\}_{i=1}^n$ with $Z_i=f(X_i)$. For
each~$i\in\{1,\ldots,n\}$, let $\hat F_{n,-i}(\cdot\mid x)$ denote the
predictive CDF constructed from $\Dcal_n\setminus\{(X_i,Z_i)\}$. Fix
$t\in\R$.

Let~$(w_i)_{i=1}^n$ be nonnegative weights with $\sum_{j=1}^n w_j>0$ and set
$\tilde w_i=w_i/\sum_{j=1}^n w_j$. The weights are chosen so that
weighted averages over $(X_i)$ approximate $\mu$--averages. The
unweighted case corresponds to $w_i=1$. In this work, $(w_i)$ is
obtained by density-ratio estimation based on a density estimate of
the BO design distribution; details are given in
Appendix~\ref{ap:optimization_algo} (paragraph \emph{Density-ratio
  estimation and weights}).

Assume that $\hat F_{n,-i}(t\mid X_i)>0$ whenever $Z_i\le t$.
Define the LOO empirical $\mu$-tPIT by
\begin{equation}
\label{eq:loo_tpit}
U^{\mathrm{LOO}}_{t,i}
=
\begin{cases}
\hat F_{n,-i}(Z_i\mid X_i)\big/\hat F_{n,-i}(t\mid X_i), & Z_i\le t,\\[2pt]
1, & Z_i>t.
\end{cases}
\end{equation}

We estimate $p_t$ using the weighted empirical frequency
\begin{equation}
\hat p^{\,w}_{t,n}
=
\sum_{i=1}^n \tilde w_i\,\mathbf{1}\{Z_i\le t\}.
\end{equation}
Assume that $\hat p^{\,w}_{t,n}>0$ and define, for all $u\in[0,1]$,
\begin{equation}
\label{eq:G-LOO-w}
G^{\mathrm{LOO},w}_{t,n}(u)
=
\frac{
\sum_{i=1}^n \tilde w_i\,\mathbf{1}\{Z_i\le t\}\,\mathbf{1}\{U^{\mathrm{LOO}}_{t,i}\le u\}
}{
\hat p^{\,w}_{t,n}
}.
\end{equation}
The weighted empirical tKS--PIT metric is
\begin{equation}
J^{\mathrm{LOO},w}_{\mathrm{tKS\text{-}PIT},n}(\hat F_n\mid t)
=
\sup_{u\in[0,1]} \left| G^{\mathrm{LOO},w}_{t,n}(u) - u \right|.
\end{equation}

\paragraph{Weighted estimation of $p_t$ and occurrence discrepancy.}
The corresponding weighted LOO occurrence discrepancy estimator is
\begin{equation}
r^{\mathrm{LOO},w}_{t,n}
=
\biggl|
\hat p^{\,w}_{t,n}
-
\sum_{i=1}^n \tilde w_i\,\hat F_{n,-i}(t\mid X_i)
\biggr|.
\end{equation}

When an independent test set is available (as in our synthetic
benchmarks), we also report direct test-set versions of these metrics;
see Appendix~\ref{ap:metrics_testset}.

\begin{remark}
  Calibration alone does not guarantee useful uncertainty
  quantification. A well-calibrated predictive distribution can be
  overly diffuse. Proper scoring rules assess calibration and
  sharpness jointly (see Appendix~\ref{ap:scoring_rules}).
\end{remark}

\subsection{Calibration below a Threshold in BO}
\label{sec:impact_calibration_bo}

At iteration $n$, EI depends on the predictive lower tail below the
current best value $m_n$, through $\hat F_n(m_n\mid x)$ and the
restriction of $\hat F_n(\cdot\mid x)$ to $(-\infty,m_n]$. However, calibration 
at level $m_n$ cannot be assessed from $\Dcal_n$: 
any observation with $Z_i \le m_n$ satisfies $Z_i=m_n$, so every LOO~$\mu$-tPIT
value~\eqref{eq:loo_tpit} equals one. The occurrence discrepancy remains
computable at $t=m_n$, but $\hat p^{w}_{m_n,n}$ then rests on a single
observation. We therefore impose
calibration below a higher threshold $t_n$, chosen from the empirical
$\delta$-quantile~$q_{\delta,n}$ of $(f(X_1),\ldots,f(X_n))$ for some
$\delta\in(0,1]$. In the BO algorithm, this threshold can be kept fixed when
too few observations fall below the new quantile.

Appendix~\ref{ap:impact_calibration_bo} gives a formal consistency
statement showing that when the set of global minimizers is $\mu$-negligible,
occurrence calibration at $m_n$ forces the average predicted improvement
probability to vanish as $m_n$ approaches the optimum.

\section{A Goal-Oriented Calibration Method}
\label{sec:goal-orient-calibr}

\subsection{Description of the Method}
\label{sec:selection_parameters}

In this section, we introduce a procedure to select the parameters
$\beta$ and $\lambda$ of the GN predictive model.
We target thresholded $\mu$-calibration below a fixed threshold $t$,
assessed via the truncated CDF $\hat F_{n,t}$ on $\{f(X)\le t\}$, and
occurrence calibration at $t$, assessed by matching
$p_t$ with $\E_n \bigl[\hat F_n(t\mid X)\bigr]$.
Following \citet{pion:2025}, we fix $l=0$, so that calibration changes the
shape of the predictive distribution without shifting its mean away from the
GP mean $f_n$. Accordingly, we write $\hat F_n^{\beta,\lambda}(\cdot\mid x)$
instead of $\hat F_n^{\beta,0,\lambda}(\cdot\mid x)$.

\paragraph{Parameter selection criterion.}
Let $U_t^{\beta,\lambda}$ be the $\mu$-tPIT associated with the
truncated predictive CDF $\hat F_{n,t}^{\beta,\lambda}(\cdot\mid
X)$. Thresholded $\mu$--probabilistic calibration below $t$ is
equivalent to
\begin{equation}
\P_n\left(U_t^{\beta,\lambda} \le u \,\middle|\, f(X)\le t\right)=u,
\qquad u\in[0,1].
\end{equation}

To incorporate occurrence calibration at $t$, define the occurrence
ratio
\begin{equation}
\kappa_t^{\beta,\lambda}
=
\frac{\E_n\left[\hat F_n^{\beta,\lambda}(t\mid X)\right]}
{p_{t}}.
\end{equation}
The numerator is the $\mu$-average predicted probability
of~$\{f(X)\le t\}$, while the denominator is the $\mu$-probability
of~$\{f(X)\le t\}$, so $\kappa_t^{\beta,\lambda}=1$ is equivalent
to occurrence calibration at $t$. Note that $\kappa_t^{\beta,\lambda}>1$
indicates an overestimation of $p_t$ and $\kappa_t^{\beta,\lambda}<1$ an
underestimation.

We compare $u\mapsto \P_n(U_t^{\beta,\lambda} \le u \mid f(X)\le t)$ to
$u\mapsto u\,\kappa_t^{\beta,\lambda}$ and define
\begin{equation}
\label{eq:objective_tcgp}
J(\beta,\lambda)
=
\sup_{u\in[0,1]}
\left|
\P_n\left(U_t^{\beta,\lambda} \le u \,\middle|\, f(X)\le t\right)
-
u\,\kappa_t^{\beta,\lambda}
\right|.
\end{equation}
By construction, $J(\beta,\lambda)=0$ when $\hat F_n^{\beta,\lambda}$
is $\mu$--probabilistically calibrated below $t$ and
occurrence-calibrated at $t$.  When
$\kappa_t^{\beta,\lambda}=1$, $J(\beta,\lambda)$ reduces to the
tKS--PIT metric. Appendix~\ref{ap:objective} reports alternative
parameter selection criteria.  We use $J$ in the experiments.

\paragraph{LOO approximation.}
The criterion $J(\beta,\lambda)$ involves $\mu$--expectations and the
excursion probability $p_t$, which are unknown given
only $\Dcal_n$. We use a weighted leave-one-out (LOO) approximation. 
Let $G^{\mathrm{LOO},w}_{t,n}(u;\beta,\lambda)$ be defined by
\eqref{eq:G-LOO-w}, with $\hat F_{n,-i}(\cdot\mid x)$ replaced by
$\hat F_{n,-i}^{\beta,\lambda}(\cdot\mid x)$.  Define
\begin{equation}
\widehat\kappa^{\beta,\lambda}_{t,n}
=
\frac{
\sum_{i=1}^n \tilde w_i\,\hat F_{n,-i}^{\beta,\lambda}(t\mid X_i)
}{
\hat p^{\,w}_{t,n}
},
\end{equation}
assuming $\hat p^{\,w}_{t,n}>0$. The resulting approximation of
\eqref{eq:objective_tcgp} is
\begin{equation}
J^{\mathrm{LOO},w}_{t,n}(\beta,\lambda)
=
\sup_{u\in[0,1]}
\left|
G^{\mathrm{LOO},w}_{t,n}(u;\beta,\lambda)
-
u\,\widehat\kappa^{\beta,\lambda}_{t,n}
\right|.
\end{equation}

The parameters $\beta$ and $\lambda$ are selected by solving
\begin{equation}
\label{eq:tail-calibrated-gp}
(\beta^*,\lambda^*)
=
\argmin_{(\beta,\lambda)\in[\beta_0,\beta_1]\times[\lambda_0,\lambda_1]}
J^{\mathrm{LOO},w}_{t,n}(\beta,\lambda),
\end{equation}
where $0<\lambda_0<\lambda_1$ and $0<\beta_0<\beta_1$. We refer to the method as
\emph{calibrated prediction below~$t$ for GPs} (\tcgp).

The complete \tcgp\ procedure, including its integration into BO, is described
in Appendix~\ref{ap:optimization_algo}.

\subsection{EI with \tcgp}
\label{sec:opt_calibrated}

Section~\ref{sec:bg-setting} defined EI and the BO loop. We now describe the EI
algorithm obtained by using \tcgp predictive distributions.

\paragraph{Choice of the threshold.}
The objective is to obtain predictive distributions that are well
calibrated below the current best value $m_n$. As discussed in
Section~\ref{sec:impact_calibration_bo}, calibration at level $m_n$
cannot be assessed from $\Dcal_n$. We therefore calibrate below a higher 
threshold $t_n$ chosen in the lower tail of the observed responses.
Let $q_{\delta,n}$ be the empirical~$\delta$-quantile of
$(Z_1,\ldots,Z_n)$. At the initial iteration, we set
$t_{n_0}=q_{\delta,n_0}$. We use
$q_{\delta,n}$ as the candidate threshold
when~$\hat p^{\,w}_{q_{\delta,n},n}\ge p_{\min}$, in which case
$t_n=q_{\delta,n}$. If not, we keep the previous threshold. Here,
$p_{\min}$ is the minimum estimated excursion probability below a candidate
threshold that we allow. This avoids calibrating on too few points when the
design concentrates near a minimizer and the lower-tail excursion probability
becomes small.

\paragraph{Predictive distribution and EI.}
Fix $(\beta,\lambda)$. Conditional on $\Dcal_n$ and for $x\in\X$,
\tcgp uses the predictive CDF~$\hat F_n^{\beta,\lambda}(\cdot\mid x)$
defined in~\eqref{eq:bcrgp-cdf}.  Let $Z_{n,x}$ be a predictive random
variable with this CDF. With the GN model,
\begin{equation}
Z_{n,x} \sim \mathcal{GN}\bigl(\beta, f_n(x), \lambda\,\sigma_n(x)\bigr),
\end{equation}
with the convention that $Z_{n,x}=f_n(x)$ when $\sigma_n(x)=0$.

The EI criterion induced by $\hat F_n^{\beta,\lambda}(\cdot\mid x)$ is
\begin{equation}
\rho_n(x)
=
\E\bigl[(m_n-Z_{n,x})_+\mid \Dcal_n\bigr],
\end{equation}
where the expectation is taken with respect to the predictive
distribution of $Z_{n,x}$ conditional on
$\Dcal_n$. Proposition~\ref{prop:ei_gn} yields the closed form
\begin{equation}
\rho_n(x)=\gamma\bigl(m_n-f_n(x), \lambda\,\sigma_n(x), \beta\bigr),
\end{equation}
where $\gamma$ is defined in~\eqref{eq:ei_gn}.

\begin{proposition}
\label{prop:ei_gn}
If $Z \sim \mathcal{GN}(\beta,l,\lambda)$, then for any $a\in\R$,
\begin{equation}
\E\bigl[(a - Z)_+\bigr]
=
\gamma(a-l, \lambda, \beta),
\end{equation}
where, for $\lambda>0$,
\begin{equation}
\label{eq:ei_gn}
\gamma(z, \lambda, \beta)
=
z\, \Theta_{\beta}\left(\frac{z}{\lambda}\right)
+ \frac{\lambda}{2\, \Gamma(1/\beta)}
\Gamma\left(\frac{2}{\beta}, \left|\frac{z}{\lambda}\right|^{\beta}\right),
\end{equation}
and $\gamma(z,0,\beta)=\max(z,0)$. Here $\Theta_\beta$ denotes the CDF of
$\mathcal{GN}(\beta,0,1)$ and $\Gamma(\cdot,\cdot)$ is the upper
incomplete gamma function. For $\lambda>0$,
$\gamma(\cdot,\lambda,\beta)$ is continuous and
satisfies~$\gamma(z,\lambda,\beta)>0$ for all $z\in\R$.
\end{proposition}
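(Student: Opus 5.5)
We need the GN density. It is in the appendix, so we assume the standard form: for $\mathcal{GN}(\beta,l,\lambda)$ the density is $\theta(z)=\frac{\beta}{2\lambda\Gamma(1/\beta)}\exp\bigl(-|(z-l)/\lambda|^\beta\bigr)$.

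\textbf{Reduction and splitting.} First, reduce to the standard case. Write $Z=l+\lambda Y$ with $Y\sim\mathcal{GN}(\beta,0,1)$. Then $(a-Z)_+=\lambda\,(z/\lambda-Y)_+$ with $z=a-l$. So it suffices to show that
\[
\E[(s-Y)_+]=s\,\Theta_\beta(s)+\frac{1}{2\Gamma(1/\beta)}\Gamma\bigl(2/\beta,|s|^\beta\bigr)
\]
for $s=z/\lambda$; multiplying by $\lambda$ then gives \eqref{eq:ei_gn}. Next, split
\[
\E[(s-Y)_+]=s\,\P(Y\le s)-\E[Y\one\{Y\le s\}]=s\,\Theta_\beta(s)-\int_{-\infty}^s y\,\theta_\beta(y)\,\mathrm dy .
\]

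\textbf{Computing the truncated first moment.} The substitution $v=|y|^\beta$ gives $y\,\mathrm dy$-type integrals: for $c\ge0$,
\[
\int_c^\infty y\,\frac{\beta}{2\Gamma(1/\beta)}e^{-y^\beta}\,\mathrm dy=\frac{1}{2\Gamma(1/\beta)}\int_{c^\beta}^\infty v^{2/\beta-1}e^{-v}\,\mathrm dv=\frac{\Gamma(2/\beta,c^\beta)}{2\Gamma(1/\beta)}.
\]
We then handle two cases.

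If $s\le0$, the symmetry of $\theta_\beta$ gives
\[
\int_{-\infty}^s y\,\theta_\beta(y)\,\mathrm dy=-\frac{\Gamma(2/\beta,|s|^\beta)}{2\Gamma(1/\beta)}.
\]

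If $s>0$, the full first moment vanishes by symmetry, so
\[
\int_{-\infty}^s y\,\theta_\beta=-\int_s^\infty y\,\theta_\beta=-\frac{\Gamma(2/\beta,s^\beta)}{2\Gamma(1/\beta)}.
\]

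Both cases give the same expression, which proves the formula. The case $\lambda=0$ is the degenerate convention $Z=l$, and then $\E[(a-Z)_+]=\max(a-l,0)$ trivially.

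\textbf{Continuity.} Continuity of $\gamma(\cdot,\lambda,\beta)$ for $\lambda>0$ follows directly from the formula. Indeed, $\Theta_\beta$ is continuous, and $z\mapsto\Gamma(2/\beta,|z/\lambda|^\beta)$ is a composition of continuous maps. Alternatively, it follows from dominated convergence, or from the fact that $z\mapsto\E[(z-Y)_+]$ is $1$-Lipschitz.

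\textbf{Strict positivity.} This is the one step needing care, and I would argue it probabilistically rather than from the closed form. The GN density is strictly positive on $\R$, so $\P(Z<a)>0$ for every $a$. On the event $\{Z<a\}$ the integrand $(a-Z)_+$ is strictly positive. Hence $\E[(a-Z)_+]>0$, which gives $\gamma(z,\lambda,\beta)>0$ for all $z$.

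The main obstacle is mostly bookkeeping. One must keep the sign of the truncated moment straight across the two cases $s\le0$ and $s>0$, and confirm that the normalizing constant matches the paper's appendix parametrization, in particular that $\lambda$ multiplies $y$ inside $|\cdot|^\beta$. If the appendix uses a different scaling, I would redo the substitution with that constant.
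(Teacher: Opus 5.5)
Your proof is correct and follows essentially the same route as the paper: you reduce to $\mathcal{GN}(\beta,0,1)$ via $Z=l+\lambda T$, split off $s\,\Theta_\beta(s)$, evaluate the truncated first moment with the substitution $v=u^\beta$, and obtain strict positivity from $\P(Z<a)>0$. The only cosmetic difference is in the truncated moment: you treat $s\le 0$ and $s>0$ as separate cases, whereas the paper covers both at once by using oddness of $t\mapsto t\,e^{-|t|^\beta}$ to write $\int_{-\infty}^s t\,e^{-|t|^\beta}\,dt=-\int_{|s|}^\infty u\,e^{-u^\beta}\,du$.
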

\begin{proof}
See Appendix~\ref{ap:ei_gn}.
\end{proof}

\begin{remark}
  For $\beta=2$, the GN distribution coincides with a
  Gaussian distribution, and~\eqref{eq:ei_gn} reduces to the standard
  EI expression; see Appendix~\ref{ap:connection_gaussian_ei}.
\end{remark}

\paragraph{BO update.}
At iteration $n$, we set $t=t_n$ and obtain~$(\beta_n,\lambda_n)$ by solving
\eqref{eq:tail-calibrated-gp}. The next evaluation point is selected as
\begin{equation}
X_{n+1}\in\arg\max_{x\in\X}\rho_n(x).
\end{equation}
Additional implementation details are given in
Appendix~\ref{ap:optimization_algo}.

\subsection{Convergence of EI with \tcgp and Fixed GP Hyperparameters}
\label{sec:convergence}
\vskip 1em

We prove a convergence result for EI when predictive distributions are obtained
with \tcgp from a zero-mean GP model with fixed covariance kernel $k$. This is a
sanity check: with fixed hyperparameters and bounded \tcgp parameters, EI still
produces a dense sequence of evaluation points in $\X$.
Proposition~\ref{prop:convergence} extends the exploration result of
\citet{vazquez:2010} by replacing the Gaussian predictive distribution with the
GN family, with parameters $(\beta_n,\lambda_n)$ constrained to
a compact set.

We assume that $k$ satisfies the no-empty ball (NEB) property \citep{vazquez:2010}:
for any sequence $(X_n)_{n\ge 1}$ in $\X$ and any $x\in\X$, $x$ is an adherent
point of $\{X_n:n\ge 1\}$ iff $\sigma_n^2(x)\to 0$ as $n\to\infty$.

\vskip 1em
\begin{proposition}
\label{prop:convergence}
Assume that $\X$ is compact and that $k$ is continuous, stationary, strictly
positive definite, and satisfies the NEB property. Let $\mathcal H$ be the
associated RKHS and assume $f\in\mathcal H$. Let $f_n$ and $\sigma_n$ be the
zero-mean noiseless kernel interpolant and kriging standard deviation
constructed from $\Dcal_n$. Assume that the design points in the initial
design~$\Dcal_{n_0}$ are pairwise distinct and that there exists $M\ge 1$ such that
for all $n\ge M$,
$$
(\beta_n,\lambda_n)\in[\beta_0,\beta_1]\times[\lambda_0,\lambda_1],
$$
with $0<\beta_0<\beta_1$ and $0<\lambda_0<\lambda_1$. Let $(X_n)_{n\ge 1}$ be
generated by EI, that is
$$
X_{n+1}\in\argmax_{x\in\X}\rho_n(x),
$$
where $\rho_n$ is EI computed from the \tcgp predictive CDFs with parameters
$(\beta_n,\lambda_n)$. Then $(X_n)$ is dense in $\X$.
\end{proposition}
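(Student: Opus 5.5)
We follow the argument of \citet{vazquez:2010}: assume for contradiction that $(X_n)$ is not dense. Then there is $x^\dagger\in\X$ and $\varepsilon>0$ with the open ball $B(x^\dagger,\varepsilon)$ containing no $X_n$. By the NEB property, $\sigma_n(x^\dagger)$ does not tend to zero. Since $\sigma_n^2(x)$ is nonincreasing in $n$ for each fixed $x$, we get $\sigma_n(x^\dagger)\ge s>0$ for all $n$, for some $s>0$. The strategy is to show that EI at $x^\dagger$ stays bounded below, while EI at the selected points $X_{n+1}$ must tend to zero along a subsequence. This contradicts $\rho_n(X_{n+1})\ge\rho_n(x^\dagger)$.

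\textbf{Step 1 (uniform bounds from $f\in\mathcal H$).} Use the standard RKHS bound $|f(x)-f_n(x)|\le\|f\|_{\mathcal H}\,\sigma_n(x)$, together with $\|f_n\|_{\mathcal H}\le\|f\|_{\mathcal H}$. These give $|f_n(x)|\le \|f\|_{\mathcal H}\sqrt{k(0)}$ and $|m_n|\le\sup|f|$. Hence the argument $m_n-f_n(x)$ lies in a fixed compact interval $[-C,C]$. Moreover $\sigma_n(x)\le\sqrt{k(0)}$, so the scale $\lambda_n\sigma_n(x)$ lies in $[0,\lambda_1\sqrt{k(0)}]$.

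\textbf{Step 2 (lower bound at $x^\dagger$).} By Proposition~\ref{prop:ei_gn}, $\gamma(z,\lambda,\beta)>0$ for $\lambda>0$. I will check that $\gamma$ is jointly continuous in $(z,\lambda,\beta)$ on $[-C,C]\times[\lambda_0 s,\lambda_1\sqrt{k(0)}]\times[\beta_0,\beta_1]$, using continuity of $\Theta_\beta$, of $\Gamma(1/\beta)$ and of the incomplete gamma function in its parameters. Compactness then yields $\rho_n(x^\dagger)\ge c>0$ for all $n\ge M$. This is where the compactness of the \tcgp parameter set is essential.

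\textbf{Step 3 (EI vanishes at accumulation points).} By compactness of $\X$, $(X_n)$ has a convergent subsequence $X_{n_k}\to y$. Continuity of $k$ and the fact that $X_{n_k-1}$ is already in the design give
\[
\sigma_{n_k}(X_{n_k+1})\le \sigma_{n_k}(X_{n_k+1})-\sigma_{n_k}(X_{n_k})+0,
\]
which goes to zero, since the kriging standard deviation is uniformly Lipschitz-type continuous given a fixed kernel. More carefully, I would pass to a subsequence along which consecutive selected points $X_{n_{k}+1}$ approach points already evaluated. This is the usual trick: among infinitely many points in a compact set, there exist indices with $X_{n+1}$ arbitrarily close to some earlier $X_j$, so $\sigma_n(X_{n+1})\to0$ along that subsequence. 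Next, $f_n(X_{n+1})-f(X_{n+1})\to0$ by Step 1, and $m_n\le f(X_j)$, with $f(X_j)$ close to $f(X_{n+1})$ by continuity of $f$. Hence $m_n-f_n(X_{n+1})\le o(1)$. Finally I need the bound
\[
\gamma(z,\lambda,\beta)\le \max(z,0)+\lambda\,\sup_{\beta\in[\beta_0,\beta_1]}\E|Z_\beta|,
\qquad Z_\beta\sim\mathcal{GN}(\beta,0,1),
\]
which follows from $(a-Z)_+\le(a-l)_+ + |Z-l|$. Together with $\lambda=\lambda_n\sigma_n(X_{n+1})\to0$, this shows that $\rho_n(X_{n+1})\to0$ along the subsequence. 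That contradicts $\rho_n(X_{n+1})\ge\rho_n(x^\dagger)\ge c$, so $(X_n)$ must be dense.

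\textbf{Main obstacle.} I expect the delicate point to be Step 3: justifying that $\sigma_n(X_{n+1})\to0$ along a subsequence. This needs a uniform modulus of continuity of $\sigma_n$ in $x$, independent of $n$, which I would derive from $\sigma_n^2(x)\le \E(\xi(x)-\xi(X_j))^2=2(k(0)-k(x-X_j))$ together with continuity of the stationary kernel. The pairwise-distinct initial design and strict positive definiteness are needed so that every $\sigma_n$ is well defined and vanishes exactly at the design points. I also need repeated points to be harmless: if some $X_{n+1}$ coincides with an earlier point, then $\rho_n(X_{n+1})=\max(m_n-f(X_{n+1}),0)=0<c$, which is an immediate contradiction.
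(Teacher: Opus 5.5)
Your proposal is correct and follows essentially the paper's proof. A non-adherent point keeps $\sigma_n$ bounded below, so compactness of $[\beta_0,\beta_1]\times[\lambda_0,\lambda_1]$ gives a uniform positive EI lower bound there, while EI at the selected points vanishes along $n=n_k-1$ for a convergent subsequence $X_{n_k}$. You prove directly, via $\sigma_n^2(x)\le 2(k(0)-k(x-X_j))$ and $\gamma(z,\lambda,\beta)\le z_+ +\lambda\,\E|Z_\beta|$, what the paper obtains from Prop.~10 of \citet{vazquez:2010} and its continuity lemma. Drop the first display in Step~3, since nothing makes $X_{n_k+1}$ close to $X_{n_k}$. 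Keep your corrected version, which compares $X_{n+1}=X_{n_k}$ with the earlier design point $X_j=X_{n_{k-1}}$.
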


The maximizer exists because $f_n$, $\sigma_n$, and $\gamma$ are continuous, so
$\rho_n$ is continuous on the compact set $\X$.

\begin{proof}
See Appendix~\ref{ap:convergence}.
\end{proof}

\vskip 1em

Proposition~\ref{prop:convergence} implies that $(X_n)$ intersects every
nonempty open subset of $\X$. Since $f$ is continuous on the compact set $\X$,
for any global minimizer $x^*\in\argmin_{x\in\X} f(x)$ there exists a
subsequence $(X_{n_j})_j$ such that $X_{n_j}\to x^*$,
hence~$m_n \to \min_{x\in\X} f(x)$.

\vskip 1em

\begin{remark}
  In practice, GP hyperparameters are re-estimated at each iteration. Extending
  Proposition~\ref{prop:convergence} to sequential hyperparameter selection
  requires a separate analysis. We leave this extension to future work.
\end{remark}

\newpage
\section{Experiments}
\label{sec:experiment_calibration}

\subsection{Experimental Setup}
\label{sec:experimental-set-up}

We study how calibration below a threshold affects BO. We report
results for variants targeting occurrence calibration, thresholded 
$\mu$-calibration,
or both, and we compare \tcgp\ with \regp\ (relaxed interpolation above $t_n$),
\bcrgp\ with density-ratio reweighting (Appendix~\ref{ap:bcrgp_reweight}) for
calibration over $\X$, and the online conformal recalibration method of
\citet{deshpande:2024} based on the quantile pinball loss (\onego). For \regp,
we use $\delta=0.25$, and for \tcgp, $\delta=~0.05$ (see also Appendices~\ref
{ap:delta_on_regp} and~\ref{ap:delta_on_tcgp}). We fix $p_{\min}=0.015$.
Additional diagnostics at $t_n$ and at the current best value $m_n$ are reported
in Appendix~\ref{ap:fixed_n_exp}.

All methods use a GP model with constant mean and an anisotropic
Mat\'ern kernel (Appendix~\ref{ap:parameters}), with hyperparameters
selected by maximum likelihood at each iteration. Experiments are
implemented with \texttt{gpmp} \citep{gpmp_2026}. The benchmark uses standard
deterministic test functions (Appendix~\ref{ap:test_functions}). Each
run starts from an initial design of size~$n_0=10d$ drawn uniformly on
$\X$, followed by EI-based BO iterations. EI is maximized using a
sequential Monte Carlo procedure available in \texttt{gpmp}
(Appendix~\ref{ap:max_ei}). For each test function, we
generate $100$ independent initial datasets.

BO performance is summarized by the excursion probability below the
current best value $m_n$: for each run and each iteration $n$, we
consider $p_{m_n} = \P\left(f(X)\le m_n\right)$,
$X\sim~\mathcal{U}(\X)$, and estimate $p_{m_{n}}$ by subset simulation
\citep{bect2017bayesian} as the main performance metric. We then
report, as a function of $n$, the median and the $10\%$ and $90\%$
quantiles of $p_{m_n}$ across the $100$
runs.

\begin{figure*}[htb]
    \centering
     \begin{subfigure}[b]{\textwidth}
        \centering
        \includegraphics[width=\textwidth]{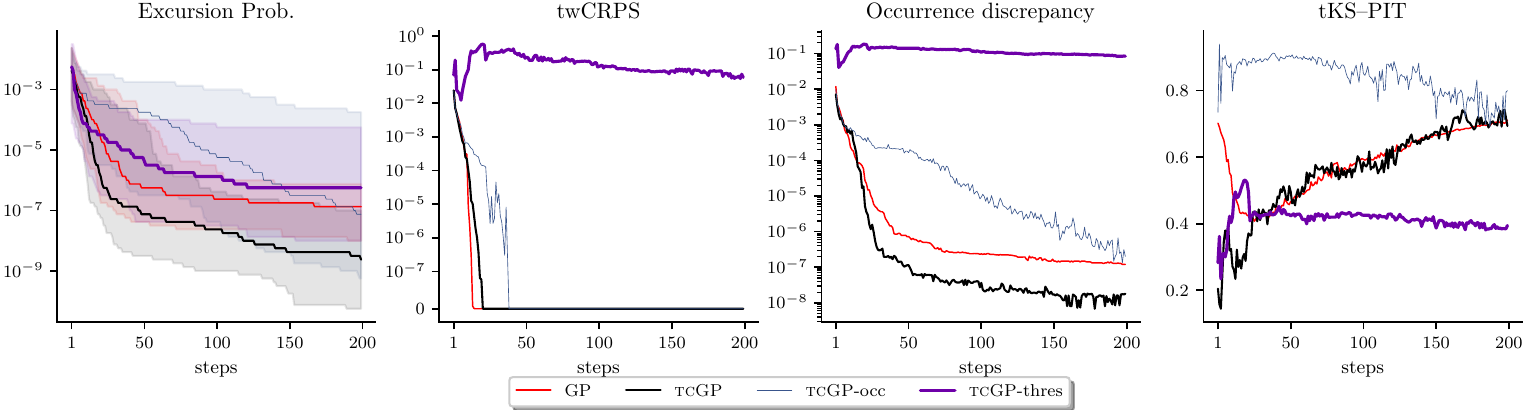}
        \caption{Ackley with $d=4$}
    \end{subfigure}
    \hfill
    \begin{subfigure}[b]{\textwidth}
        \centering
        \includegraphics[width=\textwidth]{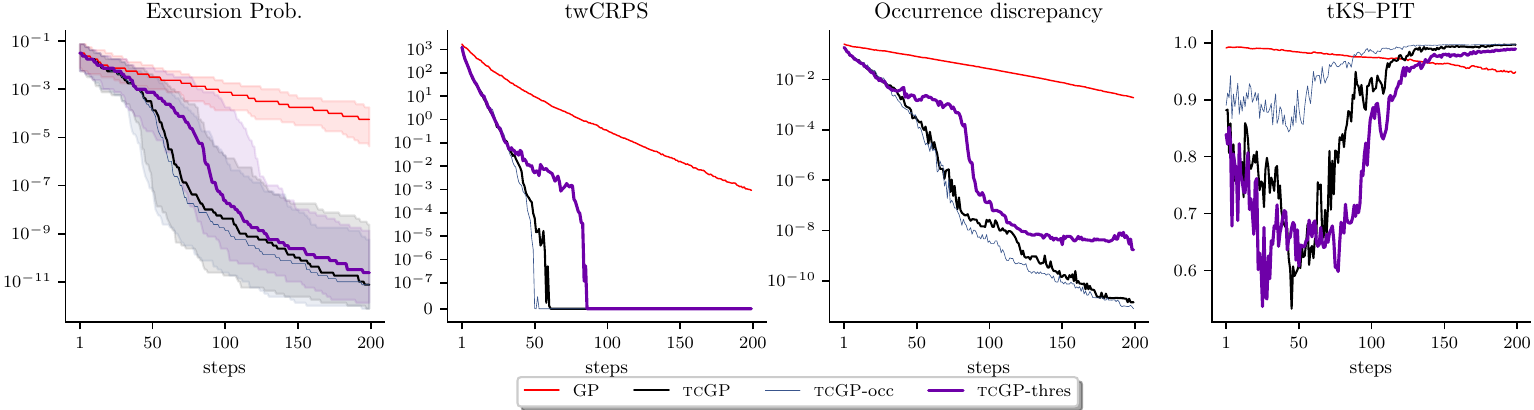}
        \caption{Goldstein--Price}
      \end{subfigure}
      \caption{BO performance and calibration metrics.  From left to
        right: median and $10\%/90\%$ quantiles across runs of the
        estimated excursion probability $p_{m_n}=\P(f(X)\le m_n)$ with
        $X\sim\mathcal{U}(\X)$; median twCRPS; median occurrence
        discrepancy $r_{m_n,n}$; and median tKS--PIT. Calibration metrics
        are evaluated on a test set at the current best value $m_n$.
        Results are shown for a standard GP and three \tcgp variants
        using $J$, tKS--PIT, or $r^{\mathrm{LOO},w}_{t,n}$ as the parameter selection
        criterion ($\delta=0.05$).}
    \label{fig:calibration}
\end{figure*}

\begin{figure*}[h!]
  \centering
  \includegraphics[width=\textwidth]{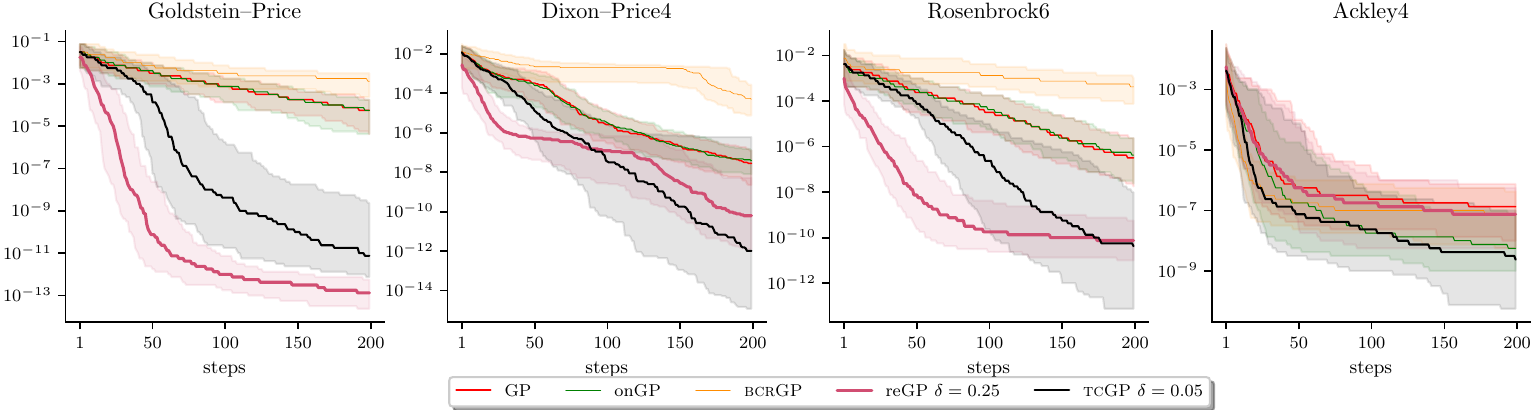}
      \caption{BO performance summarized by the excursion probability below the current best value.
      For each run and iteration $n$, we estimate~$p_{m_n}=\P(f(X)\le m_n)$ with $X\sim\mathcal{U}(\X)$,
      and report the median and $10\%/90\%$ quantiles of $p_{m_n}$ across runs for Goldstein--Price,
      Dixon--Price ($d=4$), Rosenbrock ($d=6$), and Ackley ($d=4$).
      Methods: GP, \bcrgp, \onego, \regp\ ($\delta=0.25$), \tcgp\ ($\delta=0.05$).}
    \label{fig:combined}
\end{figure*}

\subsection{Thresholded and Occurrence Calibration: Empirical Comparison}
\label{sec:comp-betw-calibr}

We compare the roles of occurrence calibration, thresholded tail-rank
calibration, and their combination in BO. We compare a standard GP model
with three \tcgp variants. The first variant, \tcgploc, selects
$(\beta,\lambda)$ by minimizing the weighted LOO tKS--PIT, targeting
thresholded $\mu$--calibration. The second, \tcgpmar, minimizes the weighted LOO
occurrence discrepancy $r^{\mathrm{LOO},w}_{t,n}$. The joint variant, \tcgp, uses the selection
criterion of Section~\ref{sec:goal-orient-calibr}.

At each BO iteration, we report BO performance and test-set
calibration metrics evaluated at the current best value $m_n$: twCRPS,
tKS--PIT below $m_n$, and $r_{m_n,n}$ (details about the metrics provided
in Appendices~\ref{app:twscrps} and~\ref{ap:metrics_testset}). The
twCRPS and $r_{m_n,n}$ use $1000$ test points sampled in $\X$, while
tKS--PIT uses $900$ test points conditioned on $f(x)\le m_n$.

Figure~\ref{fig:calibration} reports results for Ackley ($d=4$) and
Goldstein--Price. On Ackley, \tcgploc\ gives the lowest tKS--PIT after the
early iterations, but its twCRPS and occurrence discrepancy are much larger and
its median $p_{m_n}$ decreases more slowly. \tcgpmar\ gives the largest
tKS--PIT, and its occurrence discrepancy exceeds that of the GP over most of the
run. Its median $p_{m_n}$ is close to the GP curve by the end. The joint method
\tcgp\ gives the lowest median $p_{m_n}$ and occurrence discrepancy. Its
tKS--PIT is lower than that of the GP at the beginning and similar later.

On Goldstein--Price, all \tcgp-based variants improve over the GP and quickly
reduce twCRPS and $r_{m_n,n}$. \tcgp\ and \tcgpmar\ behave similarly and outperform
\tcgploc, again indicating that occurrence calibration has a stronger
effect on EI-driven BO than thresholded $\mu$-calibration alone.

Overall, these results suggest that accurate occurrence calibration at $m_n$
is often important for BO, with thresholded~$\mu$-calibration providing an
additional gain. Focusing only on the thresholded notion can yield poor 
optimization performance because it does not control systematic over- or
under-estimation of improvement probabilities over $\X$.

\vspace{1\baselineskip}

\subsection{Comparison across Methods}
\label{sec:comp-mutl-meth}

We compare EI-based BO with a standard GP model, \regp, \tcgp, \onego, and
\bcrgp\ on four test functions: Ackley ($d=4$), Dixon--Price ($d=4$),
Rosenbrock ($d=6$), and Goldstein--Price. Figure~\ref{fig:combined} reports the
evolution of the median and the $10$\%/$90$\% quantiles of
$p_{m_n} = \P(f(X) \le m_n)$, where $m_n$ denotes the best observed value so 
far. To complement these results, Appendix~\ref{app:add_results_samp_crit}
reports additional EI experiments on Ackley, Cross-In-Tray, Dixon--Price,
Goldstein--Price, Hartmann, Michalewicz, Rosenbrock, Shekel, and Perm, in
dimensions ranging from $d=2$ to $d=20$, together with additional results for 
the \emph{upper confidence bound} (UCB) criterion.

On Goldstein--Price, Dixon--Price, and Rosenbrock, \regp\ and \tcgp\ improve 
over the standard GP model, with \regp\ performing best on these smoother
objectives. On Ackley, \tcgp\ provides the largest gains.

\bcrgp\ does not consistently improve over the standard GP model, suggesting
that global calibration over $\X$ alone is not sufficient to improve EI. The
online conformal method \onego\ is close to the standard GP model on these
noiseless benchmarks, except on Ackley.

Appendix~\ref{app:add_results_samp_crit} reports similar qualitative patterns
for both EI and UCB sampling criteria. On functions where the standard GP model
already performs well, such as Michalewicz or Shekel, \tcgp\ and \regp\ provide
only small improvements. In these cases, calibration has little effect on BO
performance.

Calibrating below a low threshold improves BO in these settings. \regp\ and
\tcgp\ perform best on different test functions, with \regp\ incurring a larger
computational overhead than \tcgp\ (Appendix~\ref{sec:computation-time}).

\vspace{1\baselineskip}

\section{Discussion and Limitations}
\label{sec:discussion}

We introduced a goal-oriented calibration framework for Bayesian optimization,
built on two design-marginal notions: occurrence calibration at a threshold and
thresholded~$\mu$-calibration below that threshold. The ablation study indicates
that the joint criterion is more consistent than variants targeting only one of
the two notions. In the deterministic GP-based experiments, \tcgp\ changes the
behavior of EI and UCB and improves performance on several benchmarks, while
requiring less computation for model construction than \regp.

The study is restricted to the exact-evaluation setting. In noisy problems, the
latent objective is no longer directly observed, so the calibration targets
considered here are not directly accessible from the data and would need to be
redefined. The experiments are also mostly low-to-moderate
dimensional, where the weighted leave-one-out estimates are more stable. Future
work can study extensions to noisy BO and to higher-dimensional problems.

\FloatBarrier

\vskip 1cm
\section*{Acknowledgements}

This work was supported by \textsc{transvalor s.a.} through a CIFRE PhD 
agreement. We thank \textsc{transvalor s.a.} for its support.

\section*{Impact Statement}
Improved lower-tail calibration can reduce the number of expensive evaluations
required in Bayesian optimization. The impact is application-dependent, since
the same methodology can be used for beneficial or harmful optimization goals.

\section*{Conflict of Interest Disclosure}
Aurélien Pion is affiliated with \textsc{transvalor s.a.} The article does not
evaluate a proprietary product or model developed by \textsc{transvalor s.a.}
The authors declare no other financial conflicts of interest.

\newpage
\bibliography{bib}

\begin{thebibliography}{44}
\providecommand{\natexlab}[1]{#1}
\providecommand{\url}[1]{\texttt{#1}}
\expandafter\ifx\csname urlstyle\endcsname\relax
  \providecommand{\doi}[1]{doi: #1}\else
  \providecommand{\doi}{doi: \begingroup \urlstyle{rm}\Url}\fi

\bibitem[Allen et~al.(2023)Allen, Bhend, Martius, and Ziegel]{allen:2023}
Allen, S., Bhend, J., Martius, O., and Ziegel, J.
\newblock Weighted verification tools to evaluate univariate and multivariate
  probabilistic forecasts for high-impact weather events.
\newblock \emph{Weather and Forecasting}, 38\penalty0 (3):\penalty0 499 -- 516,
  2023.

\bibitem[Allen et~al.(2025)Allen, Koh, Segers, and Ziegel]{allen:2025}
Allen, S., Koh, J., Segers, J., and Ziegel, J.
\newblock Tail calibration of probabilistic forecasts.
\newblock \emph{J. Amer. Statist. Assoc.}, 120\penalty0 (552):\penalty0
  2796--2808, 2025.

\bibitem[Auer et~al.(2002)Auer, Cesa{-}Bianchi, and Fischer]{auer2002finite}
Auer, P., Cesa{-}Bianchi, N., and Fischer, P.
\newblock Finite-time analysis of the multiarmed bandit problem.
\newblock \emph{Mach. Learn.}, 47:\penalty0 235--256, 2002.

\bibitem[Bect et~al.(2017)Bect, Li, and Vazquez]{bect2017bayesian}
Bect, J., Li, L., and Vazquez, E.
\newblock {B}ayesian subset simulation.
\newblock \emph{SIAM/ASA Journal on Uncertainty Quantification}, 5\penalty0
  (1):\penalty0 762--786, 2017.

\bibitem[Bogunovic \& Krause(2021)Bogunovic and Krause]{Bogunovic:2021}
Bogunovic, I. and Krause, A.
\newblock Misspecified {G}aussian process bandit optimization.
\newblock In Ranzato, M., Beygelzimer, A., Dauphin, Y., Liang, P., and Vaughan,
  J.~W. (eds.), \emph{Adv. Neural Inf. Process. Syst.}, volume~34, pp.\
  3004--3015. Curran Associates, Inc., 2021.

\bibitem[Chu \& Ghahramani(2005)Chu and Ghahramani]{JMLR:v6:chu05a}
Chu, W. and Ghahramani, Z.
\newblock {G}aussian processes for ordinal regression.
\newblock \emph{J. Mach. Learn. Res.}, 6\penalty0 (35):\penalty0 1019--1041,
  2005.

\bibitem[Deshpande et~al.(2024)Deshpande, Marx, and Kuleshov]{deshpande:2024}
Deshpande, S., Marx, C., and Kuleshov, V.
\newblock Online calibrated and conformal prediction improves {B}ayesian
  optimization.
\newblock In Dasgupta, S., Mandt, S., and Li, Y. (eds.), \emph{Proceedings of
  The 27th International Conference on Artificial Intelligence and Statistics},
  volume 238 of \emph{Proc. Mach. Learn. Res.}, pp.\  1450--1458. PMLR, 02--04
  May 2024.

\bibitem[Feliot et~al.(2016)Feliot, Bect, and Vazquez]{Feliot_2016}
Feliot, P., Bect, J., and Vazquez, E.
\newblock A {B}ayesian approach to constrained single- and multi-objective
  optimization.
\newblock \emph{J. Global Optim.}, 67\penalty0 (1-2):\penalty0 97--133, April
  2016.

\bibitem[Forrester et~al.(2008)Forrester, S{\'o}bester, and
  Keane]{forrester:2008}
Forrester, A.~I.~J., S{\'o}bester, A., and Keane, A.~J.
\newblock \emph{Engineering Design via Surrogate Modelling: A Practical Guide}.
\newblock John Wiley \& Sons, 2008.

\bibitem[Friedli et~al.(2025)Friedli, Gautier, Broccard, and
  Ginsbourger]{friedli:2025}
Friedli, L., Gautier, A., Broccard, A., and Ginsbourger, D.
\newblock {CRPS}-based targeted sequential design with application in chemical
  space, 2025.
\newblock URL \url{https://arxiv.org/abs/2503.11250}.

\bibitem[Gneiting \& Resin(2023)Gneiting and Resin]{Gneiting:2023}
Gneiting, T. and Resin, J.
\newblock Regression diagnostics meets forecast evaluation: conditional
  calibration, reliability diagrams, and coefficient of determination.
\newblock \emph{Electron. J. Stat.}, 17\penalty0 (2), January 2023.

\bibitem[Gneiting et~al.(2007)Gneiting, Balabdaoui, and
  Raftery]{GneitingRaftery2007}
Gneiting, T., Balabdaoui, F., and Raftery, A.~E.
\newblock Probabilistic forecasts, calibration and sharpness.
\newblock \emph{J. R. Stat. Soc. Ser. B Stat. Methodol.}, 69\penalty0
  (2):\penalty0 243--268, 2007.

\bibitem[Gneiting \& Raftery(2007)Gneiting and
  Raftery]{gneiting07:scoring_rules}
Gneiting, T.~G. and Raftery, A.~E.
\newblock Strictly proper scoring rules, prediction, and estimation.
\newblock \emph{J. Am. Stat. Assoc.}, 102\penalty0 (477):\penalty0 359--378,
  2007.

\bibitem[Guo et~al.(2021)Guo, Ong, and Liu]{Guo2021CalibratedEI}
Guo, Z., Ong, Y.~S., and Liu, H.
\newblock Calibrated and recalibrated expected improvements for {B}ayesian
  optimization.
\newblock \emph{Struct. Multidiscip. Optim.}, 64:\penalty0 3549--3567, 2021.

\bibitem[Jin \& Ren(2025)Jin and
  Ren]{jin2024confidencefocalconformalprediction}
Jin, Y. and Ren, Z.
\newblock Confidence on the focal: conformal prediction with
  selection-conditional coverage.
\newblock \emph{J. R. Stat. Soc. Ser. B. Stat. Methodol.}, 87\penalty0
  (4):\penalty0 1239--1259, 04 2025.

\bibitem[Jones et~al.(1998)Jones, Schonlau, and Welch]{Jones1998:article_ego}
Jones, D., Schonlau, M., and Welch, W.
\newblock Efficient global optimization of expensive black-box functions.
\newblock \emph{J. Global Optim.}, 13:\penalty0 455--492, 12 1998.

\bibitem[Kim et~al.(2025)Kim, Zecchin, Park, Kang, and Simeone]{kim:2025}
Kim, D., Zecchin, M., Park, S., Kang, J., and Simeone, O.
\newblock Robust {B}ayesian optimization via localized online conformal
  prediction.
\newblock \emph{IEEE Trans. Signal Process.}, 73:\penalty0 2039--2052, 2025.

\bibitem[Lai \& Robbins(1985)Lai and Robbins]{lairobbins1985}
Lai, T.~L. and Robbins, H.
\newblock Asymptotically efficient adaptive allocation rules.
\newblock \emph{Adv. Appl. Math.}, 6\penalty0 (1):\penalty0 4--22, 1985.

\bibitem[Meeker et~al.(2017)Meeker, Hahn, and Escobar]{meeker2017statistical}
Meeker, W.~Q., Hahn, G.~J., and Escobar, L.~A.
\newblock \emph{Statistical Intervals: A Guide for Practitioners and
  Researchers}.
\newblock John Wiley \& Sons, Hoboken, New Jersey, second edition, 2017.
\newblock ISBN 978-0-471-68717-7.

\bibitem[Mockus et~al.(1978)Mockus, Tiesis, and Zilinskas]{mockus:1978}
Mockus, J., Tiesis, V., and Zilinskas, A.
\newblock The application of {B}ayesian methods for seeking the extremum.
\newblock In Dixon, L.~C.~W. and Szeg{\"o}, G.~P. (eds.), \emph{Towards Global
  Optimisation}, volume~2, pp.\  117--129. North-Holland, Amsterdam, 1978.

\bibitem[Nadarajah(2005)]{nadarajah2005}
Nadarajah, S.
\newblock A generalized normal distribution.
\newblock \emph{J. Appl. Stat.}, 32\penalty0 (7):\penalty0 685--694, 2005.

\bibitem[Neiswanger \& Ramdas(2021)Neiswanger and Ramdas]{Neiswanger:2021}
Neiswanger, W. and Ramdas, A.
\newblock Uncertainty quantification using martingales for misspecified
  {G}aussian processes.
\newblock In Feldman, V., Ligett, K., and Sabato, S. (eds.), \emph{Proceedings
  of the 32nd International Conference on Algorithmic Learning Theory}, volume
  132 of \emph{Proc. Mach. Learn. Res.}, pp.\  963--982. PMLR, 16--19 Mar 2021.

\bibitem[Nelder \& Mead(1965)Nelder and Mead]{nelder_mead}
Nelder, J.~A. and Mead, R.
\newblock A simplex method for function minimization.
\newblock \emph{The Computer Journal}, 7\penalty0 (4):\penalty0 308--313, 01
  1965.

\bibitem[Petit et~al.(2025)Petit, Bect, and Vazquez]{petit:2025}
Petit, S.~J., Bect, J., and Vazquez, E.
\newblock Relaxed {G}aussian process interpolation: a goal-oriented approach to
  {B}ayesian optimization.
\newblock \emph{Journal of Machine Learning Research}, 26\penalty0
  (195):\penalty0 1--70, 2025.

\bibitem[Picheny et~al.(2019)Picheny, Vakili, and
  Artemev]{picheny_2019_ordinalbayesianoptimisation}
Picheny, V., Vakili, S., and Artemev, A.
\newblock Ordinal {B}ayesian optimisation, 2019.
\newblock URL \url{https://arxiv.org/abs/1912.02493}.

\bibitem[Picheny et~al.(2022)Picheny, Moss, Torossian, and
  Durrande]{picheny_2022_bayesianquantileexpectileoptimisation}
Picheny, V., Moss, H., Torossian, L., and Durrande, N.
\newblock {B}ayesian quantile and expectile optimisation.
\newblock In Cussens, J. and Zhang, K. (eds.), \emph{Proceedings of the
  Thirty-Eighth Conference on Uncertainty in Artificial Intelligence}, volume
  180 of \emph{Proc. Mach. Learn. Res.}, pp.\  1623--1633. PMLR, 01--05 Aug
  2022.

\bibitem[Pion \& Vazquez(2025)Pion and Vazquez]{pion:2025}
Pion, A. and Vazquez, E.
\newblock Design-marginal calibration of {G}aussian process predictive
  distributions: {B}ayesian and conformal approaches, 2025.
\newblock URL \url{https://arxiv.org/abs/2512.05611}.

\bibitem[Sahoo et~al.(2021)Sahoo, Zhao, Chen, and Ermon]{Sahoo:2021}
Sahoo, R., Zhao, S., Chen, A., and Ermon, S.
\newblock Reliable decisions with threshold calibration.
\newblock In Ranzato, M., Beygelzimer, A., Dauphin, Y., Liang, P., and Vaughan,
  J.~W. (eds.), \emph{Adv. Neural Inf. Process. Syst.}, volume~34, pp.\
  1831--1844. Curran Associates, Inc., 2021.

\bibitem[Schonlau \& Welch(1996)Schonlau and Welch]{schonlau:96:gonff}
Schonlau, M. and Welch, W.~J.
\newblock Global optimization with nonparametric function fitting.
\newblock In \emph{Proceedings of the ASA, Section on Physical and Engineering
  Sciences}, pp.\  183--186. Amer. Statist. Assoc., 1996.

\bibitem[{SciPy Developers}(2026)]{scipy_gennorm}
{SciPy Developers}.
\newblock \emph{{SciPy Project}}, 2026.
\newblock URL
  \url{https://docs.scipy.org/doc/scipy/reference/generated/scipy.stats.gennorm.html}.
\newblock SciPy API Reference, accessed 2026-01.

\bibitem[Scott(1992)]{scott:1992}
Scott, D.~W.
\newblock \emph{Multivariate Density Estimation: Theory, Practice, and
  Visualization}.
\newblock John Wiley \& Sons, New York, 1992.

\bibitem[Shimodaira(2000)]{Shimodaira:2000}
Shimodaira, H.
\newblock Improving predictive inference under covariate shift by weighting the
  log-likelihood function.
\newblock \emph{J. Statist. Plann. Inference}, 90\penalty0 (2):\penalty0
  227--244, 2000.

\bibitem[Srinivas et~al.(2010)Srinivas, Krause, Kakade, and
  Seeger]{srinivas2010:_ucb}
Srinivas, N., Krause, A., Kakade, S.~M., and Seeger, M.
\newblock {G}aussian process optimization in the bandit setting: No regret and
  experimental design.
\newblock In \emph{Proc. 27th International Conference on Machine Learning
  (ICML 2010)}, pp.\  1015--1022, 2010.

\bibitem[Stanton et~al.(2023)Stanton, Maddox, and Wilson]{stanton:2023}
Stanton, S., Maddox, W., and Wilson, A.~G.
\newblock {B}ayesian optimization with conformal prediction sets.
\newblock In Ruiz, F., Dy, J., and van~de Meent, J.~W. (eds.),
  \emph{Proceedings of The 26th International Conference on Artificial
  Intelligence and Statistics}, volume 206 of \emph{Proc. Mach. Learn. Res.},
  pp.\  959--986. PMLR, 25--27 Apr 2023.

\bibitem[Stein(1999)]{stein_interpolation_1999}
Stein, M.~L.
\newblock \emph{Interpolation of Spatial Data: Some Theory for Kriging}.
\newblock Springer Ser. Stat. Springer New York, 1999.

\bibitem[Surjanovic \& Bingham(2013)Surjanovic and
  Bingham]{surjanovic_bingham_optimization}
Surjanovic, S. and Bingham, D.
\newblock Virtual library of simulation experiments: Test functions and
  datasets, 2013.
\newblock URL \url{https://www.sfu.ca/~ssurjano/}.
\newblock Accessed November 2025.

\bibitem[Tom et~al.(2025)Tom, Lo, Corapi, Aspuru-Guzik, and
  Sanchez-Lengeling]{tom:2025}
Tom, G., Lo, S., Corapi, S., Aspuru-Guzik, A., and Sanchez-Lengeling, B.
\newblock Ranking over regression for {B}ayesian optimization and molecule
  selection.
\newblock \emph{APL Machine Learning}, 3\penalty0 (3):\penalty0 036113, 08
  2025.

\bibitem[Tuo \& Wang(2022)Tuo and Wang]{tuo2022uncertainty}
Tuo, R. and Wang, W.
\newblock Uncertainty quantification for {B}ayesian optimization.
\newblock In Camps-Valls, G., Ruiz, F. J.~R., and Valera, I. (eds.),
  \emph{Proceedings of the 25th International Conference on Artificial
  Intelligence and Statistics (AISTATS)}, volume 151 of \emph{Proc. Mach.
  Learn. Res.}, pp.\  2862--2884. PMLR, 2022.

\bibitem[Vazquez(2026)]{gpmp_2026}
Vazquez, E.
\newblock Gpmp: the {G}aussian process micro package, 2026.
\newblock URL \url{https://github.com/gpmp-dev/gpmp}.

\bibitem[Vazquez \& Bect(2010)Vazquez and Bect]{vazquez:2010}
Vazquez, E. and Bect, J.
\newblock Convergence properties of the expected improvement algorithm with
  fixed mean and covariance functions.
\newblock \emph{J. Statist. Plann. Inference}, 140:\penalty0 3088--3095, 11
  2010.

\bibitem[Villemonteix et~al.(2009)Villemonteix, Vazquez, and
  Walter]{villemonteix_informational_2009}
Villemonteix, J., Vazquez, E., and Walter, E.
\newblock An informational approach to the global optimization of
  expensive-to-evaluate functions.
\newblock \emph{J. Global Optim.}, 44\penalty0 (4):\penalty0 509--534, 2009.

\bibitem[Vovk et~al.(2005)Vovk, Gammerman, and Shafer]{vovk_Gammerman_2005}
Vovk, V., Gammerman, A., and Shafer, G.
\newblock \emph{Algorithmic Learning in a Random World}.
\newblock Springer, 2005.

\bibitem[Vovk et~al.(2019)Vovk, Shen, Manokhin, and Xie]{vovk19:_nonpar}
Vovk, V., Shen, J., Manokhin, V., and Xie, M.
\newblock Nonparametric predictive distributions based on conformal prediction.
\newblock \emph{Mach. Learn.}, 108\penalty0 (3):\penalty0 445--474, 2019.

\bibitem[Zhang \& Candès(2024)Zhang and Candès]{zhang_posterior_2024}
Zhang, Y. and Candès, E.~J.
\newblock Posterior conformal prediction, 2024.
\newblock URL \url{https://arxiv.org/abs/2409.19712}.

\end{thebibliography}
\bibliographystyle{icml2026}

\clearpage
\appendix
\onecolumn

\section*{Appendices}

The appendix is organized as follows. Appendix~\ref{apd:gnp} recalls the
generalized normal distribution. Appendix~\ref{sec:proofs-results} gathers the
proofs of the main results. Appendix~\ref{sec:scoring-rules} briefly reviews 
scoring rules. Appendix~\ref{ap:impact_calibration_bo} presents the consistency 
statement discussed in Section~\ref{sec:impact_calibration_bo}. Appendix~\ref
{sec:impl-deta} provides implementation details, and Appendix~\ref
{sec:additional-results} reports additional experimental results.

\section{Generalized Normal Distribution}
\label{apd:gnp}

The generalized normal distribution $\mathcal{GN}(\beta,l,\lambda)$ is a
three-parameter family on $\R$, with shape parameter $\beta>0$, location
parameter $l\in\R$, and scale parameter $\lambda>0$. Its density is
\begin{equation}
g(z)
=
\frac{\beta}{2\,\Gamma(1/\beta)\,\lambda}
\exp\!\left(
-\left(\frac{|z-l|}{\lambda}\right)^{\beta}
\right),
\qquad z\in\R.
\end{equation}
It is symmetric about $l$. The mean is $\E[Z]=l$ and the variance is
\begin{equation}
\Var(Z)=\lambda^2\,\frac{\Gamma(3/\beta)}{\Gamma(1/\beta)}.
\end{equation}
The parameter $\beta$ controls tail decay: $\beta=2$ yields a Gaussian
distribution, $\beta<2$ corresponds to heavier tails, and $\beta>2$ to lighter
tails. In particular,
\begin{equation}
\mathcal{GN}(2,l,\lambda)=\mathcal{N}\left(l,\frac{\lambda^2}{2}\right),
\end{equation}
since $\Gamma(3/2)/\Gamma(1/2)=1/2$.

The cumulative distribution function can be written using the lower incomplete
gamma function
\begin{equation}
\gamma_{\mathrm{inc}}(a,x)=\int_0^x t^{a-1}e^{-t}\,dt.
\end{equation}
For $z\in\R$, the CDF $\Theta_{\beta,l,\lambda}$ of
$\mathcal{GN}(\beta,l,\lambda)$ may be written as
\begin{equation}
\Theta_{\beta,l,\lambda}(z)=\P(Z\le z)=
\begin{cases}
\frac{1}{2}
-\frac{1}{2\,\Gamma(1/\beta)}\,
\gamma_{\mathrm{inc}}\!\left(\frac{1}{\beta},\left(\frac{l-z}{\lambda}\right)^\beta\right),
& z\le l,\\[6pt]
\frac{1}{2}
+\frac{1}{2\,\Gamma(1/\beta)}\,
\gamma_{\mathrm{inc}}\!\left(\frac{1}{\beta},\left(\frac{z-l}{\lambda}\right)^\beta\right),
& z\ge l.
\end{cases}
\end{equation}
We also write $\Theta_\beta:=\Theta_{\beta,0,1}$.

Further properties are given in \citet{nadarajah2005}.
The distribution $\mathcal{GN}(\beta,l,\lambda)$ is available in SciPy as
\texttt{scipy.stats.gennorm}; see \citet{scipy_gennorm}.

\section{Proofs}
\label{sec:proofs-results}

\subsection{Proof of Proposition~\ref{prop:ei_gn}}
\label{ap:ei_gn}

\begin{proof}
Let $\beta>0$ and $\lambda>0$. Write $Z=l+\lambda T$ with
$T\sim\mathcal{GN}(\beta,0,1)$, whose density is
\begin{equation}
g_\beta(t)=\frac{\beta}{2\,\Gamma(1/\beta)}\exp(-|t|^\beta).
\end{equation}
Set $z=a-l$ and $s=z/\lambda$. Then
\begin{equation}
\E\bigl[(a-Z)_+\bigr]
=
\E\bigl[(z-\lambda T)_+\bigr]
=
\lambda\,\E\bigl[(s-T)_+\bigr]
=
\lambda\int_{-\infty}^s (s-t)\,g_\beta(t)\,dt.
\end{equation}
Therefore,
\begin{equation}
\E\bigl[(a-Z)_+\bigr]
=
\lambda\left(
s\,\Theta_\beta(s)-\int_{-\infty}^s t\,g_\beta(t)\,dt
\right).
\end{equation}
It remains to compute $\int_{-\infty}^s t\,g_\beta(t)\,dt$. Since $t\mapsto
t\exp(-|t|^\beta)$ is odd, for any $s\in\R$,
\begin{equation}
\int_{-\infty}^s t\,\exp(-|t|^\beta)\,dt
=
-\int_{|s|}^{\infty} u\,\exp(-u^\beta)\,du.
\end{equation}
By the change of variables $y=u^\beta$,
\begin{equation}
\int_{|s|}^{\infty} u\,\exp(-u^\beta)\,du
=
\frac{1}{\beta}\int_{|s|^\beta}^{\infty} y^{2/\beta-1}\exp(-y)\,dy
=
\frac{1}{\beta}\,\Gamma\left(\frac{2}{\beta},|s|^\beta\right).
\end{equation}
Hence,
\begin{equation}
\int_{-\infty}^s t\,g_\beta(t)\,dt
=
\frac{\beta}{2\,\Gamma(1/\beta)}
\int_{-\infty}^s t\,\exp(-|t|^\beta)\,dt
=
-\frac{1}{2\,\Gamma(1/\beta)}
\Gamma\left(\frac{2}{\beta},|s|^\beta\right).
\end{equation}
Substituting back yields
\begin{equation}
\E\bigl[(a-Z)_+\bigr]
=
\lambda\left(
s\,\Theta_\beta(s)
+\frac{1}{2\,\Gamma(1/\beta)}
\Gamma\left(\frac{2}{\beta},|s|^\beta\right)
\right)
=
z\,\Theta_\beta\left(\frac{z}{\lambda}\right)
+\frac{\lambda}{2\,\Gamma(1/\beta)}
\Gamma\left(\frac{2}{\beta},\left|\frac{z}{\lambda}\right|^\beta\right).
\end{equation}
This is \eqref{eq:ei_gn}.

For $\lambda=0$, $Z=l$ almost surely, so
$\E[(a-Z)_+]=\max(a-l,0)$, which defines $\gamma(z,0,\beta)$.
Continuity at $\lambda=0$ follows by considering three cases. If $z>0$, then
$\Theta_\beta(z/\lambda)\to 1$ and the incomplete-gamma term vanishes. If
$z<0$, then~$\Theta_\beta(z/\lambda)\to 0$ and the same incomplete-gamma term
vanishes. If $z=0$, the first term is identically zero and the second term is
$\lambda \Gamma(2/\beta)/(2\Gamma(1/\beta))$, hence converges to zero.
Strict positivity for $\lambda>0$ holds because $Z$ has a continuous density
that is strictly positive on $\R$, hence $\P(Z<a)>0$ for any finite $a$, and
$(a-Z)_+>0$ on $\{Z<a\}$.
\end{proof}

\subsection{Connection between EI in the Gaussian Case and the Generalized Normal Model}
\label{ap:connection_gaussian_ei}

For $\beta=2$, the generalized normal distribution coincides with a Gaussian:
\begin{equation}
\mathcal{GN}(2,l,\lambda)=\mathcal{N}\left(l,\frac{\lambda^2}{2}\right).
\end{equation}
Let $\sigma=\lambda/\sqrt{2}$ and $\tau=(a-l)/\sigma$. For $\beta=2$ we have
$\Theta_2(s)=\Phi(\sqrt{2}\,s)$, and $\Gamma(1,x)=\exp(-x)$. Substituting
into \eqref{eq:ei_gn} yields
\begin{equation}
\gamma(a-l,\lambda,2)=(a-l)\Phi(\tau)+\sigma\,\varphi(\tau),
\end{equation}
where $\Phi$ and $\varphi$ denote the CDF and PDF of $\mathcal{N}(0,1)$.
This is the usual EI formula for a Gaussian predictive distribution with mean
$l$ and standard deviation $\sigma$.

\subsection{Proof of Proposition~\ref{prop:convergence}}
\label{ap:convergence}

Recall that, at iteration $n$, EI with \tcgp is
\begin{equation}
\rho_n(x)=\gamma\bigl(m_n-f_n(x),\,\lambda_n\,\sigma_n(x),\,\beta_n\bigr),
\qquad
m_n=\min_{1\le i\le n} f(X_i),
\end{equation}
and we set
\begin{equation}
\nu_n(f)=\max_{x\in\X}\rho_n(x)=\rho_n(X_{n+1}),
\qquad
X_{n+1}\in\argmax_{x\in\X}\rho_n(x),
\end{equation}
with $(\beta_n,\lambda_n)\in[\beta_0,\beta_1]\times[\lambda_0,\lambda_1]$ and
$\lambda_0>0$ for all sufficiently large $n$.

\begin{lemma}
\label{lemma:gamma_continuity}
Let $0<\beta_0<\beta_1$. The map
$(z,s,\beta)\mapsto \gamma(z,s,\beta)$, with
$\gamma(z,0,\beta)=z_+$, is continuous on compact subsets
of~$\R\times[0,\infty)\times[\beta_0,\beta_1]$.
\end{lemma}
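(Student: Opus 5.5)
Since continuity on every compact subset amounts to joint continuity on the whole set $D:=\R\times[0,\infty)\times[\beta_0,\beta_1]$, we prove joint continuity at each point, separating the regions $s>0$ and $s=0$.

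\emph{Region $s>0$.} By \eqref{eq:ei_gn},
\[
\gamma(z,s,\beta)=z\,\Theta_\beta(z/s)+\frac{s}{2\Gamma(1/\beta)}\,\Gamma\bigl(2/\beta,|z/s|^\beta\bigr).
\]
Each ingredient is jointly continuous in $(z,s,\beta)$ when $s>0$. The map $(z,s)\mapsto z/s$ is continuous. The function $\beta\mapsto\Gamma(1/\beta)$ is continuous and positive. The CDF $\Theta_\beta(u)$ equals $\frac12\pm\frac{1}{2\Gamma(1/\beta)}\gamma_{\mathrm{inc}}(1/\beta,|u|^\beta)$, and $(a,x)\mapsto\gamma_{\mathrm{inc}}(a,x)$ and $(a,x)\mapsto\Gamma(a,x)$ are jointly continuous for $a$ in a compact subset of $(0,\infty)$ and $x\ge0$. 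This last fact follows by dominated convergence: the integrand $t^{a-1}e^{-t}$ is dominated by $(t^{a_0-1}+t^{a_1-1})e^{-t}$, which is integrable. The two branches of $\Theta_\beta$ agree at $u=0$, where both equal $1/2$. Compositions of continuous maps are continuous, so $\gamma$ is continuous on $\{s>0\}$.

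\emph{Region $s=0$ (the main step).} Here we need uniformity in $\beta$, which the pointwise limit in Appendix~\ref{ap:ei_gn} does not provide. Fix $(z_0,0,\beta^\star)$ and take $(z_k,s_k,\beta_k)\to(z_0,0,\beta^\star)$. At points with $s_k=0$ we have $\gamma=(z_k)_+\to(z_0)_+$. For $s_k>0$, we use the probabilistic form from Proposition~\ref{prop:ei_gn}:
\[
\gamma(z,s,\beta)=\E\bigl[(z-sT_\beta)_+\bigr],\qquad T_\beta\sim\mathcal{GN}(\beta,0,1).
\]
Since $x\mapsto x_+$ is $1$-Lipschitz,
\[
|\gamma(z,s,\beta)-z_+|\le s\,\E|T_\beta|=s\,\frac{\Gamma(2/\beta)}{\Gamma(1/\beta)}.
\]
The right-hand side is at most $s\,C(\beta_0,\beta_1)$, because $\Gamma(2/\beta)/\Gamma(1/\beta)$ is continuous on $[\beta_0,\beta_1]$ and hence bounded there. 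It follows that
\[
|\gamma(z_k,s_k,\beta_k)-(z_0)_+|\le C s_k+|z_k-z_0|\to0 .
\]

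Combining the two regions, $\gamma$ is continuous at every point of $D$. It is therefore continuous, and in particular uniformly continuous, on every compact subset of $D$.
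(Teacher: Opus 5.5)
Your proof is correct and follows the same route as the paper: continuity for $s>0$ comes from the closed form \eqref{eq:ei_gn}, and at $s=0$ you use the Lipschitz bound $|\gamma(z,s,\beta)-z_+|\le s\,\Gamma(2/\beta)/\Gamma(1/\beta)$, which is uniform on $[\beta_0,\beta_1]$, together with continuity of $z\mapsto z_+$. Your dominated-convergence argument for the joint continuity of the incomplete gamma functions when $s>0$ fills in a detail the paper leaves implicit.
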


\begin{proof}
For $s>0$, continuity follows from the closed-form expression
\eqref{eq:ei_gn}. It remains to check continuity at $s=0$, uniformly for
$\beta\in[\beta_0,\beta_1]$. Let $T_\beta\sim\mathcal{GN}(\beta,0,1)$.
Since the map $x\mapsto x_+$ is Lipschitz,
\begin{equation}
\left|\gamma(z,s,\beta)-z_+\right|
=
\left|\E\bigl[(z-sT_\beta)_+\bigr]-z_+\right|
\le
s\,\E|T_\beta|.
\end{equation}
Moreover,
\begin{equation}
\E|T_\beta|=\frac{\Gamma(2/\beta)}{\Gamma(1/\beta)}
\end{equation}
is bounded on $[\beta_0,\beta_1]$ by continuity. Hence
$\gamma(z,s,\beta)\to z_+$ as $s\downarrow0$, uniformly for
$\beta\in[\beta_0,\beta_1]$. If also $z'\to z$,
then~$|z'_+-z_+|\to0$, so continuity holds at $s=0$ as well. This proves the claim.
\end{proof}

\begin{lemma}
\label{lemma:liminf_nu}
For all $f\in\mathcal H$, $\liminf_{n\to\infty}\nu_n(f)=0$.
\end{lemma}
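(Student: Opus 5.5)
The plan is to follow the argument of \citet{vazquez:2010} for the Gaussian case. The compactness of $\X$ forces the design points to accumulate, which makes $\sigma_n$ small at the selected points. We then bound $\rho_n(X_{n+1})$ from above by a quantity that tends to zero along a subsequence.

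\textbf{Step 1 (accumulation).} Since $\X$ is compact, the sequence $(X_n)$ has an accumulation point $x^\dagger$. Hence there is a subsequence $(n_j)$ along which $X_{n_j+1}\to x^\dagger$, and we may choose it so that $X_{n_j+1}$ lies within distance $\varepsilon_j\downarrow 0$ of some earlier point $X_{i_j}$, with $i_j\le n_j$. For instance, take two indices $i<n+1$ in a small ball around $x^\dagger$, both larger than $M$.

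\textbf{Step 2 (small predictive variance).} Because $k$ is continuous and stationary, the kriging variance obeys
\[
\sigma_n^2(x)\le \Var\bigl(\xi(x)-\xi(X_i)\bigr)=2\bigl(k(0)-k(x-X_i)\bigr)
\]
for any $i\le n$. Applied at $x=X_{n_j+1}$ and $X_{i_j}$, this gives $\sigma_{n_j}(X_{n_j+1})\to 0$. Since $\lambda_n\le\lambda_1$ for $n\ge M$, the scale parameter $s_j:=\lambda_{n_j}\sigma_{n_j}(X_{n_j+1})$ also tends to zero.

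\textbf{Step 3 (improvement term).} We need $\bigl(m_{n_j}-f_{n_j}(X_{n_j+1})\bigr)_+\to0$. For $f\in\mathcal H$ we have the standard bound $|f(x)-f_n(x)|\le \|f\|_{\mathcal H}\,\sigma_n(x)$, where $\sigma_n$ is the kriging standard deviation for the same kernel. Therefore
\[
m_n-f_n(x)\le f(x)-f_n(x)+ \bigl(m_n-f(x)\bigr),
\]
and evaluating at the chosen points gives
\[
\bigl(m_{n_j}-f_{n_j}(X_{n_j+1})\bigr)_+\le \|f\|_{\mathcal H}\,\sigma_{n_j}(X_{n_j+1}) + \bigl(m_{n_j}-f(X_{n_j+1})\bigr)_+ .
\]
For the second term, $m_{n_j}\le f(X_{i_j})$, and $f(X_{i_j})-f(X_{n_j+1})\to0$ by continuity of $f$ on the compact set $\X$ (any $f\in\mathcal H$ is continuous). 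Hence $z_j:=m_{n_j}-f_{n_j}(X_{n_j+1})$ satisfies $\limsup_j z_j\le 0$. Moreover, $z_j$ is bounded, since $f_n$ is uniformly bounded by $\|f\|_{\mathcal H}\sqrt{k(0)}$.

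\textbf{Step 4 (conclusion).} We have $\nu_{n_j}(f)=\gamma(z_j,s_j,\beta_{n_j})$ with $(z_j,s_j,\beta_{n_j})$ in a compact set. Passing to a further subsequence, $(z_j,s_j,\beta_{n_j})\to(z,0,\beta)$ with $z\le0$. Lemma~\ref{lemma:gamma_continuity} then gives $\nu_{n_j}(f)\to z_+=0$. Since $\nu_n\ge0$, we get $\liminf_n\nu_n(f)=0$.

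Alternatively, Step 4 can be done directly with the Lipschitz estimate from the proof of Lemma~\ref{lemma:gamma_continuity}:
\[
\gamma(z,s,\beta)\le z_+ + s\,\sup_{\beta\in[\beta_0,\beta_1]}\Gamma(2/\beta)/\Gamma(1/\beta).
\]

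\textbf{Main obstacle.} The delicate part is Step 3, controlling $m_n-f_n$ at the new point. It relies on the RKHS bound, on the variance bound through a nearby earlier design point, and on choosing the subsequence so that an earlier point lies close to $X_{n_j+1}$, which compactness guarantees.
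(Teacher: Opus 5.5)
Your proof is correct and follows essentially the same route as the paper. You use compactness to pair each new point with a nearby earlier design point, bound $m_n$ by the value at that earlier point, show the predictive standard deviation at the new point vanishes so that $\limsup z\le 0$ and the scale tends to zero, and conclude via Lemma~\ref{lemma:gamma_continuity}. The only differences are that you derive directly what the paper imports from \citet[Prop.~10]{vazquez:2010}, using the single-point bound $\sigma_n^2(x)\le 2\bigl(k(0)-k(x-X_i)\bigr)$ and the RKHS estimate $|f(x)-f_n(x)|\le\|f\|_{\mathcal H}\sigma_n(x)$. You also finish by joint continuity on a compact set, or by the Lipschitz bound, instead of using monotonicity of $\gamma$ in $z$ together with an $\varepsilon$-argument.
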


\begin{proof}
Fix $f\in\mathcal H$. Since $\X$ is compact, $(X_n)$ has a cluster point
$\tilde x\in\X$ and there exists a strictly increasing map $\phi:\N\to\N$ such
that $X_{\phi(n)}\to \tilde x$. Since $k$ is continuous and $\X$ is compact,
every $f\in\mathcal H$ is continuous on $\X$, hence $f(X_{\phi(n)})\to f(\tilde x)$.

By \citet[Prop.~10]{vazquez:2010},
\begin{equation}
f_{\phi(n)-1}(X_{\phi(n)})\to f(\tilde x),
\qquad
\sigma_{\phi(n)-1}(X_{\phi(n)})\to 0.
\end{equation}
Moreover, since $\phi$ is increasing,
\begin{equation}
m_{\phi(n)-1}\le m_{\phi(n-1)}\le f(X_{\phi(n-1)})\to f(\tilde x),
\end{equation}
hence $\limsup_{n\to\infty} m_{\phi(n)-1}\le f(\tilde x)$. Set
\begin{equation}
z_{\phi(n)-1}:=m_{\phi(n)-1}-f_{\phi(n)-1}(X_{\phi(n)}).
\end{equation}
Then $\limsup_{n\to\infty} z_{\phi(n)-1}\le 0$. Define also
\begin{equation}
t_{\phi(n)-1}:=\lambda_{\phi(n)-1}\,\sigma_{\phi(n)-1}(X_{\phi(n)}),
\end{equation}
so that $t_{\phi(n)-1}\to 0$ since $0<\lambda_{\phi(n)-1}\le \lambda_1$.

Fix $\varepsilon>0$. For $n$ large enough, $z_{\phi(n)-1}\le \varepsilon$.
For $\lambda>0$, $z\mapsto \gamma(z,\lambda,\beta)=\E[(z-Z)_+]$ is nondecreasing,
hence
\begin{align}
\nu_{\phi(n)-1}(f)
&=\rho_{\phi(n)-1}(X_{\phi(n)}) \\
&=\gamma\bigl(z_{\phi(n)-1},\,t_{\phi(n)-1},\,\beta_{\phi(n)-1}\bigr) \\
&\le \gamma\bigl(\varepsilon,\,t_{\phi(n)-1},\,\beta_{\phi(n)-1}\bigr).
\end{align}
Since $t_{\phi(n)-1}\to 0$ and, for all large $n$,
$\beta_{\phi(n)-1}\in[\beta_0,\beta_1]$, every subsequence of
$\gamma(\varepsilon,t_{\phi(n)-1},\beta_{\phi(n)-1})$ has a further subsequence
along which $\beta_{\phi(n)-1}\to \bar\beta\in[\beta_0,\beta_1]$.
Along this further subsequence, Lemma~\ref{lemma:gamma_continuity} gives
\begin{equation}
\gamma(\varepsilon,t_{\phi(n)-1},\beta_{\phi(n)-1})
\longrightarrow
\gamma(\varepsilon,0,\bar\beta)
=
\varepsilon.
\end{equation}
Thus the full sequence converges to $\varepsilon$.
Therefore,
\begin{equation}
\limsup_{n\to\infty}\nu_{\phi(n)-1}(f)\le \varepsilon.
\end{equation}
Since $\varepsilon>0$ is arbitrary, $\liminf_{n\to\infty}\nu_n(f)=0$.
\end{proof}

\begin{proof}[Proof of Proposition~\ref{prop:convergence}]
Assume by contradiction that $(X_n)$ is not dense in $\X$. Then there exists
$\tilde x\in\X$ such that $\tilde x$ is not adherent to $\{X_n:n\ge 1\}$.
By the NEB property, $\sigma_n^2(\tilde x)$ does not converge to $0$.
Since $(\sigma_n^2(\tilde x))_{n\ge 1}$ is nonincreasing, it converges to some
$c^2>0$, hence $\inf_{n\ge 1}\sigma_n(\tilde x)\ge c$.

Since $f_n$ is the $\mathcal H$-projection of $f$ onto the span of kernel
sections, $\|f_n\|_{\mathcal H}\le \|f\|_{\mathcal H}$, hence
\begin{equation}
|f_n(\tilde x)|
\le \|f_n\|_{\mathcal H}\sqrt{k(\tilde x,\tilde x)}
\le \|f\|_{\mathcal H}\sqrt{k(\tilde x,\tilde x)}.
\end{equation}
Also, $(m_n)$ is bounded because $f$ is continuous on the compact set $\X$.
Therefore, $z_n^*:=m_n-f_n(\tilde x)$ ranges in a compact interval $I\subset\R$ and
$\sigma_n(\tilde x)$ ranges in $[c,\sqrt{k(\tilde x,\tilde x)}]$.

Define
\begin{equation}
\psi(z,\sigma,\beta,\lambda)=\gamma(z,\lambda\sigma,\beta).
\end{equation}
The map $\psi$ is continuous by Lemma~\ref{lemma:gamma_continuity}. Consider the compact set
\begin{equation}
K
=
I\times[c,\sqrt{k(\tilde x,\tilde x)}]\times[\beta_0,\beta_1]\times[\lambda_0,\lambda_1].
\end{equation}
Since $\lambda_0>0$ and $\gamma(\cdot,\lambda,\beta)$ is strictly positive for
$\lambda>0$, we have $\psi>0$ on $K$, hence
\begin{equation}
\eta:=\inf_{(z,\sigma,\beta,\lambda)\in K}\psi(z,\sigma,\beta,\lambda)>0.
\end{equation}
For all sufficiently large $n$,
\begin{equation}
\rho_n(\tilde x)
=\gamma(z_n^*,\lambda_n\sigma_n(\tilde x),\beta_n)
\ge \eta.
\end{equation}
Since $X_{n+1}$ maximizes $\rho_n$, we obtain
\begin{equation}
\nu_n(f)=\rho_n(X_{n+1})\ge\rho_n(\tilde x)\ge\eta
\qquad\text{for all sufficiently large $n$,}
\end{equation}
which contradicts Lemma~\ref{lemma:liminf_nu}. Therefore, $(X_n)$ is dense in $\X$.
\end{proof}

\section{Scoring Rules}
\label{sec:scoring-rules}

\subsection{General Definition}
\label{ap:scoring_rules}

To assess calibration and sharpness jointly, we use \emph{proper scoring rules}
\citep{gneiting07:scoring_rules}. Scoring rules assign a numerical score
$S(\hat F,z)$ to a predictive CDF $\hat F$ and an outcome $z\in\R$. A scoring rule is
\emph{strictly proper} if, for the true distribution $F$,
\begin{equation}
  \E_{Z\sim F}[S(F,Z)] \;\le\; \E_{Z\sim F}[S(\hat F,Z)],
\end{equation}
with equality if and only if $\hat F=F$.

For goal-oriented prediction, we use the \emph{threshold-weighted continuous
ranked probability score} (twCRPS) \citep{allen:2023}, defined for a weight
function $w$, for instance $w(u)=\one\{u\le t\}$, by
\begin{equation}
    S_{\rm twCRPS}(\hat F, z)
    = \int (\hat F(u) - \one\{u \ge z\})^2 w(u)\,\mathrm{d}u .
\end{equation}
For nonconstant weights, twCRPS is a proper weighted score and does not identify
the full predictive distribution outside the weighted region.

Given a true CDF $F$, a predictive CDF $\hat F$, and a scoring rule $S$, we
evaluate predictive performance using the expected score
\begin{equation}
    J_S(\hat F) = \E_{Z\sim F} (S(\hat F, Z)).
\end{equation}

An empirical version is given below. In the spatial setting, the expected score
of the family of predictive CDFs $\hat F_n(\cdot\mid x)$, indexed by $x\in\X$,
is
\begin{equation}
    J_{S, \mu}(\hat F_n) = \E_n\left[S(\hat F_n(\cdot \mid X), f(X))\right].
\end{equation}

\subsection{Empirical twCRPS}
\label{app:twscrps}

Suppose that $w:\R\rightarrow\R^+$ is a weight function and $\hat F$ a
predictive CDF. A \emph{chaining} function is any function $v$
satisfying~$v(x') - v(x) = \int_{x}^{x'} w(u)\mathrm{d}u$ \citep[see, e.g.,][]
{allen:2023}. The twCRPS at $z\in \R$ can be rewritten as
\begin{equation}
    S_{\rm twCRPS}(\hat F, z)
    =
    \E_{\hat F}\bigl[|v(Y) - v(z)|\bigr]
    - \frac{1}{2}\E_{\hat F}\bigl[|v(Y) - v(Y')|\bigr],
\end{equation}
where $Y$ and $Y'$ are independent random variables with CDF $\hat F$.

With $w(z)=\one\{z\le t\}$ for a threshold $t\in\R$, we may take
$v(z)=\min(z,t)$.

For the empirical forecast-observation
pair $(Z, \hat F)$ and a sample $(Y_\ell)_{\ell=1}^K$ from $\hat F$, the twCRPS
at $Z$ can be approximated by the Monte Carlo estimator
\begin{equation}
    \widehat S_{\rm twCRPS,K}(\hat F, Z)
    =
    \frac{1}{K}\sum_{\ell=1}^K |v(Y_\ell)-v(Z)|
    -
    \frac{1}{2K^2}\sum_{\ell=1}^K\sum_{r=1}^K |v(Y_\ell)-v(Y_r)|.
\end{equation}
For multiple forecast-observation pairs $(Z_i, \hat F_i)_{i=1}^m$, the
empirical score is
\begin{equation}
    \widehat J_{\rm twCRPS,m}
    =
    \frac{1}{m}\sum_{i=1}^m
    \widehat S_{\rm twCRPS,K}(\hat F_i,Z_i).
\end{equation}

\section{Impact of Calibration at the Current Best Value}
\label{ap:impact_calibration_bo}

This appendix provides the consistency statement mentioned in
Section~\ref{sec:impact_calibration_bo}. It shows that, if the set of global
minimizers has~$\mu$-measure zero, enforcing occurrence calibration at the
current best value $m_n$ forces the predicted probability mass below~$m_n$ to
collapse as $m_n$ approaches the global minimum.

\begin{proposition}[Vanishing tail mass with occurrence calibration at $m_n$]
\label{prop:tailmass_vanish}
Assume that $m_n \overset{\rm a.s.}\longrightarrow f(x^*)$ for some global
minimizer $x^*\in\arg\min_{x\in\X} f(x)$. If $\hat F_n$ is occurrence-calibrated
at threshold $m_n$, that is,
\begin{equation}
\label{eq:mu_calibration_at_mn}
\E_n\bigl[\hat F_n(m_n\mid X)\bigr] = \P_n\bigl(f(X)\le m_n\bigr),
\qquad X\sim\mu,
\end{equation}
then for every $\varepsilon\in(0,1]$, we have
\begin{equation}
\label{eq:tailmass_vanish_bound}
\limsup_{n\to\infty}\P_n\bigl(\hat F_n(m_n\mid X)>\varepsilon\bigr)
\le \varepsilon^{-1}\, \mu\left(\{x\in\X: f(x)=f(x^*)\}\right)
\qquad \text{a.s.}
\end{equation}
In particular, if $\mu\left(\{x\in\X: f(x)=f(x^*)\}\right)=0$, then
\begin{equation}
\P_n\bigl(\hat F_n(m_n\mid X)>\varepsilon\bigr)\to 0
\qquad \text{a.s.}
\end{equation}
\end{proposition}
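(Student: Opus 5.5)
The plan is to apply Markov's inequality to the nonnegative random variable $\hat F_n(m_n\mid X)$, with $X\sim\mu$ and $\Dcal_n$ fixed, and then use the occurrence-calibration identity \eqref{eq:mu_calibration_at_mn} to replace its expectation by $\P_n(f(X)\le m_n)=\mu(\{x\in\X: f(x)\le m_n\})$. For each $n$ and each $\varepsilon\in(0,1]$ this yields the deterministic (given $\Dcal_n$) bound
\begin{equation*}
\P_n\bigl(\hat F_n(m_n\mid X)>\varepsilon\bigr)
\le \varepsilon^{-1}\,\E_n\bigl[\hat F_n(m_n\mid X)\bigr]
= \varepsilon^{-1}\,\mu\bigl(\{x\in\X: f(x)\le m_n\}\bigr).
\end{equation*}
The remaining task is to control the right-hand side as $n\to\infty$.

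For the limit, write $f^*=f(x^*)=\min_{\X} f$. Since $m_n=\min_{i\le n}f(X_i)$ is nonincreasing and $m_n\ge f^*$, the sets $A_n=\{x: f(x)\le m_n\}$ are nonincreasing in $n$. On the almost-sure event where $m_n\to f^*$, their intersection is $\bigcap_n A_n=\{x: f(x)\le f^*\}=\{x: f(x)=f^*\}$: if $f(x)\le m_n$ for all $n$, then $f(x)\le f^*$, and the converse is immediate. Because $\mu$ is a finite (probability) measure, continuity from above gives $\mu(A_n)\to\mu(\{f=f^*\})$ on that event. Taking $\limsup$ in the Markov bound then yields \eqref{eq:tailmass_vanish_bound} almost surely. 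The case $\mu(\{f=f^*\})=0$ gives the final claim, since a nonnegative sequence with $\limsup\le 0$ converges to $0$.

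The only delicate points are measure-theoretic bookkeeping rather than real obstacles. First, the randomness sits in $\Dcal_n$ (through the BO design), while $\P_n$ integrates only over the auxiliary $X\sim\mu$. The Markov step must therefore be applied pathwise for each fixed realization, and the limit taken on the almost-sure event $\{m_n\to f^*\}$. Second, measurability of $x\mapsto \hat F_n(m_n\mid x)$ is needed, which is implicit in the definition of $\E_n$. Third, continuity from above requires $\mu(A_1)<\infty$, which holds because $\mu$ is a probability measure. The step I expect to require most care is the identification of $\bigcap_n A_n$: using monotonicity of $m_n$ avoids any need for continuity of $f$.
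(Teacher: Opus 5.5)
Your proposal is correct and follows essentially the same route as the paper: Markov's inequality combined with the occurrence-calibration identity, then a pathwise limit of $\mu(\{x\in\X: f(x)\le m_n\})$ on the almost-sure event $\{m_n\to f(x^*)\}$. The only difference is cosmetic. You use monotonicity of $m_n$ and continuity from above of $\mu$, whereas the paper shows $\mathbf{1}\{f(x)\le m_n\}\to\mathbf{1}\{f(x)=f(x^*)\}$ pointwise and applies dominated convergence, which does not need monotonicity of $m_n$.
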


\begin{proof}
Fix $\varepsilon\in(0,1]$ and set $A_n:=\{\hat F_n(m_n\mid X)>\varepsilon\}$.
By Markov's inequality,
$$
\P_n(A_n)
\le
\varepsilon^{-1}\,\E_n\bigl[\hat F_n(m_n\mid X)\bigr].
$$
Using \eqref{eq:mu_calibration_at_mn}, this yields
\begin{equation}
\label{eq:reduce_to_excursion_prob}
\P_n\bigl(\hat F_n(m_n\mid X)>\varepsilon\bigr)
\le
\varepsilon^{-1}\,\P_n\bigl(f(X)\le m_n\bigr).
\end{equation}

On any sample path such that $m_n\to f(x^*)$, we have
$f(x)\ge f(x^*)$ for all $x\in\X$, and therefore
\begin{equation}
\label{eq:indicator_limit}
\mathbf{1}\{f(x)\le m_n\} \to \mathbf{1}\{f(x)=f(x^*)\}
\qquad \text{for all } x\in\X .
\end{equation}
Moreover, $0\le \mathbf{1}\{f(x)\le m_n\}\le 1$, so dominated convergence with
respect to $\mu$ gives
\begin{equation}
\label{eq:excursion_prob_limit}
\P_n\bigl(f(X)\le m_n\bigr)
=
\E_n\bigl[\mathbf{1}\{f(X)\le m_n\}\bigr]
\to
\E_n\bigl[\mathbf{1}\{f(X)=f(x^*)\}\bigr]
=
\mu\left(\{x\in\X: f(x)=f(x^*)\}\right)
\qquad \text{a.s.}
\end{equation}
Combining \eqref{eq:reduce_to_excursion_prob} with \eqref{eq:excursion_prob_limit}
and taking $\limsup$ yields \eqref{eq:tailmass_vanish_bound}. The last statement
follows when 
\begin{equation}
    \mu\left(\{x\in\X: f(x)=f(x^*)\}\right)=0.
\end{equation}
\end{proof}

\paragraph{Interpretation.}
Proposition~\ref{prop:tailmass_vanish} is a population statement with an
auxiliary draw $X\sim\mu$ (independent of $\Dcal_n$). It implies that, when the
set of global minimizers has $\mu$-measure $0$, enforcing occurrence
calibration at $m_n$ forces $\hat F_n(m_n\mid X)\to 0$
in~$\mu$-probability. For every $\varepsilon\in(0,1]$, the $\mu$-mass of the
inputs at which $\hat F_n(m_n\mid x)$ exceeds $\varepsilon$ therefore vanishes
as $n\to\infty$. Equivalently, by~\eqref{eq:mu_calibration_at_mn},
the $\mu$-average predicted mass $\E_n[\hat F_n(m_n\mid X)]$ tends to zero.

The result does not describe how the predictive lower tail behaves in the
region explored by the BO policy. This motivates calibrating below a higher
threshold $t_n$ and using notions based on the truncated CDF
$\hat F_{n,t_n}(\cdot\mid x)$.

\section{Implementation Details}
\label{sec:impl-deta}

\subsection{GP Matérn Model and Parameters}
\label{ap:parameters}

In the experiments of Section~\ref{sec:experiment_calibration}, the model
$\GP(m,k)$ has constant mean $m(x)=m_0$ for $x\in\XX$. The kernel $k$ belongs to
the anisotropic Matérn family, defined for $x,y\in\mathbb{R}^d$ by
\begin{equation}
k_{\sigma,\nu,\rho}(x,y)
= \sigma^2 \kappa_\nu(h), \qquad
h^2 = \sum_{i=1}^d \frac{(x_{[i]}-y_{[i]})^2}{\rho_i^2},
\end{equation}
where $\sigma^2$ is the variance parameter, the components $\rho_i$ of
$\rho=(\rho_1,\ldots,\rho_d)$ are the lengthscales, and $\kappa_\nu$ denotes
the Matérn correlation function \citep[Chapter~2.7]{stein_interpolation_1999}.
The smoothness parameter is fixed to $\nu=p+1/2$, with
$p\in\mathbb{N}^\star$.

The parameters $(m_0,\rho,\sigma)$ are selected by maximum likelihood. We use
$p=2$ in the experiments of Section~\ref{sec:experiment_calibration}.

\subsection{Maximization of EI}
\label{ap:max_ei}
EI is maximized with the sequential Monte Carlo (SMC) procedure of
\citet{Feliot_2016}, available in \texttt{gpmp}. A population of particles is
iteratively reweighted with EI-proportional weights and propagated through
mutation steps, yielding a
discrete approximation of the EI maximizers. The particle with the largest EI
is then used to initialize a local maximization with the \emph{sequential least
squares programming} (SLSQP) algorithm. This hybrid global--local
procedure is applied for all methods. We use $1000$ particles in the experiments
of Section~\ref{sec:experiment_calibration}.

\subsection{Bayesian Optimization with \tcgp}
\label{ap:optimization_algo}

Algorithm~\ref{alg:calibrated_bo} summarizes the EI-based BO procedure with
\tcgp\ described in the main text. At iteration $n$, we consider the empirical
quantile $q_{\delta,n}$ as a candidate threshold and estimate the excursion
probability below it. If $\hat p^{\,w}_{q_{\delta,n},n}\ge p_{\min}$, we
set~$t_n=q_{\delta,n}$. Otherwise, we keep the previous threshold. This condition
is only a safeguard against calibrating on too few points as the adaptive design
evolves.

\begin{algorithm}[h!]
\caption{BO with \tcgp}
\label{alg:calibrated_bo}
\begin{algorithmic}[1]
\REQUIRE Initial dataset $\Dcal_{n_0}=\{(X_i,Z_i)\}_{i=1}^{n_0}$ with $Z_i=f(X_i)$,
design space $\X$, budget $N_{\max}$, quantile level $\delta\in(0,1]$, minimum excursion
probability $p_{\min}>0$.
\ENSURE Best value $m_{N_{\max}}$ and associated point $X_{\min}$.

\STATE Set $n\leftarrow n_0$.
\STATE Set $m_n \leftarrow \min_{1\le i\le n} Z_i$ and choose any
$X_{\min}\in\arg\min_{1\le i\le n} Z_i$.
\STATE Set the current threshold $t \leftarrow q_{\delta,n}$.

\WHILE{$n < N_{\max}$}
  \STATE Fit a GP model to $\Dcal_n$ by maximum likelihood and obtain functions
  $x\mapsto f_n(x)$ and $x\mapsto \sigma_n(x)$.
  \STATE Estimate the importance weights $(\tilde w_i)_{i=1}^n$ (Section~\ref{sec:metrics_below_threshold}
  and paragraph below).
  \STATE Set $\tilde t\leftarrow q_{\delta,n}$.
  \STATE Compute the weighted excursion frequency
  $$\hat p^{\,w}_{\tilde t,n}=\sum_{i=1}^n \tilde w_i\,\mathbf{1}\{Z_i\le \tilde t\}.$$
  \IF{$\hat p^{\,w}_{\tilde t,n}\ge p_{\min}$}
    \STATE Update $t \leftarrow \tilde t$.
  \ENDIF
  \STATE Select $(\beta_n,\lambda_n)$ by minimizing $J^{\mathrm{LOO},w}_{t,n}$ on
  $[\beta_0,\beta_1]\times[\lambda_0,\lambda_1]$ (Section~\ref{sec:selection_parameters}
  and paragraph below).
  \STATE Define EI $\rho_n$ using the \tcgp predictive CDFs with parameters
  $(\beta_n,\lambda_n)$ (Section~\ref{sec:opt_calibrated}).
  \STATE Choose $X_{n+1}\in\arg\max_{x\in\X}\rho_n(x)$ (paragraph below).
  \STATE Evaluate $Z_{n+1}=f(X_{n+1})$ and set $\Dcal_{n+1}\leftarrow \Dcal_n\cup\{(X_{n+1},Z_{n+1})\}$.
  \IF{$Z_{n+1} < m_n$}
    \STATE Set $m_{n+1}\leftarrow Z_{n+1}$ and $X_{\min}\leftarrow X_{n+1}$.
  \ELSE
    \STATE Set $m_{n+1}\leftarrow m_n$.
  \ENDIF
  \STATE Set $n\leftarrow n+1$.
\ENDWHILE

\STATE \textbf{Return} $(X_{\min},m_{N_{\max}})$.
\end{algorithmic}
\end{algorithm}

\paragraph{Density-ratio estimation and weights.}
The weighted LOO quantities in Section~\ref{sec:metrics_below_threshold} aim at
approximating $\mu$--averages using the adaptively sampled BO locations
$(X_i)_{i=1}^n$, which are not distributed according to $\mu$. We use an
empirical density-ratio correction as a heuristic for this adaptive design. If
$\nu_n$ denotes the smoothed empirical design density of the BO locations, the
idealized weights have the form
\begin{equation}
w_i \propto \frac{\mathrm{d}\mu}{\mathrm{d}\nu_n}(X_i),
\end{equation}
and are used to approximate, for the integrands $\varphi$ appearing in the
weighted LOO criteria,
\begin{equation}
\int_{\X}\varphi(x)\,\mu(\mathrm{d}x)
\approx
\sum_{i=1}^n \tilde w_i\,\varphi(X_i),
\qquad
\tilde w_i=\frac{w_i}{\sum_{j=1}^n w_j}.
\end{equation}
In our experiments, $\mu$ is uniform on $\X$, hence
\begin{equation}
w_i \propto \frac{1}{\nu_n(X_i)}.
\end{equation}

In practice, we use a KDE estimate $\hat\nu_n$ of this design density.
We rescale the design points $X_i$ to $[0,1]^d$ using the
bounds of $\X$, and fit a Gaussian kernel density estimator (KDE) to the
rescaled points \citep{scott:1992}. The bandwidth is chosen according to Scott's
rule \citep{scott:1992}. We then set
\begin{equation}
w_i \propto \frac{1}{\hat\nu_n(X_i)},
\end{equation}
and normalize to obtain $(\tilde w_i)_{i=1}^n$.

Large density ratios can lead to highly variable weights. A standard safeguard
is weight clipping, for instance $w_i\leftarrow~\min(w_i,w_{\max})$ for a fixed
$w_{\max}>0$, followed by renormalization. We did not use clipping in the
reported experiments.

\paragraph{Selection of $(\beta_n,\lambda_n)$.}
We minimize $J^{\mathrm{LOO},w}_{t_n,n}$ with a two-stage procedure. First, we
sample $(\beta,\lambda)$ uniformly on $[\beta_0,\beta_1]\times~[\lambda_0,
\lambda_1]$
($900$ candidates) and select the best pair $(\tilde\beta,\tilde\lambda)$ according to
$J^{\mathrm{LOO},w}_{t_n,n}$. Second, we refine with the Nelder--Mead  
algorithm \citep{nelder_mead} 
initialized at $(\tilde\beta,\tilde\lambda)$. The supremum over $u\in[0,1]$ in
$J^{\mathrm{LOO},w}_{t_n,n}$ is evaluated on a fixed grid of $u$ values.

\subsection{Test-Set Evaluation of $\mu$--Calibration Metrics}
\label{ap:metrics_testset}

When an independent evaluation set $(X'_j,Z'_j)_{j=1}^L$ is available, with the
$X'_j$ drawn i.i.d.\ from $\mu$ and $Z'_j=f(X'_j)$, $\mu$--calibration metrics
can be computed without leave-one-out or reweighting. The predictive CDFs
$\hat F_n(\cdot\mid x)$ are constructed from $\Dcal_n$ only.

\paragraph{Direct test-set estimators.}
Fix $t\in\R$. Define
\begin{equation}
U'_{t,j}
=
\begin{cases}
\hat F_n(Z'_j\mid X'_j)\big/\hat F_n(t\mid X'_j), & Z'_j\le t,\\
1, & Z'_j>t,
\end{cases}
\end{equation}
with the convention that $\hat F_n(t\mid X'_j)>0$ when $Z'_j\le t$. Assume that
$\sum_{j=1}^L \mathbf{1}\{Z'_j\le t\}>0$ and set
\begin{equation}
G'_{t,L}(u)
=
\frac{
\sum_{j=1}^L \mathbf{1}\{Z'_j\le t\}\,\mathbf{1}\{U'_{t,j}\le u\}
}{
\sum_{j=1}^L \mathbf{1}\{Z'_j\le t\}
},
\qquad
J'_{\mathrm{tKS\text{-}PIT},L}(\hat F_n\mid t)
=
\sup_{u\in[0,1]}\left|G'_{t,L}(u)-u\right|.
\end{equation}
The excursion probability $p_t=\P(f(X)\le t)$ is estimated by
\begin{equation}
\hat p_{t,L}
=
\frac{1}{L}\sum_{j=1}^L \mathbf{1}\{Z'_j\le t\},
\end{equation}
and the corresponding occurrence discrepancy estimator is
\begin{equation}
r'_{t,L}
=
\biggl|
\hat p_{t,L}
-
\frac{1}{L}\sum_{j=1}^L \hat F_n(t\mid X'_j)
\biggr|.
\end{equation}

\paragraph{SMC-based estimators for small thresholds.}
For small $t$ (in particular $t=m_n$ at later BO iterations), direct sampling
from $\mu$ yields too few points with $f(X)\le t$ to estimate $G_{\mu,t}$ and
$p_t$ accurately. We therefore use subset simulation implemented via sequential
Monte Carlo (SMC) \citep{bect2017bayesian} to obtain particles approximately 
distributed according to $\mu(\cdot\mid f(X)\le t)$
and an estimate of $p_t=\P(f(X)\le t)$.

At the initial iteration with $n_0$ observations, we run subset simulation down 
to $t=m_{n_0}$ to compute an estimate $\hat p_{m_{n_0}}$ and a particle cloud
$(\tilde X_i^{(0)})_{i=1}^m\approx \mu(\cdot\mid f(X)\le m_{n_0})$. We evaluate
tKS--PIT at $m_{n_0}$ using $(\tilde X_i^{(0)})_{i=1}^m$, and we evaluate the
occurrence discrepancy using $\hat p_{m_{n_0}}$.

When BO produces a new best value $m_n<m_{n-1}$, we do not rerun subset
simulation from scratch. We reuse the particles available at level $m_{n-1}$ as
an initial population and perform additional SMC steps targeting the new
excursion set~$\{x:\, f(x)\le m_n\}$. This yields updated particles
$(\tilde X_i^{(n)})_{i=1}^m\approx \mu(\cdot\mid f(X)\le m_n)$ together with an
updated estimate $\hat p_{m_n}$. The particle cloud is used to form the
empirical distribution of the $\mu$-tPIT below $m_n$ (and hence tKS--PIT), while
$\hat p_{m_n}$ is used in the occurrence discrepancy.

In all BO experiments, we use $m=1000$ particles and Metropolis--Hastings moves 
for the Markov transition steps.

\subsection{Reweighting for \bcrgp}
\label{ap:bcrgp_reweight}

The \bcrgp\ approach of \citet{pion:2025} specifies a generalized normal model
for standardized residuals and places a uniform prior on
$\theta=(\beta,\lambda)$. We recall the resulting posterior construction and
describe how we incorporate importance weights to target $\mu$-probabilistic
calibration over $\X$. In our experiments, global calibration diagnostics for
\bcrgp\ are reported using KS--PIT, i.e., the limit of tKS--PIT as $t\to\infty$.

Let $R_{n,-i}(X_i)$ denote the leave-one-out standardized residual at $X_i$,
\begin{equation}
R_{n,-i}(X_i)=\frac{Z_i-f_{n,-i}(X_i)}{\sigma_{n,-i}(X_i)},
\end{equation}
where $f_{n,-i}$ and $\sigma_{n,-i}$ are the GP predictive mean and standard
deviation obtained from $\Dcal_n\setminus\{(X_i,Z_i)\}$. Let
$p(r\mid\theta)$ be the density of $\mathcal{GN}(\beta,0,\lambda)$ evaluated at
$r$.

With a uniform prior on $(0,a)\times(0,b)$, \bcrgp\ defines
\begin{equation}
p \left(\theta \,\middle|\, \{R_{n,-i}(X_i)\}_{i=1}^n\right)
\propto
\prod_{i=1}^n p\!\left(R_{n,-i}(X_i)\mid\theta\right)\,
\mathbf{1}\{0<\beta<a\}\,\mathbf{1}\{0<\lambda<b\}.
\end{equation}

Let $\hat\nu_n$ be the density estimate used above. To incorporate importance
weights, we replace the standard likelihood by a weighted likelihood, following
\citet{Shimodaira:2000}. Let $(\omega_i)_{i=1}^n$ be nonnegative weights proportional
to $1/\hat\nu_n(X_i)$
(Section~\ref{sec:metrics_below_threshold}). For the weighted likelihood, the
global scale of the exponents affects posterior concentration, so we fix the
normalization
\begin{equation}
\bar w_i = n\,\omega_i\bigg/\sum_{j=1}^n \omega_j,
\qquad
\sum_{i=1}^n \bar w_i=n.
\end{equation}
The resulting pseudo-posterior is
\begin{equation}
p_w \left(\theta \,\middle|\, \{R_{n,-i}(X_i)\}_{i=1}^n\right)
\propto
\prod_{i=1}^n p\!\left(R_{n,-i}(X_i)\mid\theta\right)^{\bar w_i}\,
\mathbf{1}\{0<\beta<a\}\,\mathbf{1}\{0<\lambda<b\}.
\end{equation}

Given Monte Carlo samples $(\beta_i,\lambda_i)_{i=1}^m$, \citet{pion:2025}
derive a Bayesian selection rule for $(\beta^*,\lambda^*)$ targeting
probabilistic calibration, quantified by the KS--PIT up to a prescribed
quantile. To limit computational cost, we set $m=200$ in this work. Increasing
$m$ further had negligible impact on BO performance.

\subsection{Experimental Parameter Settings}
\label{ap:parameters_settings}

We summarize below the main parameters used in all numerical experiments.

\paragraph{Test-set evaluation parameters}
\begin{itemize}
\item Test-set evaluation of calibration metrics: The subset simulation
procedure uses $k_0 = 1000$ particles to estimate~$\P(f(X)\le m_n)$ and to
generate samples from $\mu(\cdot\mid f(X)\le m_n)$.

\item Occurrence discrepancy: The occurrence discrepancy $r_{m_n,n}$ is computed
using an additional $k_1 = 1000$ points sampled uniformly over~$\XX$.

\item tKS--PIT computation: To compute the tKS--PIT, we subsample
  $k_2 = 900$ particles from the subset simulation output, ensuring
  that all corresponding function values lie below the current
  threshold $m_n$ at each iteration.
\end{itemize}

\paragraph{Model parameters}

\begin{itemize}
    \item GP prior regularity: We use a Matérn covariance kernel with 
    smoothness parameter $\nu = p + \tfrac{1}{2}$, with $p = 2$, corresponding 
    to twice mean-square differentiable sample paths. All GP hyperparameters 
    are estimated by maximum likelihood.
    \item Parameter bounds for calibration: The calibration parameters are 
    constrained to
\[
\lambda \in [\lambda_0,\lambda_1] = [5\times 10^{-3},\, 10],
\qquad
\beta \in [\beta_0,\beta_1] = [0.1,\, 10].
\]
    \item $p_{\min} = 0.015$.
\end{itemize}

\subsection{Test Functions Used in the Experiments}
\label{ap:test_functions}

\begin{table*}[h!]
  \centering
  \setlength\tabcolsep{2pt}
  \footnotesize
\begin{tabularx}{\textwidth}{l l Y}
\toprule
\textbf{Name} & \textbf{Domain} & \textbf{Expression of $f(x)$}\\
\midrule
Goldstein--Price & $[-2,2]^2$ &
$(1 + (x_1 + x_2 + 1)^2(19 - 14x_1 + 3x_1^2 - 14x_2 + 6x_1x_2 + 3x_2^2))(30 + (2x_1 - 3x_2)^2(18 - 32x_1 + 12x_1^2 + 48x_2 - 36x_1x_2 + 27x_2^2))$ \\
Rosenbrock & $[-5,10]^d$ &
$\sum_{i=1}^{d-1} \bigl[100(x_{i+1}-x_i^2)^2 + (x_i-1)^2\bigr]$ \\
Ackley & $[-32.768,32.768]^d$ &
$-20\exp\left(-0.2\sqrt{\frac{1}{d}\sum_{i=1}^d x_i^2}\right) - \exp\left(\frac{1}{d}\sum_{i=1}^d \cos(2\pi x_i)\right) + 20 + e$ \\
Dixon--Price--$d$ & $[-10,10]^d$ &
$(x_1 - 1)^2 + \sum_{i=2}^{d} i(2x_i^2 - x_{i-1})^2$ \\
Hartmann--$d$ & $[0,1]^d$ &
$-\sum_{i=1}^{4} c_i^{(d)} \exp\bigl(-\sum_{j=1}^{d} a_{ij}^{(d)}(x_j - p_{ij}^{(d)})^2\bigr)$ \\
Michalewicz--$d$ & $[0,\pi]^d$ & $-\sum_{i=1}^{d} \sin(x_i)\sin^{2m}(i x_i^2/\pi)$ \\
Cross-In-Tray & $[-10, 10]^2$ & $-0.0001\left(\left|\sin(x_1)\sin(x_2)\exp\left(\left|100 - \frac{\sqrt{x_1^2 + x_2^2}}{\pi}\right|\right)\right|+ 1 \right)^{0.1}$ \\
Shekel--$m$ & $[0, 10]^4$ & $-\sum_{i=1}^{m} \left(\sum_{j=1}^4 (x_j - C_{ji})^2 + \beta_i \right)^{-1}$ \\
Perm--$d$ & $[-d, d]^d$ & $\sum_{i=1}^d \left(\sum_{j=1}^d (j+\beta_{\mathrm P})\left(x_j^i - \frac{1}{j^i}\right)\right)^2$ \\
\bottomrule
\end{tabularx}

\caption{Test functions used in the numerical experiments.
  Each function is continuous, deterministic, and defined on the domain~$\XX$.
  Expressions and domains follow the standard definitions provided in
  \citet{surjanovic_bingham_optimization}, with parameter values specified
  below.}
\label{tab:test_functions}
\end{table*}

\medskip
\noindent\textbf{Hartmann parameters.}
\scriptsize
$$
a^{(3)}=\begin{bmatrix}
3.0 & 10.0 & 30.0\\
0.1 & 10.0 & 35.0\\
3.0 & 10.0 & 30.0\\
0.1 & 10.0 & 35.0
\end{bmatrix},\quad
c^{(3)}=\begin{bmatrix}1.0 \\ 1.2 \\ 3.0 \\ 3.2\end{bmatrix},
\quad
p^{(3)}=10^{-4}\begin{bmatrix}
3689 & 1170 & 2673\\
4699 & 4387 & 7470\\
1091 & 8732 & 5547\\
381 & 5743 & 8828
\end{bmatrix}.
$$
$$
a^{(6)}=\begin{bmatrix}
10 & 3 & 17 & 3.5 & 1.7 & 8\\
0.05 & 10 & 17 & 0.1 & 8 & 14\\
3 & 3.5 & 1.7 & 10 & 17 & 8\\
17 & 8 & 0.05 & 10 & 0.1 & 14
\end{bmatrix},\quad
c^{(6)}=\begin{bmatrix}1.0 \\ 1.2 \\ 3.0 \\ 3.2\end{bmatrix},\quad
p^{(6)}=10^{-4}\begin{bmatrix}
1312 & 1696 & 5569 & 124 & 8283 & 5886\\
2329 & 4135 & 8307 & 3736 & 1004 & 9991\\
2348 & 1451 & 3522 & 2883 & 3047 & 6650\\
4047 & 8828 & 8732 & 5743 & 1091 & 381
\end{bmatrix}.
$$

\normalsize
\medskip
\noindent\textbf{Michalewicz parameter.}
\scriptsize
$$m=10$$

\normalsize
\medskip
\noindent\textbf{Perm parameter.}
\scriptsize
$$\beta_{\mathrm P}=1$$

\normalsize
\medskip
\noindent\textbf{Shekel parameters.}
\scriptsize
$$
\beta = \frac{1}{10}(1,2,2,4,4,6,3,7,5,5)^T
$$

$$
C=\begin{bmatrix}
4 & 1 & 8 & 6 & 3 & 2 & 5 & 8 & 6 & 7 \\
4 & 1 & 8 & 6 & 7 & 9 & 3 & 1 & 2 & 3.6 \\
4 & 1 & 8 & 6 & 3 & 2 & 5 & 8 & 6 & 7 \\
4 & 1 & 8 & 6 & 7 & 9 & 3 & 1 & 2 & 3.6
\end{bmatrix}.
$$
\normalsize

\section{Additional Experiments}
\label{sec:additional-results}

\subsection{Running Time of the Methods}
\label{sec:computation-time}

We report the computing time per iteration for each predictive model, excluding
maximization of the sampling criterion. The times are similar for GP, \tcgp,
\onego, and \bcrgp. \regp is about ten times slower.

\begin{figure}[htbp]
    \centering
        \includegraphics[width=\textwidth]{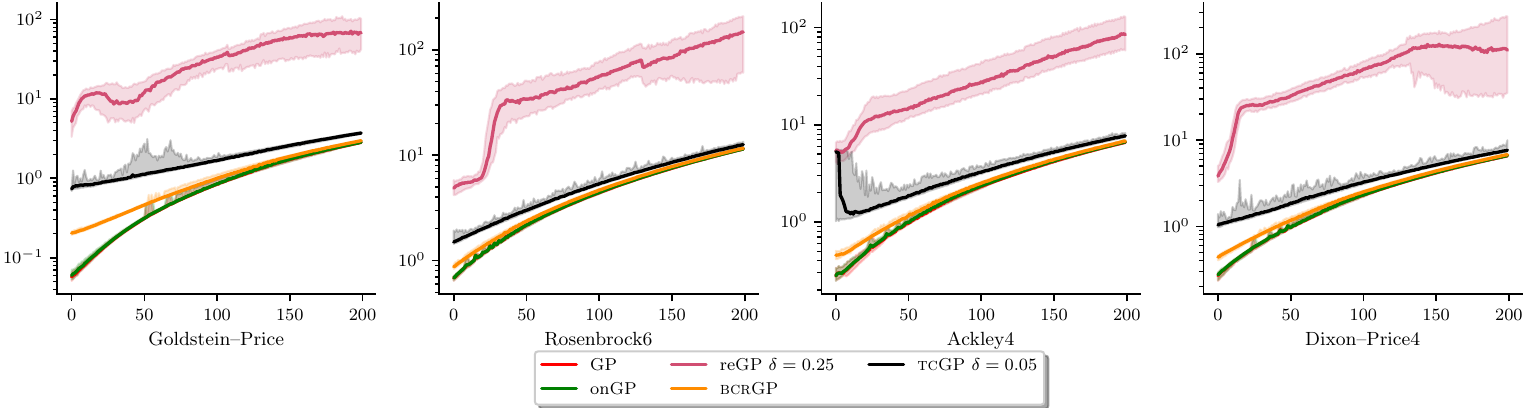}
        \caption{Computing time per BO iteration for GP, \tcgp, \onego, \bcrgp,
        and \regp, excluding maximization of the sampling criterion.}
\end{figure}

\subsection{Alternative Selection Criteria for \tcgp}
\label{ap:objective}

\begin{figure}[htbp]
    \centering
    \begin{subfigure}[b]{0.48\textwidth}
        \centering
        \includegraphics[width=\textwidth]{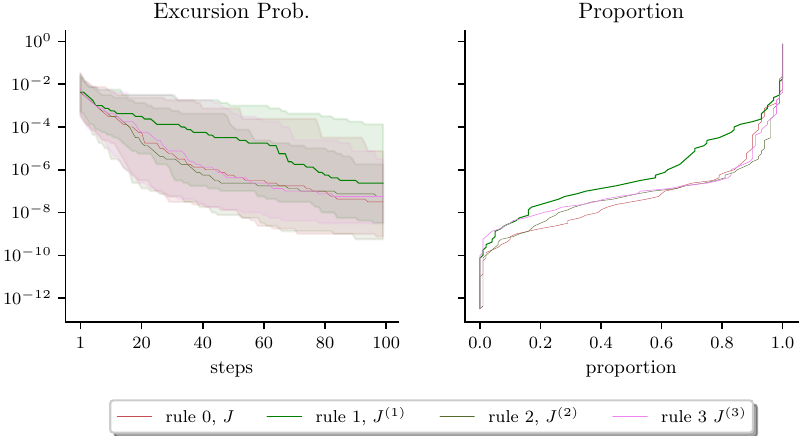}
        \caption{Ackley with $d=4$}
    \end{subfigure}
    \hfill
     \begin{subfigure}[b]{0.48\textwidth}
        \centering
        \includegraphics[width=\textwidth]{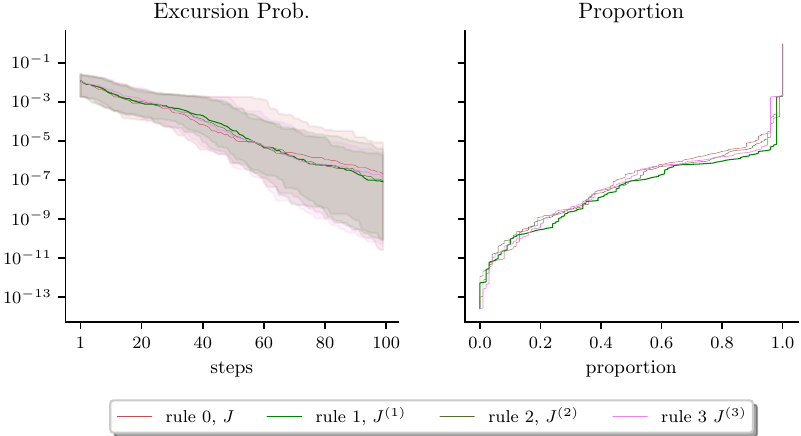}
        \caption{Dixon--Price with $d=4$}
    \end{subfigure}
    \hfill
    \begin{subfigure}[b]{0.48\textwidth}
        \centering
        \includegraphics[width=\textwidth]{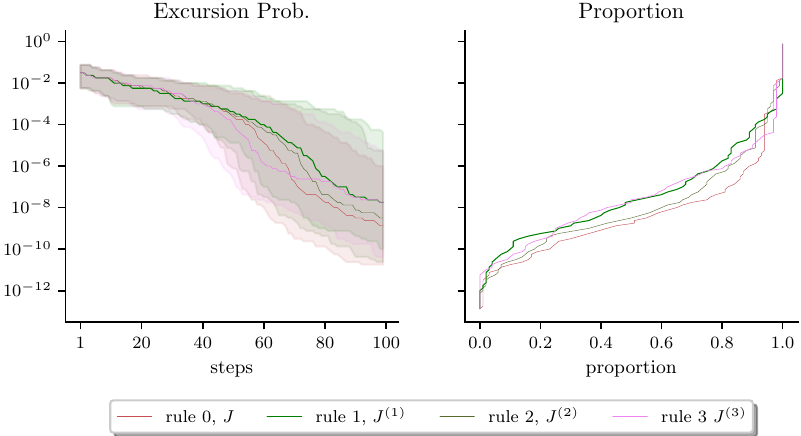}
        \caption{Goldstein--Price}
    \end{subfigure}
    \hfill
    \begin{subfigure}[b]{0.48\textwidth}
        \centering
        \includegraphics[width=\textwidth]{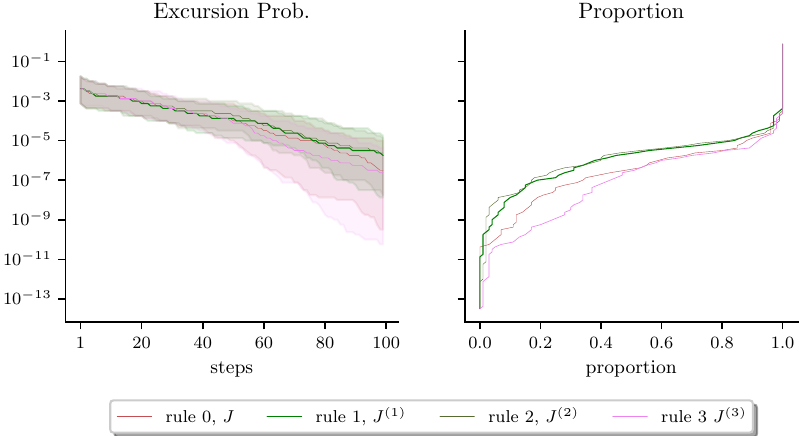}
        \caption{Rosenbrock with $d=6$}
    \end{subfigure}
    \caption{Comparison of \tcgp\ variants. Left: median and 10\%/90\% quantiles
    of $p_{m_n}=\P(f(X)\le m_n)$, where $m_n$ is the best observed value after $n$
    evaluations and $X\sim\mu$. Right: fraction of runs reaching the prescribed
    target value.}
    \label{fig:objective_function}
\end{figure}

We compare several objectives for selecting $(\beta,\lambda)$ in \tcgp, with the
aim of controlling thresholded $\mu$-calibration and occurrence calibration
below $t=q_{\delta,n}$. 
Rule~0 is the criterion $J$ defined in \eqref{eq:objective_tcgp}. 
Figure~\ref{fig:objective_function} reports BO results over $100$ iterations with
$\delta=0.1$.

A natural variant of \eqref{eq:objective_tcgp} replaces the target
$u\mapsto u\,\kappa_t^{\beta,\lambda}$ by the truncated map
\begin{equation}
u \mapsto \min \left(u\,\kappa_t^{\beta,\lambda},\,1\right).
\end{equation}
When $\kappa_t^{\beta,\lambda}>1$, truncation saturates the target at $1$.
Large values of $\kappa_t^{\beta,\lambda}$ then become weakly penalized, and
the objective can become insensitive to overestimation of lower-tail mass. We do
not use this variant.

\paragraph{Rule 1.}
Multiply the weighted LOO tKS--PIT metric by the occurrence discrepancy:
\begin{equation}
J^{(1)}(\beta,\lambda)
=
J^{\mathrm{LOO},w}_{\mathrm{tKS\text{-}PIT},n}\!\left(\hat F_n^{\beta,\lambda}\mid t\right)\, r^{\mathrm{LOO},w}_{t,n} .
\end{equation}
This product is zero whenever either component is zero and therefore does not
by itself enforce both calibration conditions.

\paragraph{Rule 2.}
Following \citet{allen:2025}, define
\begin{equation}
c_t(u)
=
\frac{\P_n\left(U_t^{\beta,\lambda}\le u \mid f(X)\le t\right)}
{\kappa_t^{\beta,\lambda}}.
\end{equation}
Then use the KS-type deviation
\begin{equation}
J^{(2)}(\beta,\lambda)
=
\sup_{u\in[0,1]}
\left|c_t(u)-u\right|.
\end{equation}

\paragraph{Rule 3.}
Compare the conditional tPIT curve to the rescaled map
$u\mapsto u/\kappa_t^{\beta,\lambda}$:
\begin{equation}
J^{(3)}(\beta,\lambda)
=
\sup_{u\in[0,1]}
\left|
\P_n\left(U_t^{\beta,\lambda}\le u \mid f(X)\le t\right)
-
\frac{u}{\kappa_t^{\beta,\lambda}}
\right|.
\end{equation}

\paragraph{Choice of objective.}
Figure~\ref{fig:objective_function} reports BO performance for the different
selection rules. Performance is summarized by~$p_{m_n}=\P(f(X)\le~m_n)$, where
$m_n$ is the best value observed after $n$ evaluations and $X\sim\mu$ (with $\mu$
uniform), and by the fraction of runs reaching the prescribed target.

Performance varies across test functions. On Ackley, Rule~1 gives a higher
median $p_{m_n}$ than Rules~0, 2, and 3, which are close. On Dixon--Price, the
four rules give similar results. On Goldstein--Price, Rule~0 gives the lowest
final median, whereas Rule~3 gives the lowest median on Rosenbrock.
We retain Rule~0 for the main experiments because its criterion directly
penalizes deviations of $\kappa_t^{\beta,\lambda}$ from $1$ without additional
tuning.

\subsection{Effect of the Choice of $\delta$ for \regp on BO Performance}
\label{ap:delta_on_regp}

\begin{figure}[htbp]
    \centering
    \begin{subfigure}[b]{0.48\textwidth}
        \centering
        \includegraphics[width=\textwidth]{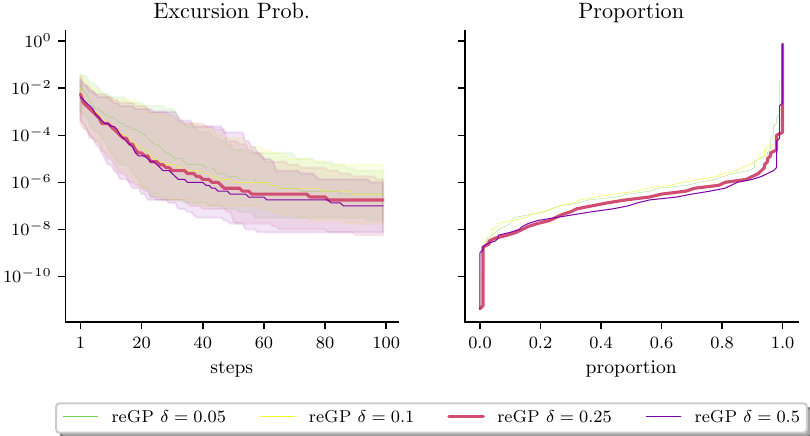}
        \caption{Ackley with $d=4$}
    \end{subfigure}
    \hfill
    \begin{subfigure}[b]{0.48\textwidth}
        \centering
        \includegraphics[width=\textwidth]{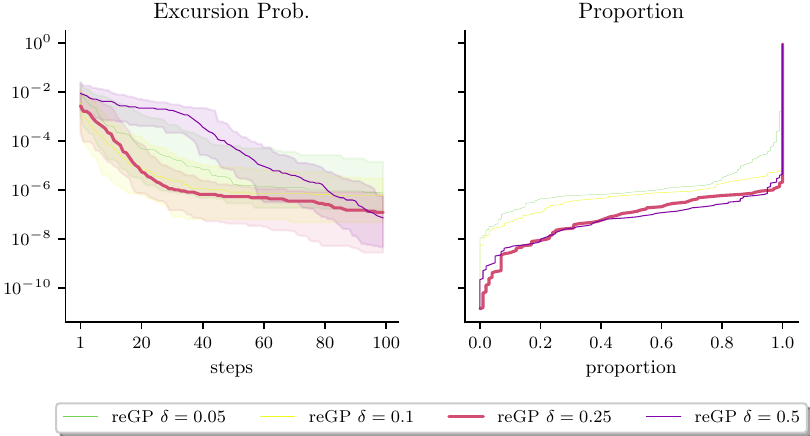}
        \caption{Dixon--Price with $d=4$}
    \end{subfigure}
    \hfill
    \begin{subfigure}[b]{0.48\textwidth}
        \centering
        \includegraphics[width=\textwidth]{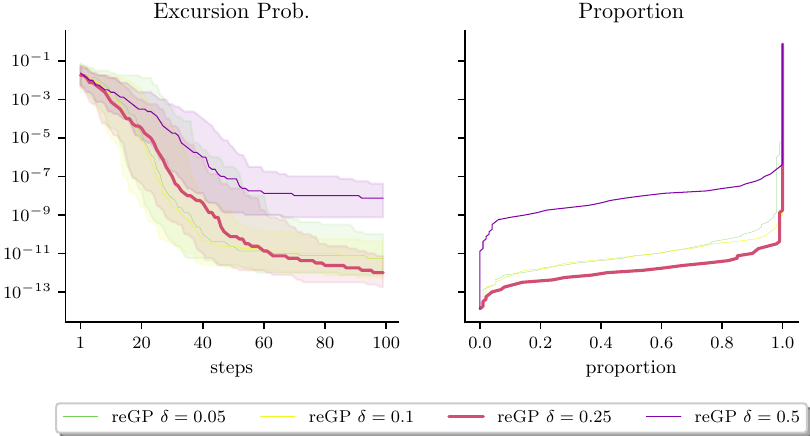}
        \caption{Goldstein--Price}
    \end{subfigure}
    \hfill
    \begin{subfigure}[b]{0.48\textwidth}
        \centering
        \includegraphics[width=\textwidth]{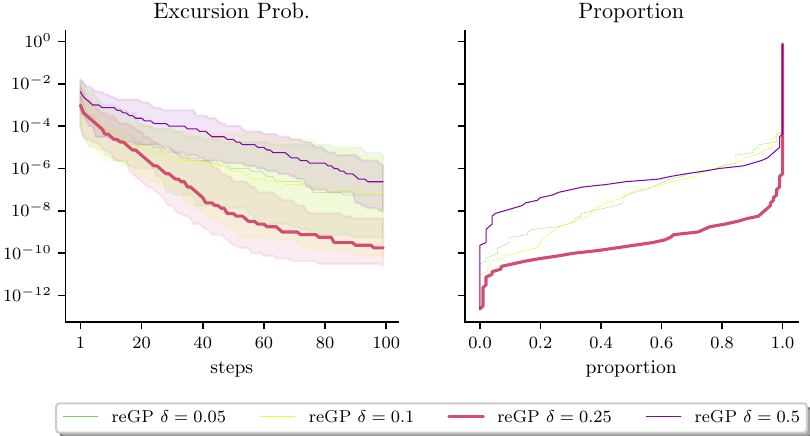}
        \caption{Rosenbrock with $d=6$}
    \end{subfigure}
    \caption{Sensitivity of \regp\ to $\delta$ (results shown after $n=100$ BO iterations).
    Left: median and 10\%/90\% quantiles of $p_{m_n}=~\P(f(X)\le m_n)$, where $m_n$ is the
    best value observed after $n$ evaluations and $X\sim\mu$. Right: fraction of runs
    reaching the prescribed target level.}
    \label{fig:delta_regp}
\end{figure}

Figure~\ref{fig:delta_regp} studies the sensitivity of \regp to the threshold
parameter $\delta$. BO performance is summarized by $p_{m_n}=\P(f(X)\le m_n)$,
where $m_n$ is the best value observed after $n$ evaluations and $X\sim\mu$ (with
$\mu$ uniform), and by the fraction of runs reaching the prescribed target.

Empirically, a large value of $\delta$ (e.g., $\delta=0.5$) consistently leads
to degraded performance across the considered problems. In contrast, $\delta=0.25$
achieves the best overall performance and is consistently comparable to, or better
than, smaller values such as $\delta=0.05$ and $\delta=0.1$. We therefore use
$\delta=0.25$ for all \regp experiments.

\subsection{Effect of the Choice of $\delta$ for \tcgp\ on BO Performance}
\label{ap:delta_on_tcgp}

\begin{figure}[htbp]
    \centering
    \begin{subfigure}[b]{0.48\textwidth}
        \centering
        \includegraphics[width=\textwidth]{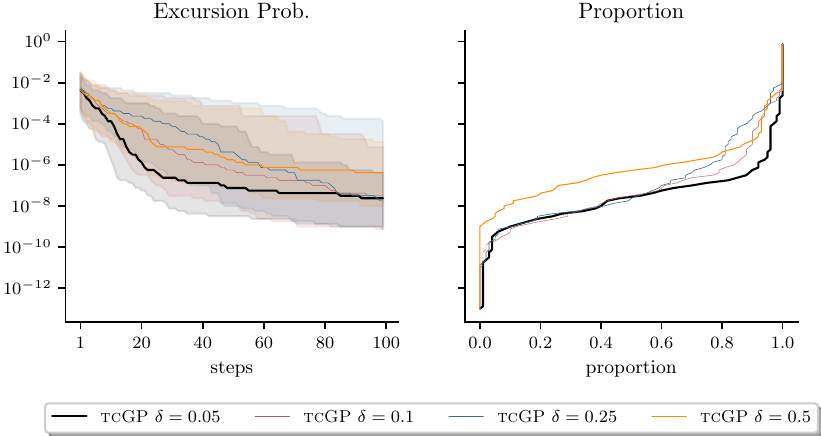}
        \caption{Ackley with $d=4$}
    \end{subfigure}
    \hfill
    \begin{subfigure}[b]{0.48\textwidth}
        \centering
        \includegraphics[width=\textwidth]{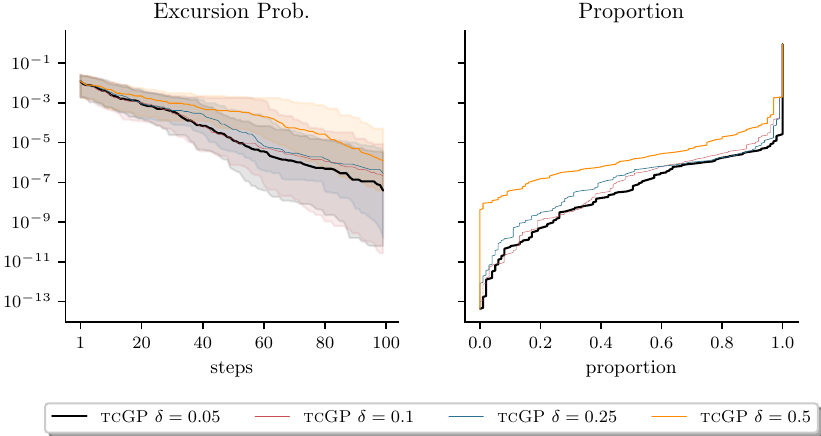}
        \caption{Dixon--Price with $d=4$}
    \end{subfigure}
    \hfill
    \begin{subfigure}[b]{0.48\textwidth}
        \centering
        \includegraphics[width=\textwidth]{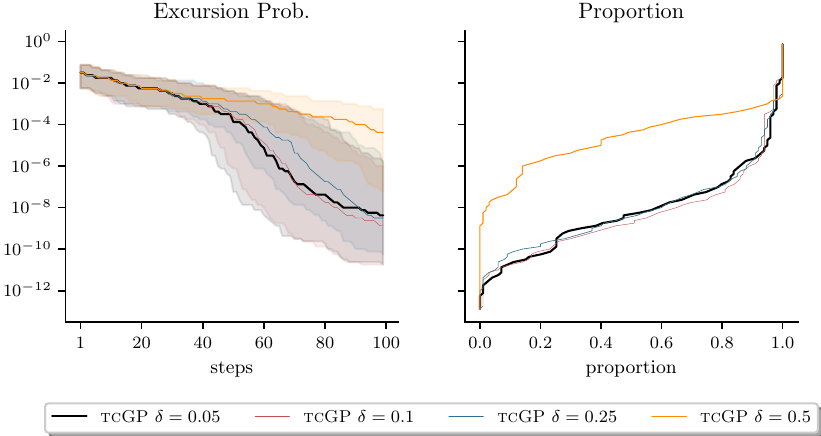}
        \caption{Goldstein--Price}
    \end{subfigure}
    \hfill
    \begin{subfigure}[b]{0.48\textwidth}
        \centering
        \includegraphics[width=\textwidth]{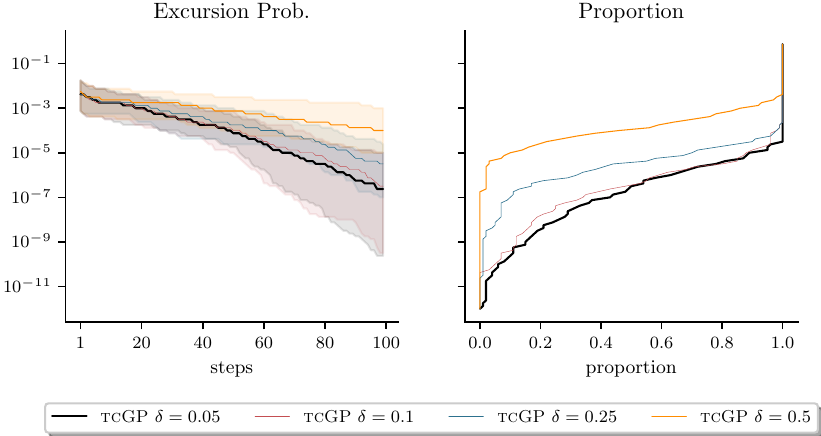}
        \caption{Rosenbrock with $d=6$}
    \end{subfigure}
    \caption{Sensitivity of \tcgp\ to $\delta$ (results shown after $n=100$ BO iterations).
    Left: median and 10\%/90\% quantiles of $p_{m_n}=~\P(f(X)\le m_n)$, where $m_n$ is the
    best value observed after $n$ evaluations and $X\sim\mu$. Right: fraction of runs
    reaching the prescribed target level.}
    \label{fig:delta_tcgp}
\end{figure}

Figure~\ref{fig:delta_tcgp} studies the influence of the threshold parameter
$\delta$ on \tcgp. Performance is summarized by $p_{m_n}=\P(f(X)\le m_n)$, where
$m_n$ is the best value observed after $n$ evaluations and $X\sim\mu$ (with $\mu$
uniform), and by the fraction of runs reaching the prescribed target.

Empirically, $\delta=0.5$ gives the highest final median $p_{m_n}$ on all four
functions. On Ackley, $\delta=0.05$ produces the fastest decrease, although
$\delta\in\{0.05,0.1,0.25\}$ gives similar medians at $n=100$. On
Dixon--Price, the final median decreases as $\delta$ decreases. On
Goldstein--Price, $\delta=0.1$ gives the lowest final median, although
the bands for $\delta\in\{0.05,0.1,0.25\}$ overlap. On Rosenbrock,
$\delta=0.05$ and $0.1$ give lower final medians than $\delta=0.25$ and $0.5$.
We therefore use $\delta=0.05$ for all \tcgp\ experiments.

\subsection{Calibration on a Fixed Dataset}
\label{ap:fixed_n_exp}

\subsubsection{Calibration below the Prescribed Threshold}
\label{sec:calibr-below-prescr}

We evaluate whether \tcgp, either with the combined objective or with objectives
targeting only thresholded or occurrence calibration, improves
calibration at the operational threshold $t_n=q_{\delta,n}$ used to select
$(\beta,\lambda)$, with $\delta\in\{0.25,0.1,0.05\}$.

The GP setup matches Section~\ref{sec:experiment_calibration}, except that we
work with a fixed training dataset $\Dcal_n$ of size $n=30d$ obtained from
uniform sampling on $\X$. For each test function, we generate $100$ independent
datasets and report averages over the replicates.

For a given $\delta$, calibration and predictive quality below $q_{\delta,n}$
are assessed using three metrics: twCRPS (jointly measuring calibration and
sharpness below $q_{\delta,n}$), $r_{t,n}$ (occurrence calibration at $t_n$), and
tKS--PIT (thresholded $\mu$--calibration below~$t_n$). The tKS--PIT is computed
using $4000$ points sampled conditional on $f(x)\le q_{\delta,n}$ by subset simulation
\citep[see Section~2 of][]{bect2017bayesian}. In addition, $4000$ points are
sampled uniformly on $\X$ to estimate $r_{t,n}$ and twCRPS.

Results are reported in Table~\ref{tab:occ_tkspit_results}. Overall, \tcgp\
improves tKS--PIT for most functions and values of $\delta$. The exceptions
are Hartmann ($d=6$) and Ackley ($d=4$) at $\delta=0.25$, and Ackley at
$\delta=0.1$. The twCRPS changes little except on Goldstein--Price, where it
decreases for all three values of $\delta$. For Rosenbrock and Dixon--Price, the
GP baseline $r_{t,n}$ is of order $10^{-2}$ and $10^{-3}$, respectively, and \tcgp\
leaves these values nearly unchanged. The local variant \tcgploc\ improves or
matches tKS--PIT throughout, but can worsen $r_{t,n}$ (e.g., Ackley with $d=4$ at
$\delta\in\{0.1,0.05\}$). The marginal variant \tcgpmar\ often improves $r_{t,n}$,
but worsens it for Dixon--Price at all three values of $\delta$ and for Hartmann
at $\delta=0.05$. It degrades tKS--PIT on several test functions.

\begin{table}[htbp]
  \centering
  \setlength\tabcolsep{2pt}
  \footnotesize

\begin{tabularx}{\textwidth}{l>{\columncolor{gray!15}}X>{\columncolor{gray!15}}X>{\columncolor{gray!15}}X>{\columncolor{white}}X>{\columncolor{white}}X>{\columncolor{white}}X>{\columncolor{gray!15}}X>{\columncolor{gray!15}}X>{\columncolor{gray!15}}X>{\columncolor{white}}X>{\columncolor{white}}X>{\columncolor{white}}X>{\columncolor{gray!15}}X>{\columncolor{gray!15}}X>{\columncolor{gray!15}}X}
\toprule
\textbf{Method} & \multicolumn{3}{c}{Goldstein Price} & \multicolumn{3}{c}{Rosenbrock6} & \multicolumn{3}{c}{Hartmann6} & \multicolumn{3}{c}{Dixon Price 4} & \multicolumn{3}{c}{Ackley4} \\
~ & \tiny twCRPS & \tiny $r_{t,n}$ & \tiny tKS--PIT & \tiny twCRPS & \tiny $r_{t,n}$ & \tiny tKS--PIT & \tiny twCRPS & \tiny $r_{t,n}$ & \tiny tKS--PIT & \tiny twCRPS & \tiny $r_{t,n}$ & \tiny tKS--PIT & \tiny twCRPS & \tiny $r_{t,n}$ & \tiny tKS--PIT \\
\midrule \midrule
\multicolumn{16}{l}{\textit{$n = 30d$, $t=q_{\delta,n}$, $\delta = 0.25$}} \\
\midrule
\textsc{gp} & 6.9e2 & 0.04 & 0.64 & \fnsb{6.0e3} & 0.01 & 0.16 & \fnsb{0.049} & 0.095 & \fnsb{0.15} & 2.7e2 & \fnsb{0.0059} & 0.16 & \fnsb{0.12} & 0.093 & 0.14 \\
\tcgp & \fnsb{5.7e2} & \fnsb{0.036} & \fnsb{0.27} & \fnsb{6.0e3} & 0.01 & \fnsb{0.14} & 0.052 & 0.06 & 0.28 & \fnsb{2.6e2} & \fnsb{0.0059} & \fnsb{0.13} & \fnsb{0.12} & 0.027 & 0.24 \\
\tcgpmar & 6.1e2 & \fnsb{0.036} & 0.52 & 8.9e3 & \fnsb{0.0095} & 0.36 & 0.059 & \fnsb{0.046} & 0.79 & 4.3e2 & 0.0071 & 0.42 & 0.13 & \fnsb{0.018} & 0.51 \\
\tcgploc & 5.8e2 & \fnsb{0.036} & \fnsb{0.27} & \fnsb{6e3} & 0.01 & \fnsb{0.14} & 0.05 & 0.11 & \fnsb{0.15} & \fnsb{2.6e2} & \fnsb{0.0059} & \fnsb{0.13} & \fnsb{0.12} & 0.078 & \fnsb{0.13} \\
\midrule
\multicolumn{16}{l}{\textit{$n = 30d$, $t=q_{\delta,n}$, $\delta = 0.1$}} \\
\midrule
\textsc{gp} & 6.0e2 & 0.11 & 0.83 & \fnsb{2.8e3} & \fnsb{0.011} & 0.3 & \fnsb{0.027} & 0.018 & 0.27 & 1.1e2 & \fnsb{0.0047} & 0.22 & \fnsb{0.063} & 0.042 & \fnsb{0.19} \\
\tcgp & \fnsb{5.1e2} & \fnsb{0.065} & 0.48 & \fnsb{2.8e3} & \fnsb{0.011} & \fnsb{0.27} & \fnsb{0.027} & 0.021 & 0.24 & \fnsb{1.0e2} & 0.0051 & \fnsb{0.19} & 0.066 & 0.021 & 0.25 \\
\tcgpmar & 5.2e2 & \fnsb{0.065} & 0.68 & 3.4e3 & \fnsb{0.011} & 0.58 & 0.031 & \fnsb{0.01} & 0.67 & 1.6e2 & 0.0057 & 0.44 & 0.071 & \fnsb{0.012} & 0.52 \\
\tcgploc& \fnsb{5.1e2} & \fnsb{0.065} & \fnsb{0.41} & 2.9e3 & \fnsb{0.011} & \fnsb{0.27} & 0.031 & 0.096 & \fnsb{0.12} & \fnsb{1.0e2} & 0.0053 & \fnsb{0.19} & 0.085 & 0.12 & \fnsb{0.19} \\
\midrule
\multicolumn{16}{l}{\textit{$n = 30d$, $t=q_{\delta,n}$, $\delta = 0.05$}} \\
\midrule
\textsc{gp} & 5.8e2 & 0.16 & 0.91 & \fnsb{1.8e3} & 0.023 & 0.39 & \fnsb{0.016} & \fnsb{0.0088} & 0.39 & 56 & \fnsb{0.005} & 0.32 & \fnsb{0.042} & 0.014 & 0.29 \\
\tcgp \tiny $\delta=0.05$ & 5e2 & \fnsb{0.089} & 0.65 & \fnsb{1.8e3} & \fnsb{0.02} & 0.34 & \fnsb{0.016} & 0.016 & 0.18 & \fnsb{55} & 0.0054 & \fnsb{0.26} & 0.044 & 0.025 & 0.28 \\
\tcgpmar & \fnsb{4.9e2} & 0.09 & 0.75 & 2.1e3 & \fnsb{0.02} & 0.61 & 0.017 & 0.011 & 0.39 & 64 & 0.0056 & 0.48 & 0.045 & \fnsb{0.011} & 0.42 \\
\tcgploc & \fnsb{4.9e2} & \fnsb{0.089} & \fnsb{0.62} & \fnsb{1.8e3} & \fnsb{0.02} & \fnsb{0.31} & 0.023 & 0.09 & \fnsb{0.15} & 56 & 0.0059 & 0.27 & 0.14 & 0.16 & \fnsb{0.22} \\
\midrule
\bottomrule
\end{tabularx}
        \caption{Calibration below $t=q_{\delta,n}$, assessed using the twCRPS,
        $r_{t,n}$, and tKS--PIT, for $\delta\in\{0.25,0.1,0.05\}$. We compare
        \tcgp\ with the combined objective, \tcgploc\ with tKS--PIT,
        \tcgpmar\ with $r^{\mathrm{LOO},w}_{t,n}$, and the standard GP baseline.
        Within each $\delta$ block, the smallest value in each column is
        underlined.
        The twCRPS and $r_{t,n}$ are evaluated on a test grid
        $(X_i,f(X_i))_{i=1}^{L_0}$, with $L_0=4000$ and uniformly distributed
        $X_i$. The tKS--PIT is approximated on another test grid
        $(\tilde X_i,f(\tilde X_i))_{i=1}^{L_1}$, with $L_1=4000$, where
        $f(\tilde X_i)\le q_{\delta,n}$ and the $\tilde X_i$ are sampled using a
        subset simulation algorithm \citep{bect2017bayesian}.}
        \label{tab:occ_tkspit_results}
\end{table}

\subsubsection{Comparison between Multiple Methods}

We evaluate whether \regp\ and \tcgp\ improve calibration below the current best
value $m_n$ relative to a standard GP model. As above, we consider
a fixed training dataset $\Dcal_n$ of size $n=30d$ obtained from uniform
sampling on $\X$. Results are reported in Table~\ref{tab:comparison}. \regp\
attains the lowest value of each metric on Goldstein--Price,
Rosenbrock, and Dixon--Price. On Hartmann and Ackley, \tcgp\ attains the lowest
tKS--PIT, whereas every \regp\ variant has a higher tKS--PIT than the GP
baseline.

On Ackley, \tcgp\ gives the best tKS--PIT at $m_n$, with $r_{m_n,n}$ comparable to
\regp, while improvements at the calibration threshold $t=q_{\delta,n}$ (with
$\delta=0.05$) were more limited. This suggests that calibration below an
operational threshold $t_n$ can still translate into gains at more extreme
levels $t\le t_n$.

\begin{table}[htbp]
  \centering
  \setlength\tabcolsep{2pt}
  \footnotesize
\begin{tabularx}{\textwidth}{l>{\columncolor{gray!15}}X>{\columncolor{gray!15}}X>{\columncolor{gray!15}}X>{\columncolor{white}}X>{\columncolor{white}}X>{\columncolor{white}}X>{\columncolor{gray!15}}X>{\columncolor{gray!15}}X>{\columncolor{gray!15}}X>{\columncolor{white}}X>{\columncolor{white}}X>{\columncolor{white}}X>{\columncolor{gray!15}}X>{\columncolor{gray!15}}X>{\columncolor{gray!15}}X}
\toprule
\textbf{Method} & \multicolumn{3}{c}{Goldstein Price} & \multicolumn{3}{c}{Rosenbrock6} & \multicolumn{3}{c}{Hartmann6} & \multicolumn{3}{c}{Dixon Price 4} & \multicolumn{3}{c}{Ackley4} \\
~ & \tiny twCRPS & \tiny $r_{m_n,n}$ & \tiny tKS--PIT & \tiny twCRPS & \tiny $r_{m_n,n}$ & \tiny tKS--PIT & \tiny twCRPS & \tiny $r_{m_n,n}$ & \tiny tKS--PIT & \tiny twCRPS & \tiny $r_{m_n,n}$ & \tiny tKS--PIT & \tiny twCRPS & \tiny $r_{m_n,n}$ & \tiny tKS--PIT \\
\midrule \midrule
\multicolumn{16}{l}{\textit{$n = 30d$}} \\
\midrule
\textsc{gp} & 5.7e2 & 0.2 & 0.99 & 6.8e2 & 0.036 & 0.77 & 0.0016 & 0.0043 & 0.61 & 16 & 0.015 & 0.83 & \fnsb{0.012} & 0.0061 & 0.72 \\
\tcgp \tiny $\delta=0.05$ & 4.9e2 & 0.12 & 0.78 & 6.2e2 & 0.027 & 0.56 & \fnsb{0.0015} & 0.0033 & \fnsb{0.33} & 14 & 0.011 & 0.68 & \fnsb{0.012} & 0.0047 & \fnsb{0.49} \\
\tcgp \tiny $\delta=0.1$ & 4.8e2 & 0.12 & 0.72 & 6.3e2 & 0.03 & 0.62 & \fnsb{0.0015} & 0.004 & 0.59 & 15 & 0.012 & 0.74 & \fnsb{0.012} & 0.0059 & 0.66 \\
\tcgp \tiny $\delta=0.25$ & 4.5e2 & 0.13 & 0.87 & 6.2e2 & 0.031 & 0.67 & 0.0016 & 0.0044 & 0.83 & 14 & 0.012 & 0.73 & \fnsb{0.012} & 0.0063 & 0.64 \\
\regp  \tiny $\delta = 0.05$ & \fnsb{1.9} & 0.021 & 0.77 & \fnsb{39} & 0.0039 & 0.66 & 0.0042 & 0.0048 & 0.89 & \fnsb{5.3} & \fnsb{0.0047} & 0.74 & 0.013 & 0.0046 & 0.95 \\
\regp \tiny $\delta = 0.1$ & 2.9 & \fnsb{0.019} & \fnsb{0.61} & 54 & \fnsb{0.0035} & \fnsb{0.43} & 0.0038 & 0.0038 & 0.84 & 11 & 0.0055 & \fnsb{0.66} & 0.013 & \fnsb{0.0043} & 0.89 \\
\regp \tiny $\delta = 0.25$ & 6.7 & 0.029 & 0.69 & 4.1e2 & 0.018 & 0.79 & 0.0018 & \fnsb{0.0032} & 0.72 & 25 & 0.015 & 0.82 & \fnsb{0.012} & 0.006 & 0.76 \\
\midrule
\bottomrule
\end{tabularx}
        \caption{Comparison of calibration using the twCRPS, $r_{m_n,n}$, and tKS--PIT
        below $t = m_n$ for \tcgp with multiple $\delta$, \regp, and the
        standard GP. The calibration methods use thresholds based on
        $q_{\delta,n}$, while the metrics are evaluated at $m_n$.
        The smallest value in each column is underlined.
        The twCRPS and $r_{m_n,n}$ are evaluated on a test grid
        $(X_i, f(X_i))_{i=1}^{L_0}$, with
        $L_0=4000$, and the $X_i$s are uniformly distributed. tKS--PIT is
        approximated using another test grid
        $(\tilde X_i, f(\tilde X_i))_{i=1}^{L_1}$, with $L_1=4000$, where
        $f(\tilde X_i) \le m_n$ and the $\tilde X_i$ are sampled using a subset
        simulation algorithm \citep{bect2017bayesian}.}
        \label{tab:comparison}
\end{table}

\subsection{Additional Results for EI and UCB}
\label{app:add_results_samp_crit}

\subsubsection{Additional Results for EI}
\label{app:add_res_ei}
In this section, we report results on additional test functions in
Figures~\ref{fig:add_1} and~\ref{fig:add_2}. BO performance
is summarized using two complementary views: the attained-level probability
$p_{m_n}=\P(f(X)\le m_n)$, where $m_n$ is the best observed value after $n$
iterations and $X\sim\mu$ (here $\mu$ is uniform on $\X$), and the fraction
of runs that reach a prescribed target level.

Overall, \regp\ and \tcgp\ consistently improve upon the standard GP baseline
and typically outperform \bcrgp\ and \onego. An exception is Ackley ($d=10$),
where \onego\ achieves the best performance. On the smoother (nearly convex)
objectives, \regp\ and \tcgp\ are the top-performing methods; on the Perm
function, \tcgp\ yields only modest gains over the GP baseline compared to
\regp. On highly non-convex functions, \tcgp\ most often provides the largest
improvements.

Figure~\ref{fig:add_3} reports diagnostic experiments that extend the benchmarks
in the main text to moderate-dimensional test functions ($d=10$ to $d=20$) over
$150$ BO iterations.

\begin{figure}[htbp]
    \centering
    \begin{subfigure}[b]{0.49\textwidth}
        \centering
        \includegraphics[width=\textwidth]{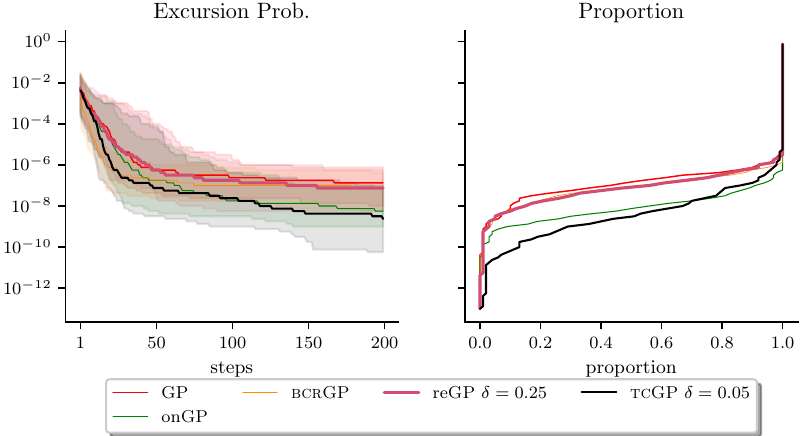}
        \caption{Ackley with $d=4$}
    \end{subfigure}
    \hfill
    \begin{subfigure}[b]{0.49\textwidth}
        \centering
        \includegraphics[width=\textwidth]{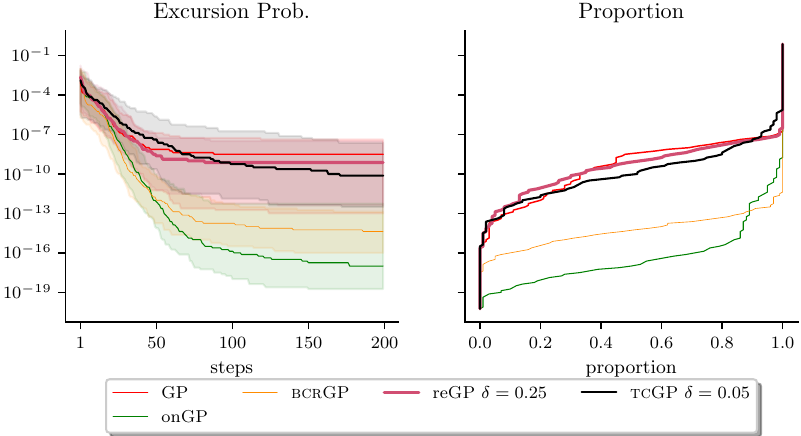}
        \caption{Ackley with $d=10$}
    \end{subfigure}
    \hfill
    \begin{subfigure}[b]{0.49\textwidth}
        \centering
        \includegraphics[width=\textwidth]{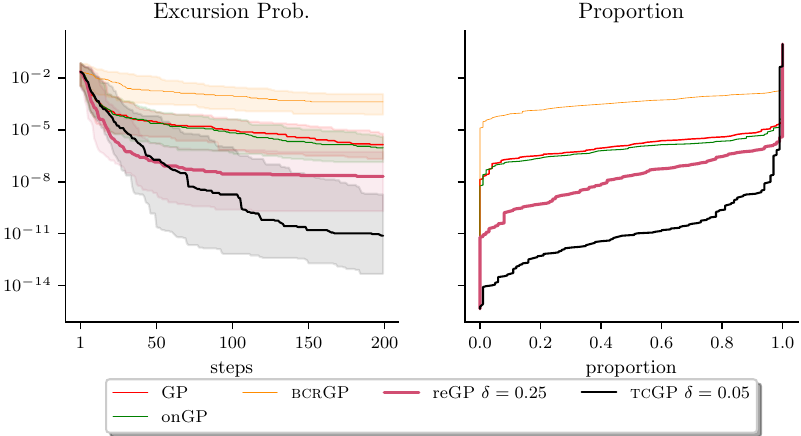}
        \caption{Cross-In-Tray}
    \end{subfigure}
    \begin{subfigure}[b]{0.49\textwidth}
        \centering
        \includegraphics[width=\textwidth]{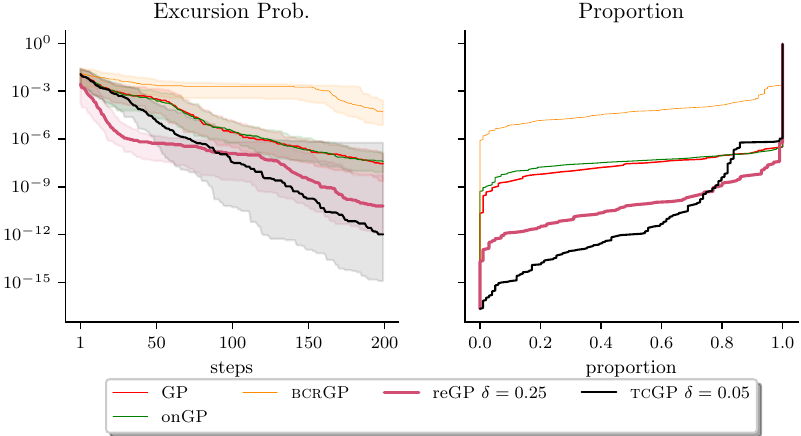}
        \caption{Dixon--Price with $d=4$}
    \end{subfigure}
    \begin{subfigure}[b]{0.49\textwidth}
        \centering
        \includegraphics[width=\textwidth]{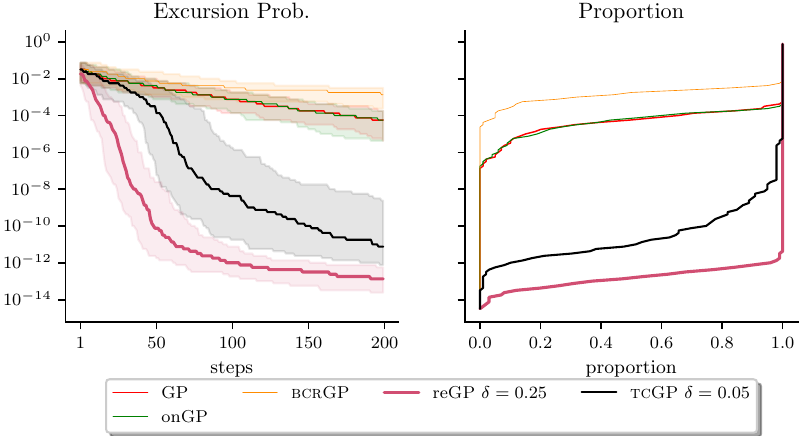}
        \caption{Goldstein--Price}
    \end{subfigure}
    \hfill
    \begin{subfigure}[b]{0.49\textwidth}
        \centering
        \includegraphics[width=\textwidth]{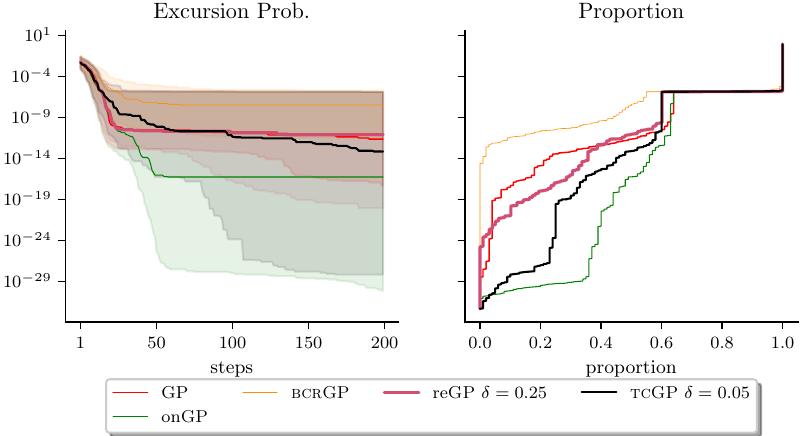}
        \caption{Hartmann with $d=6$}
    \end{subfigure}
    \caption{Comparison of BO performance using EI for GP, \onego, \bcrgp,
    \regp\ with $\delta=0.25$, and \tcgp\ with $\delta=0.05$. Left:
    median and 10\%/90\% quantiles of $p_{m_n} = \P(f(X) \le m_n)$, where
    $m_n$ is the best observed value so far. Right: fraction of
    successful runs reaching a prescribed target
    level.}
    \label{fig:add_1}
\end{figure}

\begin{figure}[htbp]
    \centering
    
    \hfill
    \begin{subfigure}[b]{0.49\textwidth}
        \centering
        \includegraphics[width=\textwidth]{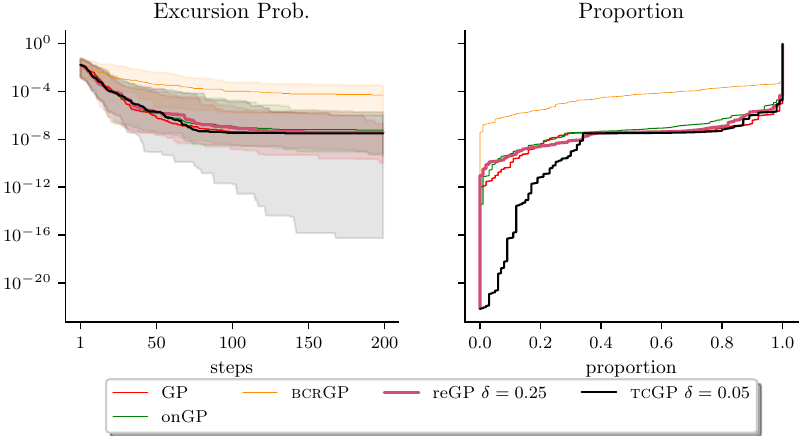}
        \caption{Michalewicz with $d=4$}
    \end{subfigure}
    \hfill
    \begin{subfigure}[b]{0.49\textwidth}
        \centering
        \includegraphics[width=\textwidth]{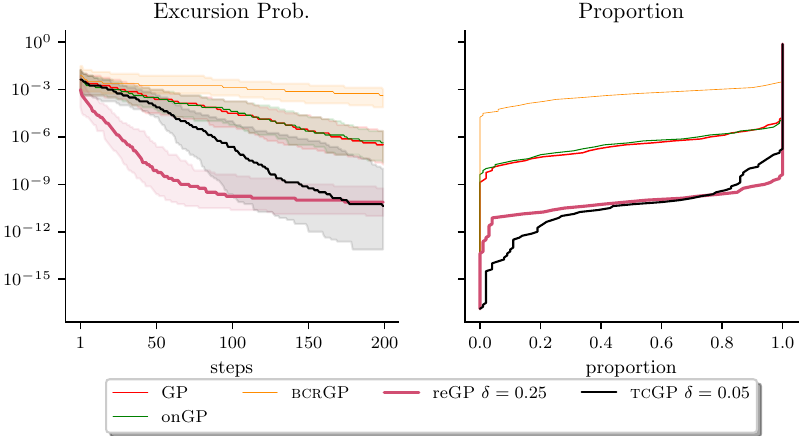}
        \caption{Rosenbrock with $d=6$}
    \end{subfigure}
    \hfill
    \begin{subfigure}[b]{0.49\textwidth}
        \centering
        \includegraphics[width=\textwidth]{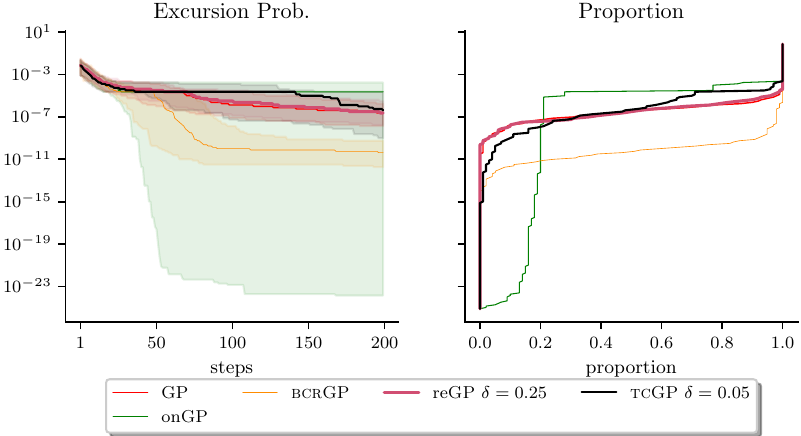}
        \caption{Shekel with $m=10$}
    \end{subfigure}
    \hfill
    \begin{subfigure}[b]{0.49\textwidth}
        \centering
        \includegraphics[width=\textwidth]{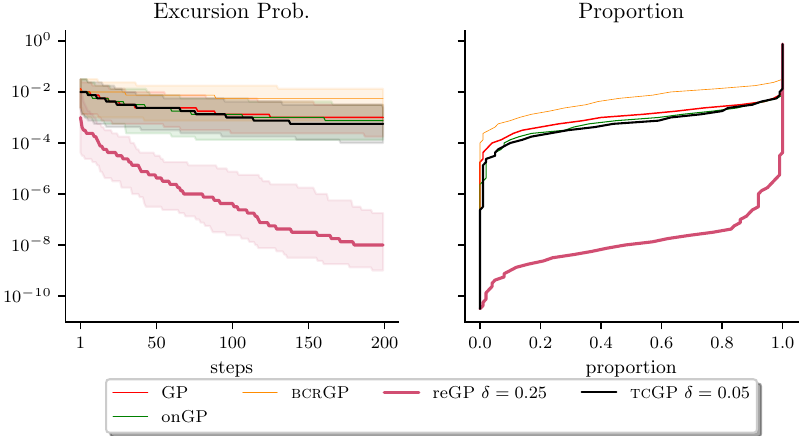}
        \caption{Perm with $d=6$}
    \end{subfigure}
    \caption{Comparison of BO performance using EI for GP, \onego, \bcrgp,
    \regp\ with $\delta=0.25$, and \tcgp\ with $\delta=0.05$. Left:
    median and 10\%/90\% quantiles of $p_{m_n} = \P(f(X) \le m_n)$, where
    $m_n$ is the best observed value so far. Right: fraction of
    successful runs reaching a prescribed target
    level.}
    \label{fig:add_2}
\end{figure}

\begin{figure}[htbp]
    \centering
    
    \hfill
    \begin{subfigure}[b]{0.49\textwidth}
        \centering
        \includegraphics[width=\textwidth]{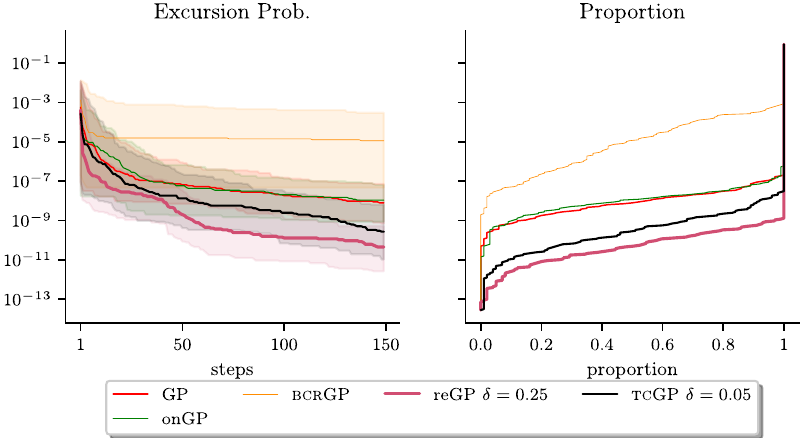}
        \caption{Dixon--Price with $d=10$}
    \end{subfigure}
    \hfill
    \begin{subfigure}[b]{0.49\textwidth}
        \centering
        \includegraphics[width=\textwidth]{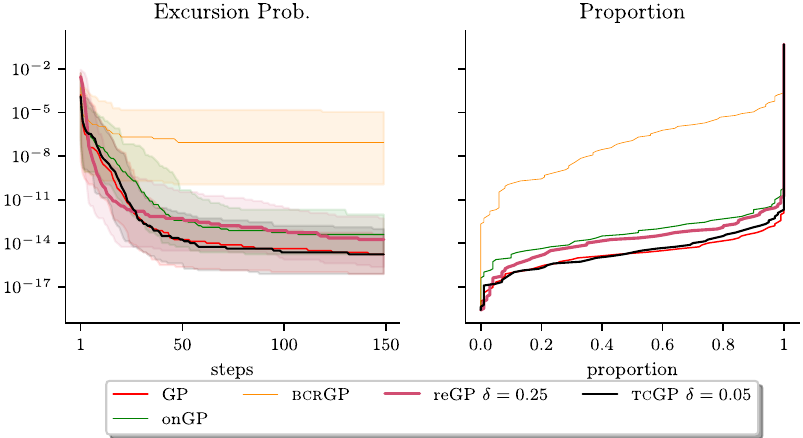}
        \caption{Dixon--Price with $d=20$}
    \end{subfigure}
    \hfill
    \begin{subfigure}[b]{0.49\textwidth}
        \centering
        \includegraphics[width=\textwidth]{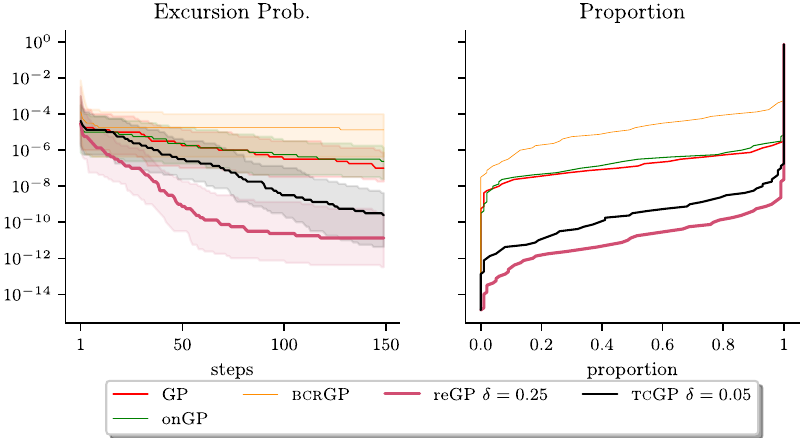}
        \caption{Rosenbrock with $d=10$}
    \end{subfigure}
    \hfill
    \begin{subfigure}[b]{0.49\textwidth}
        \centering
        \includegraphics[width=\textwidth]{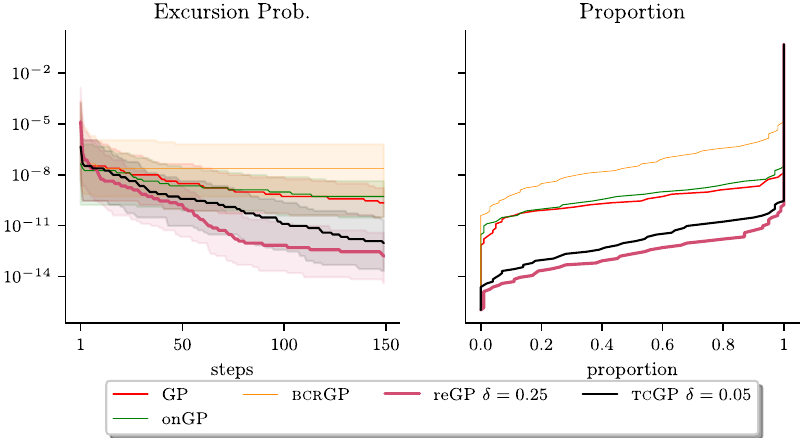}
        \caption{Rosenbrock with $d=15$}
    \end{subfigure}
    \caption{Comparison of BO performance in dimensions $d=10$ to $d=20$ over
    $150$ iterations using EI for GP, \onego, \bcrgp, \regp\ with
    $\delta=0.25$, and \tcgp\ with $\delta=0.05$. Left:
    median and 10\%/90\% quantiles of $p_{m_n} = \P(f(X) \le m_n)$, where
    $m_n$ is the best observed value so far. Right: fraction of
    successful runs reaching a prescribed target
    level.}
    \label{fig:add_3}
\end{figure}

\subsubsection{Results for UCB}
\label{app:add_res_ucb}

We compare UCB-based BO with a standard GP model, \onego, \bcrgp, \regp, and
\tcgp\ \citep{lairobbins1985, auer2002finite, srinivas2010:_ucb}, to assess
whether tail calibration also affects a criterion other than EI.
The GP and \tcgp\ forms are given below.

As above, BO performance is summarized by the probability of
excursion $p_{m_n}= \P(f(X) \le m_n)$, where $m_n$ is the best observed
value after $n$ iterations, and by the fraction of runs reaching a prescribed
target level, which provides a
complementary view of method reliability.

\paragraph{UCB with a GP model.}
Fix $\varepsilon\in(0,1/2)$. With a Gaussian predictive distribution, the
$(1-\varepsilon)$ lower confidence bound used for minimization is
\begin{equation}
    f_n(x) - \Phi^{-1}(1-\varepsilon)\,\sigma_n(x),
\end{equation}
where $\Phi$ denotes the standard normal CDF.

\paragraph{UCB with \tcgp.}
With \tcgp\ parameters $(\beta,\lambda)$, the predictive distribution is
generalized normal, and the corresponding $(1-\varepsilon)$ lower confidence
bound becomes
\begin{equation}
    f_n(x) - \Theta^{-1}_{\beta,0,1}(1-\varepsilon)\,\lambda\,\sigma_n(x),
\end{equation}
where $\Theta_{\beta,0,1}$ is the CDF defined in~\eqref{eq:bcrgp-cdf}.

Results are reported in Figures~\ref{fig:add_1_ucb} and~\ref{fig:add_2_ucb} 
with UCB and $\varepsilon=0.1$.
Overall, \tcgp\ outperforms the GP baseline on most test functions. On Ackley
($d=4$), the gains are modest, and the two methods give similar results for
$d=10$.

\begin{figure}[htbp]
    \centering
    \begin{subfigure}[b]{0.49\textwidth}
        \centering
        \includegraphics[width=\textwidth]{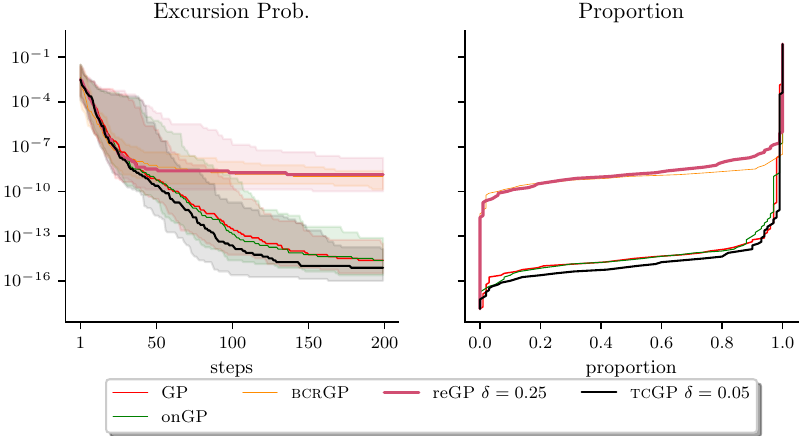}
        \caption{Ackley with $d=4$}
    \end{subfigure}
    \hfill
    \begin{subfigure}[b]{0.49\textwidth}
        \centering
        \includegraphics[width=\textwidth]{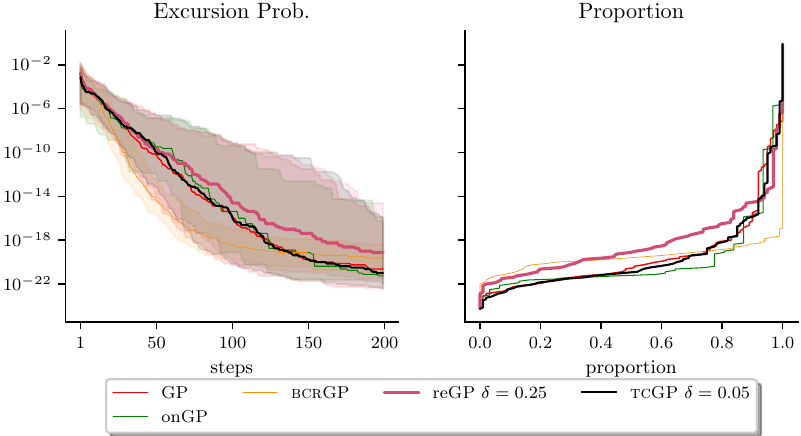}
        \caption{Ackley with $d=10$}
    \end{subfigure}
    \hfill
    \begin{subfigure}[b]{0.49\textwidth}
        \centering
        \includegraphics[width=\textwidth]{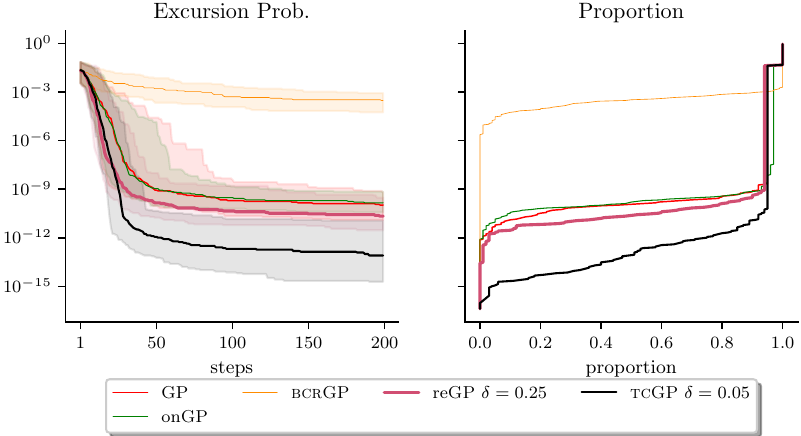}
        \caption{Cross-In-Tray}
    \end{subfigure}
    \begin{subfigure}[b]{0.49\textwidth}
        \centering
        \includegraphics[width=\textwidth]{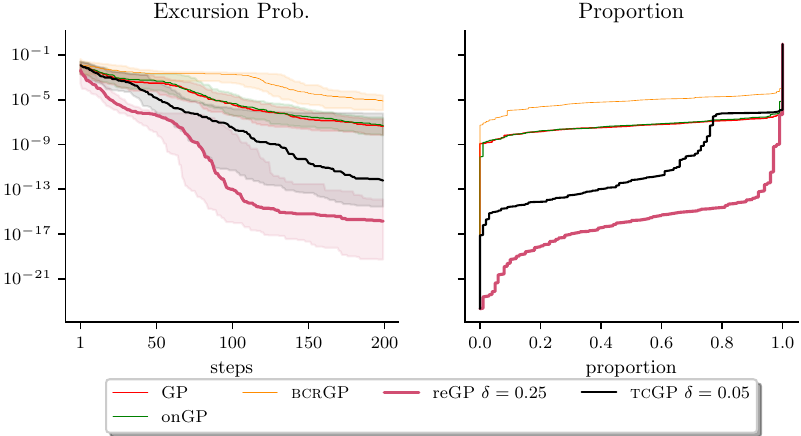}
        \caption{Dixon--Price with $d=4$}
    \end{subfigure}
    \begin{subfigure}[b]{0.49\textwidth}
        \centering
        \includegraphics[width=\textwidth]{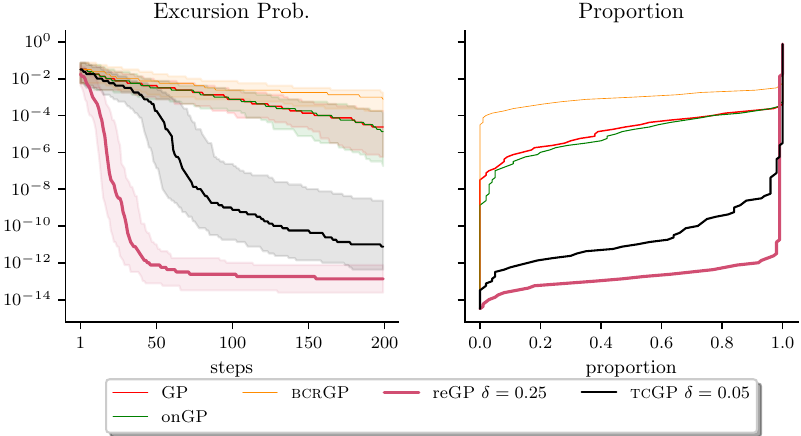}
        \caption{Goldstein--Price}
    \end{subfigure}
    \hfill
    \begin{subfigure}[b]{0.49\textwidth}
        \centering
        \includegraphics[width=\textwidth]{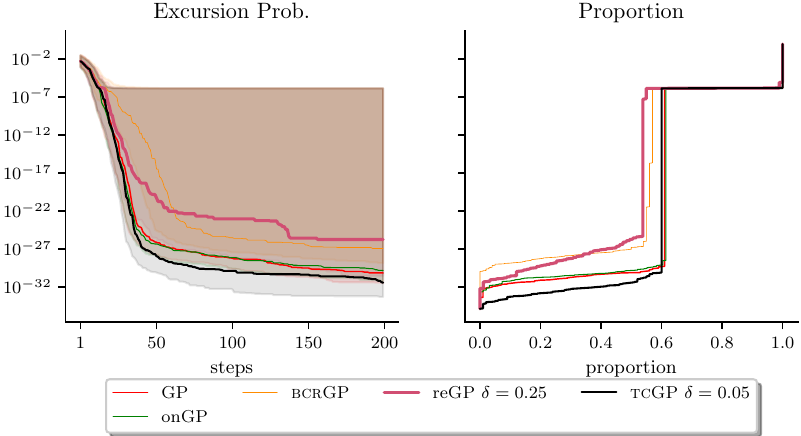}
        \caption{Hartmann with $d=6$}
    \end{subfigure}
    \caption{Comparison of BO performance using UCB with $\varepsilon=0.1$ for
    GP, \onego, \bcrgp, \regp\ with $\delta=0.25$, and \tcgp\
    with~$\delta=0.05$. Left:
    median and 10\%/90\% quantiles of $p_{m_n} = \P(f(X) \le m_n)$, where
    $m_n$ is the best observed value so far. Right: fraction of
    successful runs reaching a prescribed target
    level.}
    \label{fig:add_1_ucb}
\end{figure}

\begin{figure}[htbp]
    \centering
    
    \hfill
    \begin{subfigure}[b]{0.49\textwidth}
        \centering
        \includegraphics[width=\textwidth]{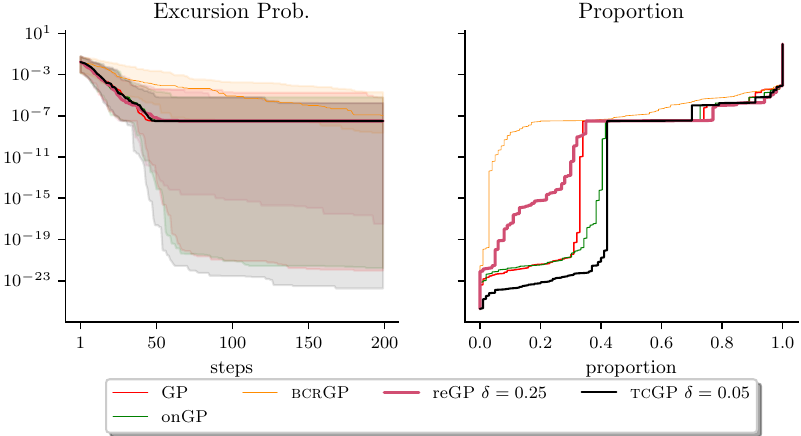}
        \caption{Michalewicz with $d=4$}
    \end{subfigure}
    \hfill
    \begin{subfigure}[b]{0.49\textwidth}
        \centering
        \includegraphics[width=\textwidth]{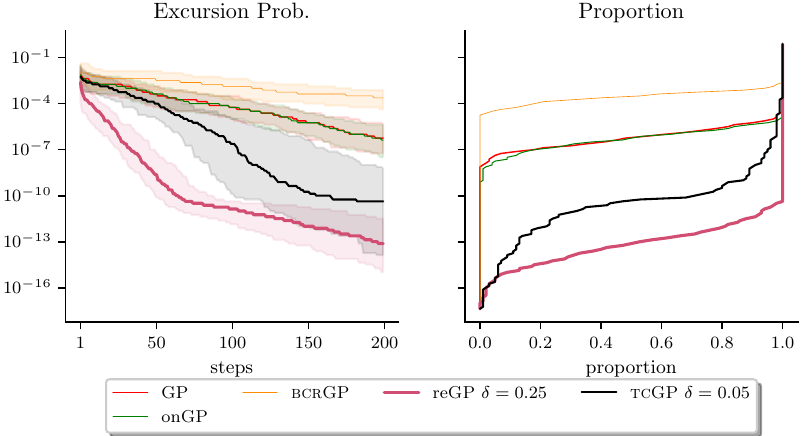}
        \caption{Rosenbrock with $d=6$}
    \end{subfigure}
    \hfill
    \begin{subfigure}[b]{0.49\textwidth}
        \centering
        \includegraphics[width=\textwidth]{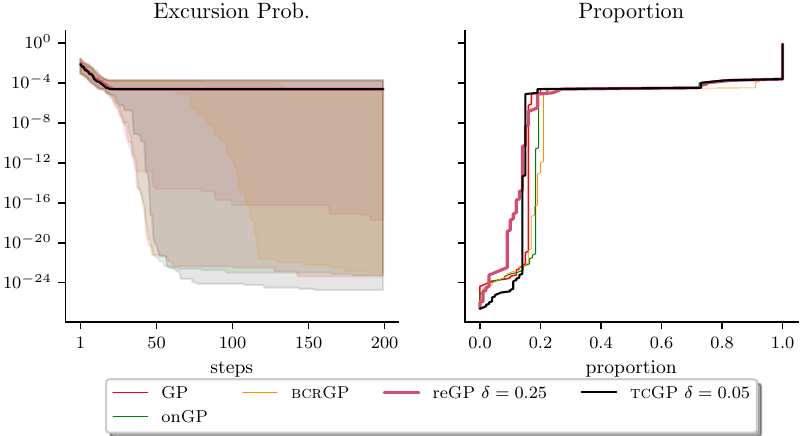}
        \caption{Shekel with $m=10$}
    \end{subfigure}
    \hfill
    \begin{subfigure}[b]{0.49\textwidth}
        \centering
        \includegraphics[width=\textwidth]{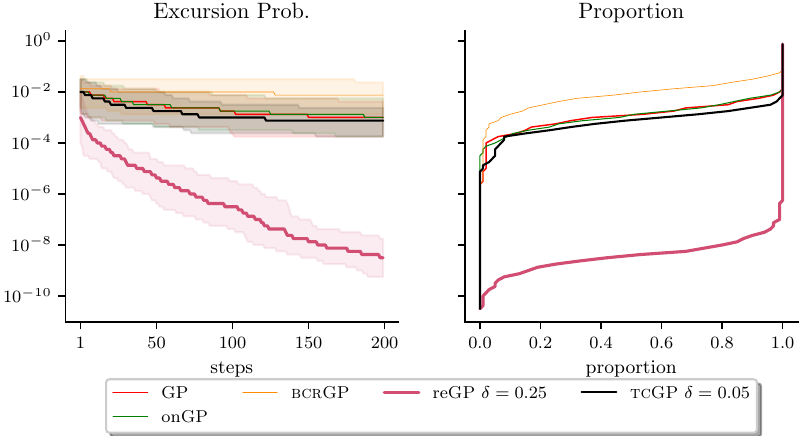}
        \caption{Perm with $d=6$}
    \end{subfigure}
    \caption{Comparison of BO performance using UCB with $\varepsilon=0.1$ for
    GP, \onego, \bcrgp, \regp\ with $\delta=0.25$, and \tcgp\
    with~$\delta=0.05$. Left:
    median and 10\%/90\% quantiles of $p_{m_n} = \P(f(X) \le m_n)$, where
    $m_n$ is the best observed value so far. Right: fraction of
    successful runs reaching a prescribed target
    level.}
    \label{fig:add_2_ucb}
\end{figure}

\end{document}